\newtheorem{Def}{Definition}
\newtheorem{prop}{Proposition}
\newtheorem{thm}{Theorem}
\newtheorem{lemma}{Lemma}
\newtheorem{assump}{Assumption}
\newtheorem{remark}{Remark}
\numberwithin{equation}{section}
\newcommand{\R}{\mathbb{R}}
\newcommand{\N}{\mathbb{N}}
\newcommand{\F}{\mathcal{F}}
\newcommand{\E}{\mathcal{E}}
\newcommand{\X}{\mathcal{X}}
\newcommand{\Y}{\mathcal{Y}}
\newcommand{\Z}{\mathcal{Z}}
\newcommand{\D}{\mathcal{D}}
\newcommand{\Q}{\mathcal{Q}}
\newcommand{\W}{\mathcal{W}}
\newcommand{\calG}{\mathcal{G}}
\newcommand{\calH}{\mathcal{H}}
\newcommand{\M}{\mathcal{M}}
\newcommand{\black}{\textcolor{black}}
\newcommand{\avg}{\frac{1}{n} \sum_{i=1}^{n}}
\newcommand{\AVG}{\dfrac{1}{n} \sum_{i=1}^{n}}
\newcommand{\biavg}{\frac{1}{n(n-1)} \sum_{i\neq j}^n}
\newcommand{\ie}{i.e., }
\newcommand{\bE}{\mathbb{E}}
\newcommand{\cN}{\mathcal{N}}
\newcommand{\bx}{{\boldsymbol{x}}}
\begin{document}

\title{Fine-grained Analysis of Non-parametric Estimation for Pairwise Learning
\thanks{This version corresponds to the accepted version in IEEE Transactions on Neural Networks and Learning Systems.}}

\author{Junyu Zhou$^1$\quad Shuo Huang$^2$ \quad Han Feng$^3$ \quad Puyu Wang$^{4\star}$\quad Ding-Xuan Zhou$^5$ \\ 
\smallskip \\
$^{1}$ Catholic University of Eichstätt-Ingolstadt, Ingolstadt, Germany\\
$^2$ Istituto Italiano di Tecnologia, Genova, Italy\\
$^3$ City University of Hong Kong, Hong Kong, China\\
$^{4}$ RPTU Kaiserslautern-Landau, Kaiserslautern, Germany\\
$^{5}$ The University of Sydney, Sydney,  Australia.
}

 \date{}

\maketitle

\begin{abstract}
 In this paper, we are concerned with the generalization performance of non-parametric estimation for pairwise learning. Most of the existing work requires the hypothesis space to be convex or a VC-class, and the loss to be convex. However, these restrictive assumptions limit the applicability of the results in studying many popular methods, especially kernel methods and neural networks.          
We significantly relax these restrictive assumptions and establish a sharp oracle inequality of the empirical minimizer with a general hypothesis space for the Lipschitz continuous pairwise losses. 
As an example, we apply our general results to study pairwise least squares regression and derive an excess population risk bound that matches the minimax lower bound for the pointwise least squares regression.  
The key novelty lies in constructing a structured deep ReLU neural network to approximate the true predictor, and in designing a targeted hypothesis space composed of networks with this structure and controllable complexity.  
Experiments validate the effectiveness of the proposed method.
This example demonstrates that the obtained general results indeed help us to explore the generalization performance on a variety of problems that cannot be handled by existing approaches. 
\end{abstract}

\parindent=0cm

\section{Introduction}\label{sec:Intro}
Pairwise learning is an important learning task that has attracted much attention in the modern machine learning society.
Many real-world tasks, such as ranking in search engines, recommendation systems, and metric learning, are inherently pairwise.
Unlike pointwise learning which studies a target model based on a pointwise loss function, 
pairwise learning refers to learning tasks in which the loss function takes a pair of examples as the input. 
Specifically, the goal of the pairwise learning is to learn a predictor $f:\X\times\X\to\R$ based on a pair of example $((x,y),(x',y'))$, whose performance is measured by a pairwise loss $\ell:\R\times\Y\times\Y\to\R^+$. Here, $\X\subset\R^d$ is the bounded closed input space and $\Y \subset \R$ is the output space.  Figure \ref{fig:tasks} illustrates the difference between pointwise learning and pairwise learning tasks.
Notable pairwise learning tasks include metric and similarity learning \cite{cao2016generalization,weinberger2009distance,ying2012distance}, AUC maximization \cite{kar2013generalization,ying2016stochastic,lei2021stochastic}, ranking \cite{agarwal,chapelle2010efficient,ranking}, and a minimum error entropy principle \cite{hu2012learning}. 
For instance, metric and similarity learning aims to study a metric $d(\cdot,\cdot)$ that estimates the distance or the similarity between a pair of observers.
The performance of $d$ on a pair example $((x,y),(x',y'))$ is measured by the loss $\ell(\tau(y, y')d(x, x'))$ with  $\tau(y, y') = 1$ if $y = y'$ and $ \tau(y, y')= -1$ else.
AUC maximization aims to rank positive instances above negative ones which involves a loss $\ell(w; (x,y), (x',y')) = ( 1-w^\top (x-x') )_+\,\mathbf{I}_{[y=1\wedge y'=-1]}$ with $x, x'\in  \mathbb{R}^d$ and $y,y'\in   \{\pm 1\}.$
It has been shown that pairwise learning has great advantages over traditional pointwise learning in modeling the relative relationships between sample pairs in many problems \cite{huai2019deep,kar2013generalization,wang2021differentially,zhao2011online}. 

\begin{figure}
    \centering
    \includegraphics[width=0.5\linewidth]{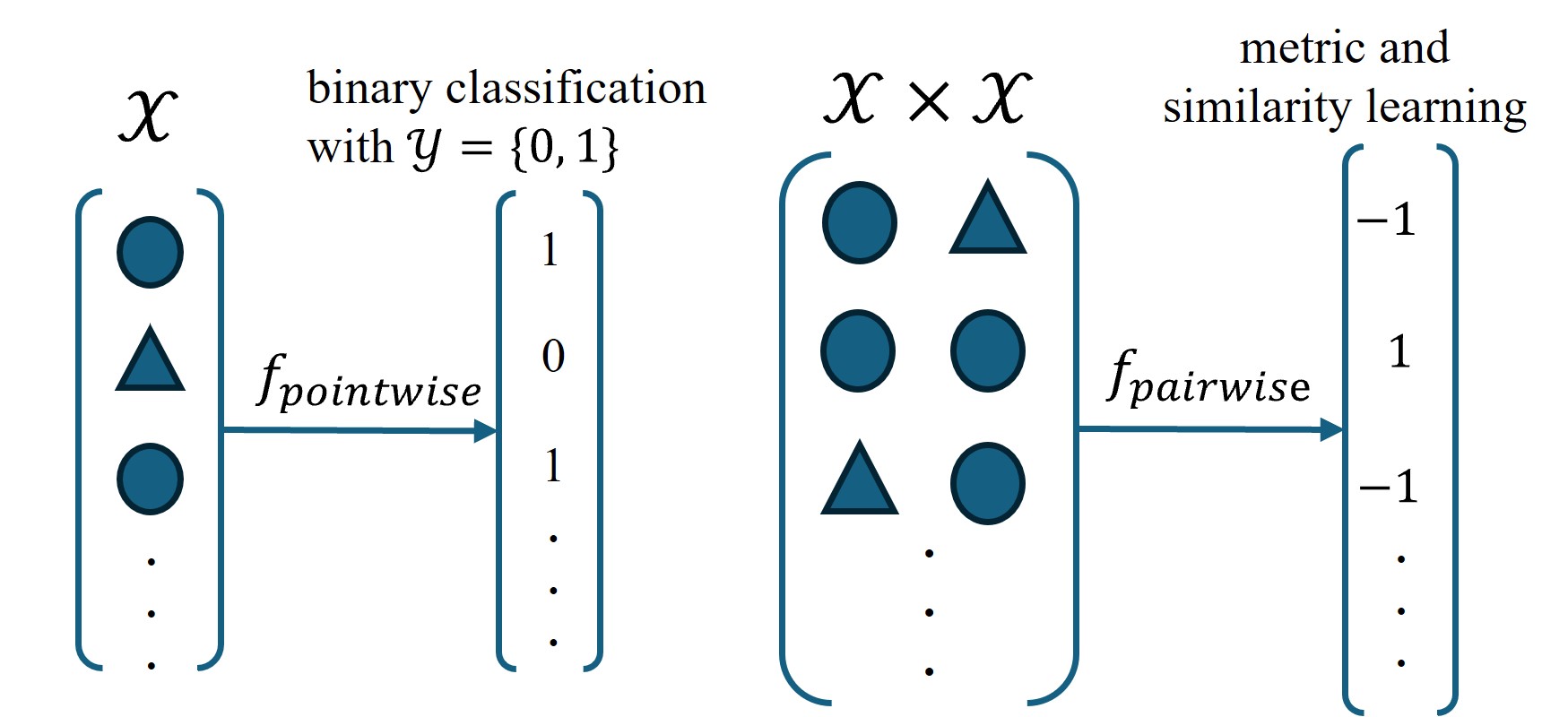}
    \caption{Pointwise learning vs. Pairwise learning}
    \label{fig:tasks}
\end{figure}

 {Parametric models usually refer to models whose architecture is fixed so that the number of parameters
does not grow with the sample size~\(n\). 
In contrast, non-parametric models, especially those based on structured deep networks \cite{schmidt2020nonparametric, yang2024nonparametric, bauer2019deep}, allows the architecture and the number of parameters grow
with~\(n\) or with the desired accuracy~\(\epsilon\).}
This is particularly valuable in pairwise settings, where the relation between input pairs and output preferences or scores can be highly complex and nonlinear. Besides, non-parametric learning allows us to explore minimax-optimal learning rates with mild assumptions on the true data distribution or model structure. Despite the widespread use of pairwise learning, theoretical foundations, particularly the study of non-parametric models, remain underdeveloped. In this paper, we focus on the generalization performance of non-parametric estimation for pairwise learning.

There has been a significant amount of work on the generalization analysis of nonparametric estimation for pairwise learning \cite{ranking, rejchel, zhou2024generalization}. However, most of these works rely on restrictive assumptions regarding both the loss function and the hypothesis space, and typically focus on specific learning problems. 
For instance, \cite{ranking} studied ranking problems and provided an oracle inequality for the empirical risk minimizer under the $0$–$1$ loss, achieving a convergence rate of $O\big((\frac{V\log(n)}{n})^{\frac{1}{2 - \beta}}\big)$, where $n$ is the training sample size, $\beta$ is the parameter of the variance condition and $V$ is the VC-dimension of the class of the ranking rules.
Here, the VC-dimension is the largest number of data points that can be labeled in all possible ways such that there exists a function in the class that correctly classifies each labeling, which measures the capacity or complexity of a class of functions.
The requirement that the class of ranking rules has a finite VC-dimension is quite restrictive, as many hypothesis spaces—especially reproducing kernel Hilbert spaces (RKHSs)—do not satisfy this condition. Consequently, \cite{ranking} are not applicable to widely used kernel-based methods.
Furthermore, the $0$–$1$ loss is rarely used in practice due to the associated minimization problem is known to be NP-hard even for a relatively simple class of functions such as linear function \cite{feldman2012agnostic}.
\cite{rejchel} derived upper bounds for the estimation error in ranking problems under convex, nonnegative loss functions, assuming that the hypothesis space is convex. However, this convexity assumption limits the applicability of their results to neural networks, whose hypothesis spaces are typically non-convex.
\cite{cao2016generalization,jin2009} established estimation error bounds of order $O(n^{-\frac{1}{2}})$ for metric and similarity learning problems. More recently, \cite{zhou2024generalization} obtained excess risk bounds for metric and similarity learning with the hinge loss, exploiting the specific structure of the underlying true metric.
Despite these advances, to the best of our knowledge, no existing work provides generalization analysis for pairwise learning with general loss functions under relaxed assumptions that accommodate modern hypothesis spaces, such as those used in neural networks or kernel methods.

On the other hand, numerous studies have applied neural networks to pairwise learning and ranking tasks \cite{koppel2025pairwise,he2022gnnrank,liu2025Trajectory,Duan2025toward}.
Nevertheless, a fundamental limitation of most existing models lies in their inability to capture non-transitive pairwise interactions, a phenomenon  observed in real-world scenarios but difficult to represent within the conventional score-difference framework. How to effectively model and learn from such non-transitive relationships remains an open and practically important question.

In this paper, we investigate comprehensive generalization analysis of pairwise learning under general settings by significantly removing the restrictive assumptions on both the hypothesis space and the loss functions.  
Specifically, we establish an oracle inequality of the empirical minimizer for a \textit{general hypothesis space} when the loss is Lipschitz continuous. Our results extend the existing literature in the following two aspects.  
First, the existing works either require the hypothesis space to be convex or a VC-class. As mentioned before, these assumptions are quit  restrictive and cannot applied to many popular methods including kernel methods and neural networks. 
Second, we consider the general losses that are Lipschitz continuous and possible nonconvex.
Various common-used surrogate losses including the hinge loss, least squares loss and logistic loss satisfy these assumptions, which makes our results applicable to a wide range of pairwise learning problems including ranking, pairwise regression and metric and similarity learning.  Furthermore, our experiments demonstrate that the proposed model is particularly well suited for handling non-transitive pairwise relationships, providing a positive answer to the above mentioned question.

As an example, we apply our results to the study of pairwise least squares regression using deep ReLU networks. By leveraging our main theorem, we establish minimax-optimal excess population risk bounds of order $O\big(n^{-\frac{2r}{2r + d}}\big)$ for pairwise least squares regression. Here, $d$ is the dimension of the input space and $r\in\N$ is the smoothness index of the target function of the pointwise least squares regression. 
Our key idea is to construct a novel structured deep ReLU network that approximates the true predictor $f_\rho$ by exploiting the specific anti-symmetric structure of $f_\rho$, and design the targeted hypothesis space that consisting of the networks with this structure. 
From a practical perspective, we provide a way to build such a structured deep ReLU network, which achieves superior generalization performance compared to standard benchmarks for the pairwise regression problem. Experimental results validate the effectiveness and advantages of our approach.
This application shows that the general oracle inequality greatly helps us explore the generalization performance on a variety of problems that cannot be handled by existing approaches.

\begin{figure*}[t]
\centering
\begin{tikzpicture}[
    scale=0.8, transform shape,
    node distance=0.8cm and 1.5cm,
    every node/.style={font=\sffamily},
    lemma/.style={draw, rectangle, rounded corners, fill=white!95!blue!10, inner sep=4pt},
    ->, >=Stealth
  ]
  \node[lemma] (A1) {Assumption~\ref{assump:uniformly_bounded}};
  \node[lemma, right=of A1] (A2) {Assumption~\ref{assump:Lipschitz}};
  \node[lemma, right=of A2] (A3) {Assumption~\ref{assump:cover1}};
  \node[lemma, right=of A3] (A4) {Assumption~\ref{assump:cover2}};
  \node[lemma, below=of A2] (T1) {Theorem~\ref{thm:oracle} (\textbf{Main result})};
  \node[lemma, right=of T1] (T2) {Theorem~\ref{thm:oracle_structured_NNs}};
  \node[lemma, right=of T2] (T4) {Theorem~\ref{thm:excess}};
  \node[lemma, below=of T4] (T3) {Theorem~\ref{thm:approx_error}};
  \node[lemma, below=of T1] (A5) {Assumption~\ref{assump:anti-sym}};
  \node[lemma, below=of T2] (A6) {Assumption~\ref{assump:sobolev}};

  \draw (A1) -- (T1);
  \draw (A2) -- (T1);
  \draw (A3) -- (T1);
  \draw (A4) -- (T1);
  \draw (A5) -- (T2);
  \draw (T1) -- (T2);
  \draw (A6) -- (T3);
  \draw (T2) -- (T4);
  \draw (T3) -- (T4);
\end{tikzpicture}
\caption{Diagram of the relationships among theorems and assumptions}
\label{fig:proof-structure}
\end{figure*}

\medskip

\noindent\textbf{Organization of the paper.} The rest of the paper is organized as follows. Section~\ref{sec:pre} introduces some basic concepts and definitions. 
Section~\ref{sec:main} presents the main  result: an oracle inequality for the empirical minimizer over a general hypothesis space, under the assumption that the pairwise loss is Lipschitz continuous and considering various capacity conditions of the hypothesis space.
In Section~\ref{sec:relu}, we apply our main results to analyze the generalization performance of pairwise learning by constructing a structured deep ReLU network model.
In particular, we derive minimax-optimal excess risk rates by leveraging the main theorem.
In Section~\ref{sec:exper}, we present experimental results to demonstrate the effectiveness of our structured deep ReLU model proposed in Section~\ref{sec:relu}. 
We conclude the paper in Section~\ref{sec:conclusion}. 
To provide a clearer understanding of the paper’s theoretical structure, Figure~\ref{fig:proof-structure} illustrates the relationships among our theorems and assumptions.  
We give Table~\ref{tab:summary} which summarizes the main notations. 
\begin{table}[htbp]
\small
\setlength{\abovecaptionskip}{4pt plus 2pt minus 2pt}
 \setlength{\belowcaptionskip}{4pt plus 2pt minus 2pt}
\setlength{\tabcolsep}{4pt}
 \centering\def\arraystretch{1.3}
\centering
  \begin{tabular}{|c|c|}\hline
  Symbol & Meaning \\ 
  \hline
  $\X/\Y$ & input/output space\\
  $\calH$& hypothesis space\\
   $d$ & dimension of the input space \\
   $n$ & training sample size \\
   $r$ & smoothness index of the true predictor\\
  $\E(f)/\E_z(f)$ & population/empirical risk\\
  $ \hat{f}_z$ & $\arg\min_{f\in\calH} \E_z(f)$ \\
  $f_\calH$ &  $\arg\min_{f\in\calH} \E(f)$ \\
  $f_\rho$&  true predictor \\
  $\E(\hat{f}_z) - \E(f_\rho)$  &  excess population risk \\
   $\|f \|_{L^\infty(\X\times\X)}$ & $  \sup_{x,x'\in\X} |f (x,x')| $ \\
   $\|f\|_{L^2_\rho} $& $ (\int |f|^2 d\rho_{\bx})^{1/2}$\\
  \hline
  \end{tabular}%
  \normalsize
  \caption{Summary of Main Notations.
		\label{tab:summary}} 		
\end{table}

\section{Learning Setting and Preliminaries}\label{sec:pre}
Let $\rho$ be a population distribution defined on the sample space  $\Z=\X\times\Y$, where $\X\subset\R^d$ is the bounded closed input space and $\Y \subset \R$ is the output space. 
Let $\ell:\R\times\Y\times\Y\to\R^+$ be a pairwise loss function. Given a predictor $f:\X\times\X\to\R$, the population risk $\E(f)$ with the loss $\ell$ is defined as
 \begin{align*}
     \E(f)  :&= \int_{\Z\times\Z} \ell(f(x,x'),y,y') d\rho(z)d\rho(z')\\
        &= \bE[\ell(f(X,X'),Y,Y')].
 \end{align*}
Given a sample $S=\{Z_i=(X_i,Y_i)\}_{i=1}^n$ independently drawn from $\rho$, define the empirical  risk based on  $S$ by     $\E_z(f) := \biavg \ell(f(X_i,X_j),Y_i,Y_j).$ 
Let $f_\rho = \arg\min_{f\in\F} \E(f)$ be the true predictor (the target function) that minimizes the population risk over the space $\F$ consisting of all measurable functions from $\X\times\X$ to $\R$, and $\hat{f}_z = \arg\min_{f\in\calH} \E_z(f)$ be the empirical minimizer with the hypothesis space $\calH$.
In this paper, we are interested in studying the statistical generalization performance of $\hat{f}_z$, which is measured by the \textit{excess population risk} $\E(\hat{f}_z) - \E(f_\rho)$, \ie the distance between the expected error of $\hat{f}_z$ and the least possible error $\E(f_\rho)$. We consider using the following error decomposition to study the excess population risk
\begin{align}\label{eq:error_decomp}
        \E(\hat{f}_z) \!-\! \E(f_\rho) &= \{\E(\hat{f}_z) \!-\! \E(f_\calH)\} \!+\! \{\E(f_\calH) \!-\! \E(f_\rho)\},\nonumber\\
        &=: S(\calH) + D(\calH),
    \end{align}
    where $f_\calH = \arg\min_{f\in\calH} \E(f)$. The terms $S(\calH)$ and $D(\calH)$ are called the estimation error and the approximation error, respectively, and will be estimated part by part. Figure \ref{fig:error} illustrates this decomposition of the error. 
 Throughout this paper, we denote by $C_{\alpha,\beta,\gamma}$ the constant depends on parameters $\alpha,\beta$ and $\gamma$, and denote $C$ an absolute constant. These constants may differ from line to line and are always assumed to be greater than or equal to $1$.  We denote $\mathbf{1}_{[A]}$ the indicator function that takes value $1$ if the event $A$ happens and $0$ otherwise. Let $\lceil C \rceil$ denote the least integer number greater than or equal to $C$. For a measurable set $A\subset\X$, let $\rho_\bx(A):= Prob\{A\} = \bE_X[\mathbf{1}_{[A]}]$ and $\rho(A|x):= Prob\{A|X=x\} = \bE[\mathbf{1}_{[A]}|X=x]$.

\begin{figure}
    \centering
\includegraphics[width=0.5\linewidth]{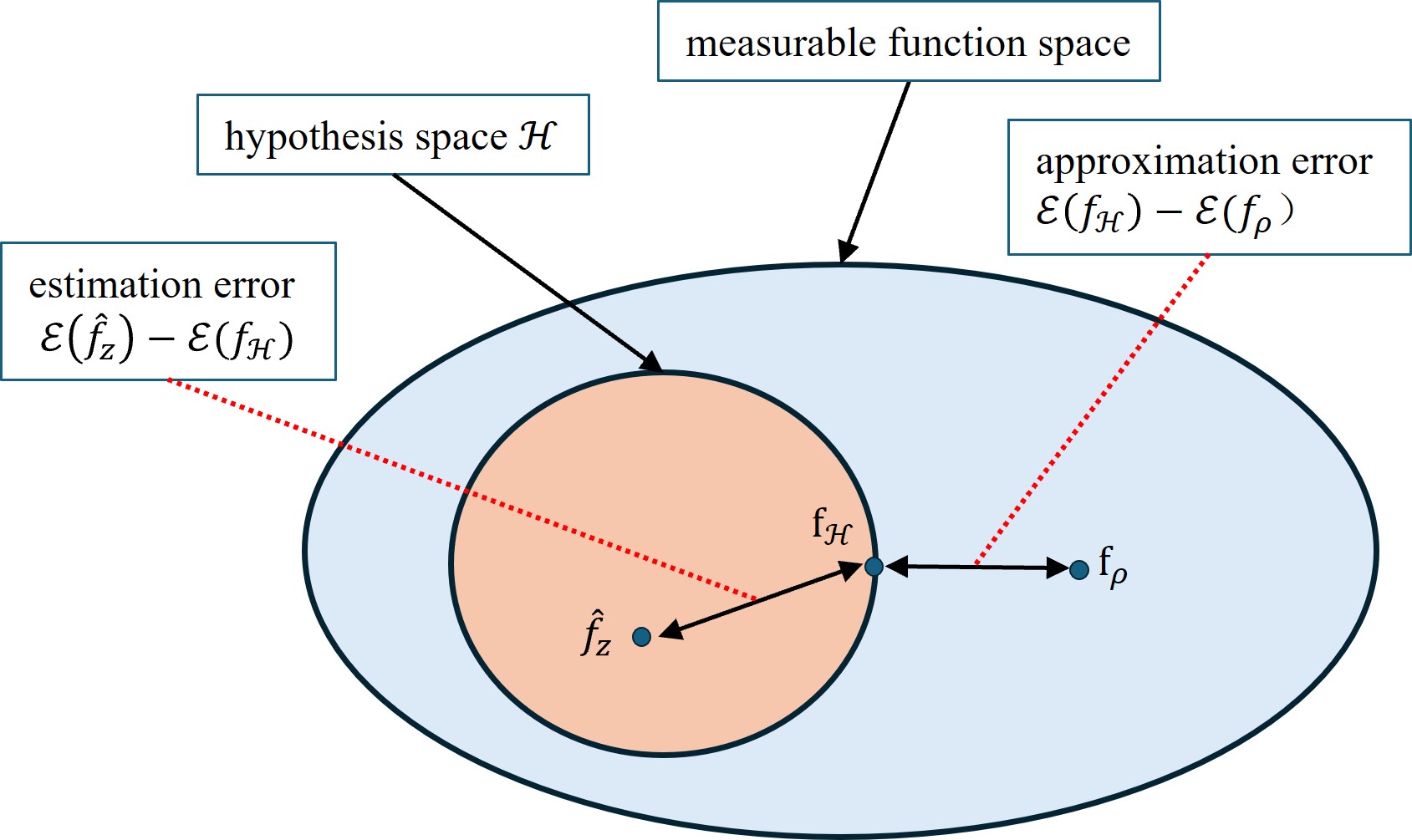}
    \caption{Error decomposition}
    \label{fig:error}
\end{figure}
        
Our analysis requires some standard assumptions on the true predictor $f_\rho$, the loss function $\ell$ and the capacity of the hypothesis space.     \begin{assump}\label{assump:uniformly_bounded}
		There exists a constant $\eta > 0$ such that	$\|f_\rho\|_{L^\infty(\X\times\X)}:= \sup_{x,x'\in\X} |f_\rho(x,x')| \le \eta.$ 
	\end{assump}
Most important problems satisfy this assumption. We use pairwise least squares regression and ranking problems as two examples here.   

\noindent\textit{Example 1. Pairwise least squares regression.}
    For the pairwise least squares regression with loss $\ell(f(x,x'),y,y') = (f(x,x') -y + y')^2$, it is known that the true predictor has the form $f_\rho(x, x')= \tilde{f}_\rho(x) - \tilde{f}_\rho(x')$, where $\tilde{f}_\rho(x):= \bE[Y|X = x]$ is the target function (Bayes rule) for the pointwise least squares problem \cite{online_pairwise}. One often assumes that the distribution of $Y$ is bounded by a constant $B>0$, \ie $Prob\{|Y| \le B\} = 1$, then we can show that Assumption~\ref{assump:uniformly_bounded} holds with $\eta = 2B$ by noting $\|f_\rho\|_{L^\infty(\X\times\X)} \le 2\|\tilde{f}_\rho\|_{L^\infty(\X)} \le 2B$. 
    
\noindent\textit{Example 2. Ranking problems.}
For ranking problems with the hinge loss $\ell(f(x,x'),y,y') = (1 - sgn(y - y')f(x,x'))_+$, the work \cite{HZFZ} proved that the true predictor has the form $f_\rho(x,x') = sgn(T(x,x') - T(x',x))$, where $T(x,x') = Prob\{Y > Y'|X = x, X' = x'\}$. Then it is easy to show that $\eta =  1$ in this case since $|sgn(t)| \le 1$ for any $t\in\R$.

We also need to make the following assumptions about the Lipschitz continuity of the loss function. Here, we only consider the Lipschitz property of $\ell$ over $\big[-\|f_\rho\|_{L^\infty(\X\times\X)}, \|f_\rho\|_{L^\infty(\X\times\X)}\big]$ since the values predicted by $f_\rho$ always lie on this interval.
\begin{assump}\label{assump:Lipschitz}
		There exists a constant $K>0$, such that for any    $t_1, t_2 \in [-\|f_\rho\|_{L^\infty(\X\times\X)}, \|f_\rho\|_{L^\infty(\X\times\X)}]$   and  $y, y' \in \Y$, there holds
      \[\left|\ell(t_1, y, y') - \ell(t_2, y, y')\right| \le K\left|t_1 - t _2\right|.\]
	\end{assump}
Assumption~\ref{assump:Lipschitz} holds for the hinge loss with $K = 1$ and the pairwise least squares loss with $K = 8B$ if the distribution of $Y$ is bounded by $B>0$.

With a little abuse of notation, for any probability measures $\rho$, we denote $L^2_\rho$ as the metric induced by the norm $\|\cdot\|_{L^2_\rho}$. Here, $\|\cdot\|_{L^2_\rho}$ is $L^2$ norm defined by $\|f\|_{L^2_\rho} = (\int |f|^2 d\rho_{\bx})^{\frac{1}{2}}$. 
Recall that $\rho_{\bx}$ is the marginal distribution of $\rho$ on $\X$, we define two empirical probability measures based on the observed sample $S$ as $\mu_n := \avg \delta_{X_i}$ and $\nu_n:= \biavg \delta_{(X_i,X_j)}$, where $\delta_{(\cdot)}$ is the counting measure. For any $f\in\calH$, we define the norms
    \begin{align} \big\|f\big\|_{L^2_{\rho_{\bx}\times\mu_n}} &:= \Big(\avg\bE\big[f(X,X_i)\big|X_i\big]^2\Big)^{\frac{1}{2}}, \label{def:mu_n}\\  \big\|f\big\|_{L^2_{\nu_n}} &:= \Big(\biavg f(X_i,X_j)^2\Big)^{\frac{1}{2}}. \label{def:nu_n}
    \end{align}
Our analysis needs the assumptions on the capacity of the hypothesis space $\calH$, which are measured by different conditions of the associated covering numbers.
The definition of covering number is given as follows. 
	\begin{Def}[Covering number \cite{HDP}]\label{def:covering}
		Let $(\mathcal{T},d)$ be a metric space. Consider a subset $\mathcal{K} \subset \mathcal{T}$ and let $\epsilon > 0$. A subset $\mathcal{N} \subset \mathcal{K}$ is called an $\epsilon$-net of $\mathcal{K}$ if every point in $\mathcal{K}$ is within a distance $\epsilon$ of some point of $\mathcal{N}$, i.e.
		$\forall\ x \in \mathcal{K}, \  \exists\ x_0 \in \mathcal{N} : d(x, x_0) \le \epsilon.$
		The smallest possible cardinality of an $\epsilon$-net of $K$ is called the covering number of $\mathcal{K}$ and is denoted by $\mathcal{N}(\mathcal{K},d,\epsilon)$.
	\end{Def}
\begin{assump}\label{assump:cover1}
        There exist constants $s_1,s_2\ge e$ and $V_1,V_2 > 0$ such that for any $\epsilon\in(0,1]$, there holds
        \begin{align*}    \cN\!\left(\calH,L^2_{\!\rho_{\bx}\times\mu_n}\!\!, \epsilon\right) \!\le\! s_1\Big(\frac{1}{\epsilon}\Big)^{\!V_1}\!\!, \, \cN\!\left(\calH,L^2_{\!\nu_n}\!,\epsilon\right) \!\le\! s_2\Big(\frac{1}{\epsilon}\Big)^{\!V_2}\!\!.
        \end{align*}
    \end{assump}
   Any bounded subset of a finite-dimensional space \cite{HDS} and the space consisting of bounded deep neural networks \cite{schmidt2020nonparametric,VC} satisfy Assumption~\ref{assump:cover1}.  However, many important spaces do not satisfy this assumption, e.g., RKHSs. We introduce the following assumption as an alternative to Assumption~\ref{assump:cover1} to handle this setting. 
    \begin{assump}\label{assump:cover2}
        There exist constants $s_1',s_2' \ge 1$ and $V_1',V_2' \in (0,1)$ such that for any $\epsilon\in(0,1)$,
        \begin{align*}       \log\left(\cN\left(\calH,L^2_{\rho_{\bx}\times\mu_n},\epsilon\right)\right) \le s_1'\Big(\frac{1}{\epsilon}\Big)^{V_1'},\\          \log\left(\cN\left(\calH, L^2_{\nu_n},\epsilon\right)\right) \le s_2'\Big(\frac{1}{\epsilon}\Big)^{V_2'}.
        \end{align*}
    \end{assump}
    We further introduce the Pseudo-dimension \cite{neuralnetworklearning, VC} of the capacity of the hypothesis space, which is defined by VC-dimension of a subgraph set.
    For a formal definition of VC-dimension, one can see Definition 9.6 in \cite{gyorfi}.
   
        \begin{Def}[Pseudo-dimension \cite{neuralnetworklearning, VC}]\label{def:pseudo}
		Let $\F$ be a class of functions $f : \X\times\X \to \R$ and $\F^+:= \{(x,x',t): f(x,x') > t, f \in \F\}$ be its subgraph set, then the pseudo-dimension $Pdim(\F)$ of $\F$ is defined as
		 $Pdim(\F):= VC(\F^+)$,
		where $VC(\F^+)$ is the VC-dimension of $\F^+$. Further, if $Pdim(\F) < \infty$, then we call $\F$ a VC-class.
	\end{Def}

	\black{The following lemma reveals a relation between the covering number and pseudo-dimension \cite[Theorem 2.6.7]{empirical} and will be used frequently later. 
	\begin{lemma}[\!\!{\cite[Theorem 2.6.7]{empirical}}]\label{lemma:VC_class}
		For a VC-class $\F$ of functions with uniform bound $F$, one has for any probability measure $\rho$, 
		$$\mathcal{N}(\!\F\!, \!L^2_\rho,\!\epsilon F)\! \le\! C\, Pdim(\!\F\!)(16e)^{\!Pdim(\F)} \!\big(\!\frac{1}{\epsilon}\!\big)^{\!2(Pdim(\!\F\!) \!-\! 1)}$$
		for an absolute constant $C\!>\!0$ and $0 \!<\! \epsilon \!<\!1$, where $L^2_\rho$ denotes the $L^{2}$ norm with respect to $\rho_{\bx}$.
	\end{lemma}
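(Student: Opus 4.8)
The lemma is the classical Dudley--Haussler covering bound for a VC-subgraph class, and my plan is to reproduce the standard two-step reduction: first pass from the function class $\F$, measured in $L^2_\rho$, to its subgraph class $\F^+$, measured in $L^1$ of a suitable auxiliary probability measure, and then invoke the known polynomial covering bound for a VC-class of \emph{sets}. Write $V:=Pdim(\F)=VC(\F^+)$, and recall that every $f\in\F$ obeys $|f|\le F$ pointwise.

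For the reduction I would associate to each $f\in\F$ its subgraph $C_f:=\{(x,x',t): t<f(x,x')\}\in\F^+$ and introduce the auxiliary probability measure $\rho\times U$ on $(\X\times\X)\times[-F,F]$, where $U$ is the uniform law on $[-F,F]$ with density $1/(2F)$; the uniform bound $F$ is exactly what makes this a bona fide probability measure. For each fixed $(x,x')$ the $t$-section of the symmetric difference $C_f\triangle C_g$ is the interval with endpoints $f(x,x')$ and $g(x,x')$, both of which lie in $[-F,F]$, so integrating the section lengths gives the exact identity
\[
(\rho\times U)(C_f\triangle C_g)=\frac{1}{2F}\,\|f-g\|_{L^1_\rho}.
\]
Since $|f-g|\le 2F$ pointwise, we have $|f-g|^2\le 2F\,|f-g|$ and hence $\|f-g\|_{L^2_\rho}^2\le 2F\,\|f-g\|_{L^1_\rho}$; combining this with the identity above yields the key inequality $\|f-g\|_{L^2_\rho}^2\le(2F)^2\,(\rho\times U)(C_f\triangle C_g)$.

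Next I would transfer a net directly at the level of covering numbers, using that in Definition~\ref{def:covering} the net is required to sit inside the set being covered. Let $\{C_{g_k}\}$ be a minimal $(\epsilon^2/4)$-net of $\F^+$ in $L^1_{\rho\times U}$, so each $g_k\in\F$. For arbitrary $f\in\F$ pick $g_k$ with $(\rho\times U)(C_f\triangle C_{g_k})\le\epsilon^2/4$; the key inequality then gives $\|f-g_k\|_{L^2_\rho}^2\le(2F)^2(\epsilon^2/4)=F^2\epsilon^2$, so $\{g_k\}$ is an $\epsilon F$-net of $\F$ in $L^2_\rho$. This proves $\mathcal{N}(\F,L^2_\rho,\epsilon F)\le\mathcal{N}\big(\F^+,L^1_{\rho\times U},\epsilon^2/4\big)$. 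Applying the standard covering bound for a VC-class of sets of dimension $V$, namely $\mathcal{N}(\F^+,L^1_R,\delta)\le K\,V\,(4e)^V(1/\delta)^{V-1}$ for every probability measure $R$, at scale $\delta=\epsilon^2/4$ and using $(4e)^V4^{V-1}=(16e)^V/4$, bounds the right-hand side by $\tfrac{K}{4}\,V\,(16e)^V(1/\epsilon)^{2(V-1)}$, which is the asserted estimate with $C=K/4$ and $V=Pdim(\F)$.

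The genuinely nontrivial input is the set-covering bound used in the last step, which is where the VC hypothesis is consumed, so I would quote it as the known result \cite[Theorem 2.6.4 / Theorem 2.6.7]{empirical} rather than reprove it. Its proof combines the Sauer--Shelah lemma --- a class of VC-dimension $V$ cuts out only $O((en/V)^V)$ distinct subsets of any $n$ points --- with a random-sampling argument showing that a $\delta$-separated family of sets must leave distinct traces on a sample of size $O(V/\delta)$, which forces the packing number to grow only polynomially in $1/\delta$. The only care needed elsewhere is that differing conventions for the VC index versus VC dimension shift the exponent and base by the usual $\pm1$, an effect I would absorb into the universal constant $C$; the remainder is the bookkeeping of the subgraph reduction and the consolidation of constants for the exponent $2$, which is routine.
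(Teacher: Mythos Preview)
Your proposal is correct and reproduces the standard Dudley--Haussler subgraph reduction followed by the VC set-covering bound. The paper itself does not prove this lemma at all: it simply quotes it as \cite[Theorem 2.6.7]{empirical}, and your argument is precisely the proof given there (up to the usual bookkeeping of constants and the $\pm1$ shift in VC conventions you already noted).
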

    Lemma \ref{lemma:VC_class} implies that any bounded VC-class satisfies Assumption \ref{assump:cover1}. Specifically, for any hypothesis space $\calH$ which is a VC-class and uniformly bounded by $\eta>0$,  $\calH$ satisfies Assumption \ref{assump:cover1} with $s_1=s_2 = C_{\eta,Pdim(\calH)}$ and $V_1=V_2=2(Pdim(\calH)-1)$, respectively. Conversely, if we further assume that the inequalities therein hold with any probability measures, then Assumption \ref{assump:cover1} can be employed as an alternative definition of a VC-class.}

\section{Main Results}\label{sec:main}

In this section, we present our main result on an oracle inequality of $\hat{f}_z$.
 {To obtain a fast convergence rate, we require the following variance condition (the second order property) of a function class \cite{EM_Bartlett}.
	\begin{Def}[Variance-expectation bound \cite{EM_Bartlett}]\label{def:variance-expectation}
		Let $M>0$ and $\beta \in[0,1]$. Let $\F$ be a function class consisting of functions $f:\Z\times\Z\to\R$ with nonnegative first moment, \ie $\bE[f] \ge 0$ for any $f\in\F$. We say that $\F$ has a variance-expectation bound with parameter pair $(\beta, M)$, if for any $f \in \F$,
        \begin{align}\label{eq:variance}
            \bE[f^2] \le M(\bE[f])^\beta.
        \end{align}
	\end{Def}
        In statistical learning theory, we often set $\F$ to be the shifted hypothesis space $\{\ell(f) - \ell(f^*): f \in \calH\}$, where $f^*$ is either the true predictor $f_\rho$ or the oracle $f_\calH$ (best predictor in $\calH$).
        The inequality \eqref{eq:variance} holds with $\beta = 0$ for any uniformly bounded function class.
        There are numerous cases in which \eqref{eq:variance} also holds with $\beta\in(0,1]$.
        For instance, \eqref{eq:variance} is satisfied for ranking \cite{ranking} and for binary classification \cite{boucheron2005theory} if the distribution $\rho$ over $\Z$ satisfies a low-noise condition.
        \cite{convexity} showed that \eqref{eq:variance} holds with $\beta = \min\{1,2/r\}$ if the hypothesis space $\calH$ is convex and the modulus of convexity $\delta$ of the loss satisfies $\delta(\epsilon) \ge c\epsilon^r$.
        Further, we will prove later \eqref{eq:variance} is satisfied with $\beta = 1$ for pairwise least squares regression when the distribution $\rho$ is bounded (see Lemma 10 in Appendix B for more details).}

Note in the following theorem, we remove several restrictive assumptions commonly used in existing literature, such as the requirement that the hypothesis space be convex \cite{convexity,rejchel} or form a VC class \cite{ranking}.
Specifically, part (b) of the theorem is established for non VC-class, and the whole theorem holds for arbitrary nonconvex hypothesis spaces satisfying mild capacity conditions.
The proof is given in Appendix A.  We denote $A \bigvee B= \max\{A,B\}$. 
	\begin{thm}\label{thm:oracle}
		Suppose Assumptions $\ref{assump:uniformly_bounded}$ and $\ref{assump:Lipschitz}$ hold. Let $\calH$ be an arbitrary hypothesis space with uniform bound $\eta > 0$ such that for any $f\in\calH\cup\{f_\rho\}$ and almost $z,z'\in\Z$, there holds
        \begin{align}\label{eq:symmetry}
            \ell(f(x,x'),y,y') = \ell(f(x',x),y',y).
        \end{align}
        Let $M \!>\! 0$ and $\beta\!\in\![0,1]$, suppose that the shifted hypothesis space $\F\!:=\!\{\ell(f(x,x'), y, y') - \ell(f_\rho(x,x'), y, y'): f \in \calH\}$ has a variance-expectation bound \eqref{eq:variance} with parameter pair $(\beta, M)$. The following statements hold true.  
        \begin{enumerate}[label=(\alph*), leftmargin=*]        \setlength\itemsep{-2mm}  
            \item If the capacity of $\calH$ satisfies Assumption \ref{assump:cover1}, then for any $\delta \in (0, 1/2)$, with probability at least $1 - \delta$,  
            \begin{align*}
         \!\!\!\!\!\!\!\!\!  \E\!(\!\hat{f}_z \!) \!-\! \E\!(\!f_\rho\!)\!&\le  \!C_{\!\eta\!,K\!,M\!,\beta}  \!\Big(\!\frac{(V_1\!\!\bigvee\log (s_1\!)\!)\!\log (n)}{n}\!\Big)^{\!\frac{1}{2\!-\!\beta}}\!\!\log( \frac{4}{\delta} )\nonumber\\
         &\!\!\!\!\!+\!  \frac{\!C_{\eta,K}\! \log^2\! (\frac{\delta}{2}) \max\{V_1,\!V_2,\log(s_1),\!\log(s_2)\}}{n}\\
         &\!\!\!\!\!+  ( \beta  +  2 ) \big( \E(f_\calH ) \!-\! \E(f_\rho ) \big).
        \end{align*}
        \item If the capacity of $\calH$ satisfies Assumption \ref{assump:cover2}, then for any $\delta\in(0,1/2)$, with probability at least $1 -\delta$,  
        \begin{align*}
           \!\!\!\!\!\!  \E(&\hat{f}_z)  \! - \! \E(f_\rho) \!\le\!   \Big(C_{\eta\!,K\!,M\!,\beta\!} \max\Big\{\!\! \sqrt{\!s_1'}\big(\frac{1}{n}\big)^{\!\frac{2}{(2\!+\!V_1')(2\!- \beta)}},  \\ &\!\!\Big(\!\frac{\log (n) }{n}\!\Big)^{ \!\frac{1}{2-\beta}}\!\log(\frac{4}{\delta}) \!\Big\}\!+\!  \frac{\!C_{\eta,K}\max\{s_1',\!s_2'\}\log^2\!(\frac{\delta}{2})}{n\big(1 \!-\! \max\{V_1',V_2'\}\big)}\Big)\\
            & + \big(\beta + 2\big) \big(\E(f_\calH) - \E(f_\rho)\big).
        \end{align*}
    \end{enumerate}
    \end{thm}
 Theorem \ref{thm:oracle}  indicates that controlling the excess population risk reduces to analyzing the capacity or complexity of the hypothesis space and its approximation capability. 
Moreover, for a fixed hypothesis space, the first two terms vanish as the sample size $n\to \infty$. In this regime, the approximation error becomes the dominant factor determining the overall excess population risk. We note Theorem \ref{thm:oracle} establishes an oracle inequality for a broad class of loss functions, including the hinge loss, least squares loss, and logistic loss. This level of generality makes our results applicable to a wide range of pairwise learning problems (such as ranking, pairwise regression, and metric or similarity learning) across various settings, including kernel methods and neural networks.

    \begin{remark}
        \black{We remark that if the hypothesis space $\calH$ is a VC-class.
    From Lemma \ref{lemma:VC_class} we know $\calH$ satisfies Assumption \ref{assump:cover1}.
    Therefore, part $(a)$ of Theorem \ref{thm:oracle} still holds for any VC-class $\calH$ with $V_1,V_2$ replaced by $2(Pdim(\calH)-1)$, and $s_1,s_2$ be some constants, where $Pdim(\calH)$ is the pseudo-dimension of $\calH$.}
    \end{remark}

       Besides ranking and pairwise least squares regression, our results can also be applied to learning problems whose predictor is independent of the order of input sample pairs, e.g., the metric learning and the similarity learning problems. 
       Let us show how to apply Theorem~\ref{thm:oracle} to metric learning problems. In distance metric learning, we aim to learn a Mahalanobis distance $f(x,x') = (x-x')^\top M (x-x')$, where $M \in \mathbb{S}^{d}_+$ is a positive semi-definite matrix. Let $\tau(y,y')$ be a function of labels such that $\tau(y,y') = 1$ if $y = y'$ and $\tau(y,y')=-1$ else. The performance of $f$ on a sample pair $(z,z')$ is measured by a loss function $\ell\big(\tau(y,y)(f(x,x') - b)\big)$, where $b>0$ is a bias term and $\ell$ is a convex and Lipschitz loss. The hypothesis space here is typically set to be $\calH = \{(x-x')^\top M (x-x'): M \in\mathbb{S}^d_+, \|M\| \le \eta\}$, where $\|\cdot\|$ denotes a regularization norm. Since the Mahalanobis distance and the function $\tau$ are both symmetric, then \eqref{eq:symmetry} holds. In addition, it's easy to verify that Assumptions~$\ref{assump:uniformly_bounded}$ and $\ref{assump:Lipschitz}$ hold by noting that $y$ is assumed to be bounded, and $\beta = 0$ due to the uniform boundedness of the hypothesis space.
       Note $\calH$ is a bounded subset of a $p$-dimensional linear function space (with $p=d(d+1)/2$).
       Then, Assumption~\ref{assump:cover1} holds with $V_1=V_2=d(d+1)/2$.
       Consequently, Theorem~\ref{thm:oracle} yields an oracle inequality for metric learning in which the estimation error term scales as $O(1/\sqrt{n})$. 
\begin{table*}[t]
\centering
\small
\renewcommand{\arraystretch}{1.15}
\setlength{\tabcolsep}{8pt}
\begin{tabular}{l p{4.2cm} l l l}
\toprule
\textbf{Work} &
\textbf{Task} &
\textbf{Loss} &
\textbf{Capacity of $\mathcal{H}$} &
\textbf{Convexity of $\mathcal{H}$} \\
\midrule
\cite{cao2016generalization}
& Metric and similarity learning
& Hinge loss
& VC-class
& Convex \\

\cite{rejchel}
& Ranking
& Convex loss
& VC or non VC-class
& Convex \\

\cite{HZFZ}
& Ranking
& Hinge loss
& VC-class
& Nonconvex \\

Ours
& \makecell[l]{Metric and similarity learning,\\  pairwise least squares regression\\   ranking}
& \makecell[l]{General symmetric \\ loss}
& VC or non VC-class
& Convex or nonconvex \\
\bottomrule
\end{tabular}
\caption{Comparison of different theoretical settings in related works.}
\label{table:setting}
\end{table*}

 \noindent\textbf{Comparison with related works.}
 We give a comparison of our work with the existing results. 
 The most related works are \cite{ranking, rejchel}, which studied the generalization performance of ranking problems. \cite{ranking} derived an oracle inequality of order $O\big((\frac{V\log(n)}{n})^{\frac{1}{2 - \beta}}\big)$ when the $0$-$1$ loss is considered, where $V$ is the VC-dimension of the class of the ranking rules. However, the $0$-$1$ loss is not usually used in practice since the problem considered here is NP-hard and the empirical minimizer $\hat{f}_z$ is not easy to find through an efficient algorithm.
 Instead, we provide an oracle inequality with the same order (part (a) of Theorem~\ref{thm:oracle}) that holds for various common-used surrogate losses including the hinge loss, least squares loss and exponential loss. This makes our results more broadly applicable.
 In addition, \cite{ranking} required that the class of ranking rules is a VC-class, while many hypothesis spaces (e.g., RKHSs \cite{lei2023pairwise}) do not satisfy this assumption. We extend the desired result to a more general setting (part (b) of Theorem~\ref{thm:oracle}, where the hypothesis space is not assumed to be a VC-class), making our results tractable for a wider range of problems.
 \cite{rejchel} established upper bounds for the estimation error with $\beta = 1$ under the assumption that the hypothesis space is convex. We investigate the excess population risk with parameter $\beta\in[0,1]$.
 Moreover, the convexity assumption of the hypothesis space is not required here, which allows our results applicable to study the generalization performance of the neural networks where the hypothesis space is not convex in general. 
\cite{cao2016generalization,jin2009,lei2020sharper} provided the estimation error bounds of order $O(n^{-\frac{1}{2}})$ for the metric and similarity learning problems. As mentioned before, Theorem~\ref{thm:oracle} can also be applied to the metric and similarity learning. 
Very recently, several studies have further advanced the theoretical and algorithmic understanding of pairwise and metric learning \cite{gu2023new,wang2023pairwise,liu2025Trajectory,jia2023adaptive}.
For example,
\cite{wang2023pairwise} investigated pairwise learning with regularization networks and Nyström subsampling, providing algorithmic acceleration but still within shallow models.
More recently, \cite{liu2025Trajectory} established trajectory-dependent generalization bounds for pairwise learning with $\phi$-mixing samples, extending generalization analysis to dependent data. Table~\ref{table:setting} clearly summarizes and contrasts the theoretical settings of our work with those of previous studies, emphasizing the broader applicability of our framework.
 
 \section{Optimal rates with deep ReLU networks}\label{sec:relu}
In this section, we apply our main results to study the generalization performance of pairwise learning problems.
Specifically, we show that pairwise least-squares regression can achieve the optimal excess population risk rate when the hypothesis class is chosen as a family of structured deep ReLU networks that matches the specific form of the true predictor. 
\subsection{A novel approximation of the true predictor}\label{subsec:novel_space}
In this subsection, we first construct a novel structured deep ReLU network as an approximation of the true predictor $f_\rho$. Then, considering the hypothesis space consisting of the networks with this structure, we establish a sharp oracle inequality in the order of $O((\frac{\log(n)}{n})^{\frac{1}{2-\beta}})$ for general anti-symmetric losses. We will show in the next subsection that, the excess risk rate that matches the minimax lower bound for least squares regression \cite{gyorfi,schmidt2020nonparametric} can be obtained when the least squares loss is considered.   

In pairwise learning, we aim to learn a predictor $f:\X\times\X\to\R$ that models the relationship between two samples $x,x'$. 
In many applications such as pairwise ranking, the sign of $f(x,x')$ indicates whether $x$ should be ranked ahead of $x'$. 
Since the prediction depends on the order of the inputs, it is natural to expect $f$ to be anti-symmetric, i.e., $f(x,x')=-f(x',x)$. 
The following assumption ensures that the true predictor $f_\rho$ satisfies this anti-symmetry property.
The hinge loss $\ell(t, y, y') = (1 - sgn(y - y') t)_+$ for ranking and the least squares loss $\ell(t, y, y') = (t - y + y')^2$ for regression satisfy this assumption. 
\begin{assump}\label{assump:anti-sym}
		The loss $\ell$ is anti-symmetric with respect to any predicted values and labels, \ie for any possible predicted value $t \in \R$ and labels $y, y' \in \Y$, there holds
		$\ell(t, y, y') = \ell(-t, y', y).$
	\end{assump}

A good approximation of a model should be one that has the same structure as itself. Hence,
the basic idea of designing an alternative of $f_\rho$ is to construct a model that has the same structure as $f_\rho$. 
The following proposition shows that the true predictor has an anti-symmetric structure under Assumption~\ref{assump:anti-sym}, which suggests the structure of the approximation of $f_\rho$. The proof of Proposition~\ref{prop:anti-symmetric} can be found in Appendix B.
	\begin{prop}\label{prop:anti-symmetric}
		Under Assumption \ref{assump:anti-sym}, the true predictor $f_\rho$ is anti-symmetric, \ie for almost $x, x' \in \X$, there holds
		$f_\rho(x,x') = -f_\rho(x', x). $
		Furthermore, \begin{align}\label{eq:decomp_of_true_predictor}
        f_\rho(x,x') =  \frac{1}{2}f_\rho(x,x') - \frac{1}{2}f_\rho(x',x).
    \end{align}
	\end{prop}
The nice decomposition of $f_\rho$ given in the above proposition indeed tells us how to find a targeted approximation, i.e.,  
we consider making use of a series of ReLU networks to approximate $f_\rho$ with the structure in \eqref{eq:decomp_of_true_predictor}. Before introducing our approximation, we give the definition of deep ReLU neural networks, which are the basis of our structured networks.
We denote $\sigma(t) = (t)_+$ as the ReLU activation function acting componentwise on the vectors. Let $w_0 \in\mathbb{N}^+$ be the dimension of the input space, $w_L=1$ and  $w_l\in\N^+$ be the width of the $l$-th layer for $l=1,\ldots,L-1$, where $L\in\N$ is the depth of the network.
Let $\widetilde{X}\subset\R^{w_0}$.
A deep ReLU neural network $h$ from $\widetilde{\X}$ to $\R$ with depth $L$ has the form
    \begin{align}\label{eq:relu}
     \!h(\widetilde{x}) \!=\! a^{\!\top}\!\sigma(& T_{L- 1} \sigma(T_{L- 2} \cdots \sigma(T_1 (\widetilde{x})^{\!\top} \!\!\!+\! b_1) \cdots \!+\! b_{L-2}) \nonumber\\
        &+ b_{L-1}) + b_L \text{ for any } \widetilde{x} \in \widetilde{\X}, 
     \end{align}
where $T_l\in\R^{w_l\times w_{l-1}}$ indicates the connection matrix between the $l$-th layer and the $ {(l-1)}$-th layer, $b_l \in \R^{w_l}$ is the bias, and $a\in\R^{w_{L-1}}$. Let $\|\cdot\|_0$ denote the number of nonzero elements of the corresponding matrices and vectors. We define the number of nonzero weights and computation units of $h$ by $(\|a\|_0+1) + \sum_{l=1}^{L-1} \|T_l\|_0 + \|b_l\|_0$ and $\sum_{l=1}^{L-1} w_l$, respectively. We say a network $h$ has the complexity $(L,W,U)$ if its depth, the number of nonzero weights and computation units are $L, W$ and $U$.

Now, we can give our structured approximation network $f:\X\times\X\to\R$ as follows
 \begin{align}\label{eq:structured_NNs}
       f(x,x')= g\big(\pi_\eta(h(x,x')),\pi_\eta(h(x',x))\big), 
    \end{align}
for $x,x'\in\X$,
where $\pi_\eta:\R\mapsto \R$ and $g:\R\mapsto \R$ are shallow ReLU networks with fixed complexity, and $h:\X\times \X\mapsto \R$ is a deep ReLU network defined in \eqref{eq:relu} with complexity $(L,W,U)$ and $\widetilde{\X}=\X\times\X$.
The value of $(L,W,U)$ will be properly chosen later for specific problems (see Theorem~\ref{thm:excess} for least squares regression as an example).
Let us give some explanations of \eqref{eq:structured_NNs}.  First, we use a deep ReLU network $h$ to approximate $\frac{1}{2}f_\rho(x,x')$ and $\frac{1}{2}f_\rho(x',x)$ by $h(x,x')$ and $h(x',x)$, respectively. Note that $ \|f_\rho\|_{L^\infty(\X\times\X)}$ is bounded by $\eta$, we hope that the values of $h$ also lie in $[-\eta/2,\eta/2]$. Hence, we consider improving our estimate $h$ by projecting the values of $h$ onto $[-\eta/2,\eta/2]$, i.e., projecting $h$ to $\pi_\eta(h)$ with the projection operator $\pi_\eta$ defined as
    \begin{align*}
        \pi_\eta(t):= \left\{
	\begin{aligned}
		\eta/2, \ \ & t > \eta/2\\
		t, \ \ & t \in[-\eta/2, \eta/2] \\
		-\eta/2, \ \ & t < -\eta/2.
	\end{aligned}\right.
    \end{align*}
The operate $\pi_\eta(h)$ can be written as a shallow ReLU network: $\pi_\eta (t) = \sigma (t) - \sigma (t-\eta/2) - \sigma (-t) + \sigma (-t-\eta/2)$.
Such an expression can be found in \cite{zhou2024optimal}.
With this, two main items $\frac{1}{2}f_\rho(x,x')$ and $\frac{1}{2}f_\rho(x',x)$ appearing in \eqref{eq:decomp_of_true_predictor} are addressed.
It remains to find a ReLU network to handle the difference of the values of $\pi_\eta(h)$ between the sample pair $(x,x')$ and its reverse order pair $(x',x)$.  
By noting that the difference operator can also be represented by a shallow ReLU network $g$, i.e., $x - y = \sigma ([1, -1]\cdot [x, y]) - \sigma ([-1, 1]\cdot [x, y])$, we can give our final approximation $f(x,x') = g\big(\pi_\eta(h(x,x')),\pi_\eta(h(x',x))\big)$.
Figure \ref{graph} gives the specific structure of $f(x,x')$.

  \begin{figure*}[t]
        \centering        \includegraphics[scale=0.45]{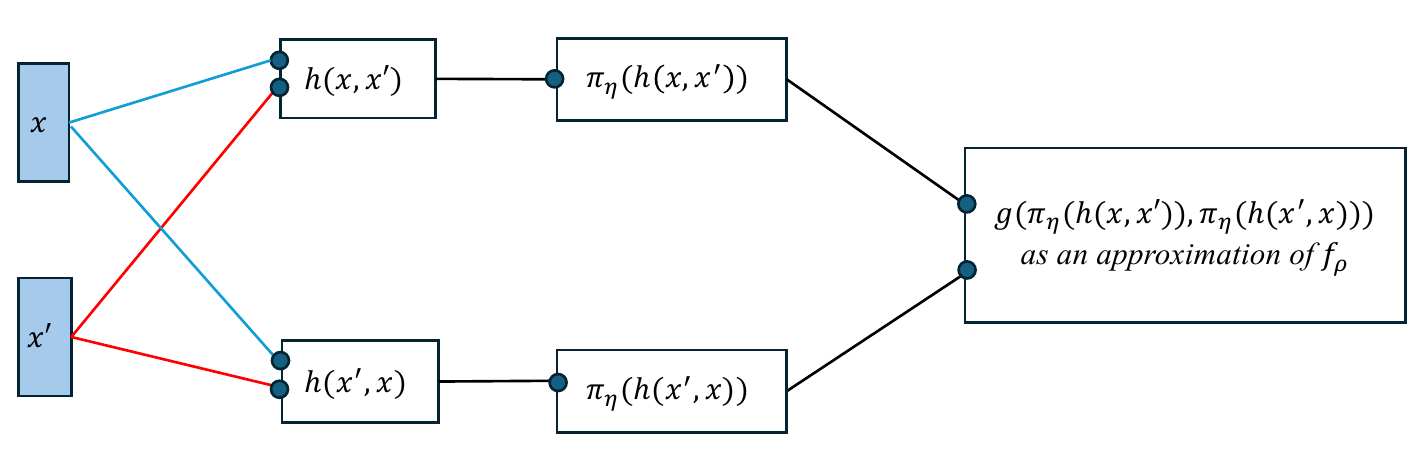}
        \caption{Structure of the designed anti-symmetric deep ReLU network \eqref{eq:structured_NNs} with input $x,x'\in\X$. 
        }\label{graph}
    \end{figure*}

Note that the complexity of a network $f$ of the form \eqref{eq:structured_NNs} can be computed by summing up the corresponding complexities of each sub-network $h,\pi_\eta$ and $g$ directly.
Specifically, the depth of $f$ is the summation of the depth of $h,\pi_\eta$ and $g$. The number of nonzero weights and computation units of $f$ are $2(W_h+W_{\pi_\eta}) + W_g +c$ and $2(U_h+U_{\pi_\eta}) + U_g +c'$ respectively, where $W_\gamma$ and $U_\gamma$ denote the parameters of each sub-network $\gamma\in\{h,\pi_\eta,g\}$, and $c,c'\in\mathbb{N}$ are absolute constants.
The  hypothesis space $\calH$ consists of all possible predictors of the form \eqref{eq:structured_NNs} is defined as 
\begin{align}\label{eq:hypothesis}
        \calH \!=   \calH(&L,W,U)\!:=\!\{\text{$f$ of form \eqref{eq:structured_NNs}: the complexity} \nonumber\\ & \text{of $f$ does not exceed $(L,W,U)$.}\}
    \end{align}
 Here, we use parameters $(L,W,U)\in\N^3$ to measure the capacity (size) of $\calH$. As these parameters increase, the capacity of the hypothesis is getting larger.


From Theorem 7 in \cite{VC} we know the hypothesis space $\calH$ is a VC-class.
Specifically, they proved that $Pdim(\calH) \le CLW\log(U)$.
In the following theorem, we employ the pseudo-dimension to characterize the complexity of the hypothesis space.
Unlike employing the $L^\infty$-covering number, the advantage of using pseudo-dimension is that any boundedness assumptions on the parameters of neural networks are not required.
The proof of Theorem~\ref{thm:oracle_structured_NNs} can be directly derived by part (a) in Theorem~\ref{thm:oracle}.
The detailed proof can be found in Appendix B.  
	    \begin{thm}\label{thm:oracle_structured_NNs}
        Suppose Assumptions \ref{assump:uniformly_bounded}, \ref{assump:Lipschitz} and \ref{assump:anti-sym} hold. Let $\calH$ be the space of structured deep ReLU networks \eqref{eq:hypothesis} and $V = Pdim(\calH)$ be its pseudo-dimension.
        Let $M > 0$ and $\beta\in[0,1]$, suppose that the shifted hypothesis space $\F:=\{\ell(f(x,x'), y, y') - \ell(f_\rho(x,x'), y, y'): f \in \calH\}$ has a variance-expectation bound with parameter pair $(\beta, M)$. Then for any $\delta \in (0,1/2)$, with probability at least $1 - \delta$, 
        \begin{align*}
            \E(\hat{f}_z)& - \E(f_\rho)  \le C_{\eta,K,M,\beta} \Big(\frac{V\log(n)}{n}\Big)^{\frac{1}{2-\beta}}\log(\frac{4}{\delta})\\& + C_{\eta,K} \frac{V}{n}\log^2(\frac{\delta}{2})+ \big(\beta + 2\big) \big(\E(f_\calH) - \E(f_\rho)\big).
        \end{align*}
    \end{thm}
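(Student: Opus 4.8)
The plan is to deduce Theorem~\ref{thm:oracle_structured_NNs} as a direct corollary of part~(a) of Theorem~\ref{thm:oracle}, so essentially all the work is to verify that the hypothesis space $\calH=\calH(L,W,U)$ of structured deep ReLU networks \eqref{eq:hypothesis} satisfies every requirement of that theorem: it has uniform bound $\eta$, every $f\in\calH\cup\{f_\rho\}$ obeys the symmetry identity \eqref{eq:symmetry}, and its capacity meets Assumption~\ref{assump:cover1}. Assumptions~\ref{assump:uniformly_bounded} and \ref{assump:Lipschitz} and the variance-expectation bound with pair $(\beta,M)$ are simply carried over as hypotheses, so there is nothing to check for them.

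First, I would unfold the two-layer blocks in \eqref{eq:structured_NNs}: $\pi_\eta$ realizes the clipping $\gamma_\eta$ onto $[-\eta/2,\eta/2]$ and $g$ realizes $g(u,v)=u-v$, so any $f$ of the form \eqref{eq:structured_NNs} equals $\gamma_\eta(h(x,x'))-\gamma_\eta(h(x',x))$. This immediately gives $|f(x,x')|\le \eta$ for all $x,x'$, i.e. $\calH$ has uniform bound $\eta$, and it also gives $f(x',x)=-f(x,x')$, i.e. every element of $\calH$ is anti-symmetric. Since $f_\rho$ is anti-symmetric by Proposition~\ref{prop:anti-symmetric} (valid under Assumption~\ref{assump:anti-sym}), for every $f\in\calH\cup\{f_\rho\}$ and almost every $z,z'$ the anti-symmetry of $\ell$ (Assumption~\ref{assump:anti-sym}) yields
\[
\ell(f(x,x'),y,y')=\ell(-f(x,x'),y',y)=\ell(f(x',x),y',y),
\]
which is exactly \eqref{eq:symmetry}.

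Next I would establish Assumption~\ref{assump:cover1}. The composed map $f$ is itself a ReLU network of a fixed architecture (two weight-sharing copies of $h$, two fixed-weight copies of $\pi_\eta$, one fixed-weight $g$) with finitely many free parameters, so $V=Pdim(\calH)<\infty$ and $\calH$ is a VC-class. Applying Lemma~\ref{lemma:VC_class} with uniform bound $F=\eta$, for any probability measure $Q$ on $\X\times\X$ and $\epsilon\in(0,1)$,
\[
\cN(\calH,L^2_Q,\epsilon)\le C\,V(16e)^V\Big(\frac{\eta}{\epsilon}\Big)^{2(V-1)}\le C_{\eta,V}\Big(\frac{1}{\epsilon}\Big)^{2(V-1)}.
\]
For $\|\cdot\|_{L^2_{\nu_n}}$ this applies verbatim since $\nu_n$ is a probability measure on $\X\times\X$. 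For $\|\cdot\|_{L^2_{\rho_\bx\times\mu_n}}$, Jensen's inequality gives $\bE[f(X,X_i)\mid X_i]^2\le\bE[f(X,X_i)^2\mid X_i]$, hence $\|f\|_{L^2_{\rho_\bx\times\mu_n}}\le\|f\|_{L^2_{\rho_\bx\otimes\mu_n}}$ with the right-hand side the genuine $L^2$ norm for the product probability measure $\rho_\bx\otimes\mu_n$; thus any $\epsilon$-net in the product norm is an $\epsilon$-net in $L^2_{\rho_\bx\times\mu_n}$, and the displayed bound transfers. Therefore Assumption~\ref{assump:cover1} holds with $s_1=s_2=C_{\eta,V}$ and $V_1=V_2=2(V-1)$, and the explicit form of $s_1,s_2$ gives $\log s_1,\log s_2\le C_\eta V$, so $\max\{V_1,V_2,\log s_1,\log s_2\}\le C_\eta V$.

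Finally I would substitute into part~(a) of Theorem~\ref{thm:oracle}: using $\max\{V_1,\log s_1\}\le C_\eta V$, the convention that all constants are $\ge1$, and $\frac{1}{2-\beta}\le1$, the leading factor $\big(\tfrac{\max\{V_1,\log s_1\}\log n}{n}\big)^{1/(2-\beta)}$ is bounded by $C_\eta\big(\tfrac{V\log n}{n}\big)^{1/(2-\beta)}$ and the middle term by $C_\eta\tfrac{V}{n}\log^2(\delta/2)$; absorbing the $\eta$-factors into the constants gives the claimed inequality. I expect the only mildly delicate steps to be matching the pseudo-dimension estimate to the exact polynomial shape demanded by Assumption~\ref{assump:cover1} (including the harmless $\epsilon\ge\eta$ regime) and the Jensen comparison that reduces $\|\cdot\|_{L^2_{\rho_\bx\times\mu_n}}$ to a bona fide product-measure norm; everything else is bookkeeping of constants.
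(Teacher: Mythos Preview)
Your proposal is correct and follows essentially the same route as the paper's own proof: verify that $\calH$ is uniformly bounded by $\eta$, that anti-symmetry of $\calH$ and $f_\rho$ together with Assumption~\ref{assump:anti-sym} yield \eqref{eq:symmetry}, and that the finite pseudo-dimension combined with Lemma~\ref{lemma:VC_class} gives Assumption~\ref{assump:cover1} with $V_1=V_2=2(V-1)$ and $\log s_i\le C_\eta V$, after which part~(a) of Theorem~\ref{thm:oracle} yields the stated bound. Your treatment is in fact more explicit than the paper's terse argument (in particular the Jensen step reducing $\|\cdot\|_{L^2_{\rho_\bx\times\mu_n}}$ to a genuine product-measure $L^2$ norm is a point the paper leaves implicit), but the underlying strategy is identical.
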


	\subsection{Pairwise least squares regression with deep ReLU networks}\label{subsec:ls_regression}
	In this subsection, we focus on the pairwise least squares regression with loss $\ell(f(x,x'), y, y') = \big(f(x,x') - y + y'\big)^2$.
    We first estimate the approximation error with $\calH$ of the form \eqref{eq:hypothesis}.
    Then, by combining the approximation error bound with Theorem~\ref{thm:oracle_structured_NNs} and choosing the proper capacity of $\calH$, we derive the nearly optimal excess population risk bound for pairwise least squares regression.

    The estimates of the approximation error $\E(f_\calH) - \E(f_\rho)$ are closely related to the true predictor $f_\rho$, which is known to be $f_\rho(x,x') = \tilde{f}_\rho(x) - \tilde{f}_\rho(x')$ for $x,x'\in\X$ \cite{online_pairwise}.
    Here, $\tilde{f}_\rho(x) = \bE[Y|X=x]$ is the regression function. 
    To measure the smoothness of $\tilde{f_\rho}$, we need to introduce the Sobolev space.
    We first assume that $\X = [0,1]^d$ in the remainder of this subsection.
    Let $r \in \N$ be the smoothness index, the Sobolev space $W^{r,\infty}([0,1]^d)$ is defined as the class consisting of functions along with their partial derivatives up to order $r$ lying in $L^\infty([0,1]^d)$.
    The norm in $W^{r,\infty}([0,1]^d)$ is defined as   
    \begin{align*}
        \big\|f\big\|_{W^{r,\infty}([0,1]^d)}:= \max_{\alpha \in \mathbb{Z}_+^d: \|\alpha\|_1 \le r} \big\|D^\alpha f\big\|_{L^\infty([0,1]^d)},
    \end{align*}
    where $\mathbb{Z}_+$ denotes the set of all non-negative integer and $\|\alpha\|_1 = \sum_{i=1}^d |\alpha_i|$ denotes the $l^1$ norm of $\alpha$, $D^\alpha f = \frac{\partial^{\|\alpha\|_1} f}{\partial x_1^{\alpha_1}\cdots\partial x_d^{\alpha_d}}$ denotes the weak partial derivative of $f$ with order $\alpha$.
    
    Assumption \ref{assump:sobolev} assumes all $r$-th partial derivatives of the $\tilde{f}_\rho$ exist and their $L^\infty$ norms are bounded.
\begin{assump}\label{assump:sobolev}
    Suppose for some $r \in \N$, $\|\tilde{f}_\rho\|_{ W^{r,\infty} ([0,1]^{ d} )} \le 1$.   
    \end{assump}
    Since Sobolev norm dominates $L^\infty([0,1]^d)$ norm, Assumption \ref{assump:sobolev} implies Assumption \ref{assump:uniformly_bounded} with $\eta = 2$.

    The following theorem estimates the approximation error, whose proof is given in Appendix B. \begin{thm}\label{thm:approx_error}
        Suppose Assumption \ref{assump:sobolev} holds, and the structured hypothesis space $\calH(L,W,U)$ is defined by \eqref{eq:hypothesis}. Then, for any $\epsilon \in(0,1)$, there exists a deep ReLU network $f$ of form \eqref{eq:structured_NNs} with depth at most $C_{d,r}\log(1/\epsilon)$ and the number of nonzero weights and computation units at most $C_{d,r}\epsilon^{-\frac{d}{r}}\log(1/\epsilon)$ such that
        \begin{align*}
            \big\|f - f_\rho\big\|_{L^\infty([0,1]^{2d})} \le \epsilon.
        \end{align*}
        Further, if we set $W = U = \lceil \exp(L)\rceil$, then the approximation error can be bounded as follows
        \begin{align*}
            \D(\calH) \le C_{d,r}\Big(\frac{L}{\exp(L)}\Big)^{\frac{2r}{d}}.
        \end{align*}
    \end{thm}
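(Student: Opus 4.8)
\textbf{Proof proposal for Theorem~\ref{thm:approx_error}.}

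The plan is to build the approximating network in stages, following exactly the three-layer architecture \eqref{eq:structured_NNs}, namely $f(x,x')=g\bigl(\pi_\eta(h(x,x')),\pi_\eta(h(x',x))\bigr)$ with $\eta=2$. First I would invoke a standard deep ReLU approximation result for Sobolev functions (e.g.\ the constructions of Yarotsky or of Schmidt-Hieber): under Assumption~\ref{assump:sobolev}, for any $\epsilon\in(0,1)$ there is a deep ReLU network $\tilde h:[0,1]^d\to\R$ with depth $O\bigl(\log(1/\epsilon)\bigr)$ and number of nonzero weights and computation units $O\bigl(\epsilon^{-d/r}\log(1/\epsilon)\bigr)$ such that $\|\tilde h-\tilde f_\rho\|_{L^\infty([0,1]^d)}\le \epsilon/2$. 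The constants here depend only on $d$ and $r$, matching the claimed $C_{d,r}$ dependence.

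Next I would convert this pointwise approximant into an approximant of the true pairwise predictor $f_\rho(x,x')=\tilde f_\rho(x)-\tilde f_\rho(x')$. Using Proposition~\ref{prop:anti-symmetric}, $f_\rho(x,x')=\tfrac12 f_\rho(x,x')-\tfrac12 f_\rho(x',x)$, so the natural choice is to let $h(x,x'):=\tfrac12\tilde h(x)$ act only on the first coordinate block (so that $h(x',x)=\tfrac12\tilde h(x')$), which is again a deep ReLU network of the same order of complexity since it simply ignores half its inputs and rescales by $\tfrac12$. Then $\pi_\eta=\gamma_\eta$ clips to $[-\eta/2,\eta/2]=[-1,1]$; since $\|\tfrac12\tilde f_\rho\|_{L^\infty}\le 1$ (Assumption~\ref{assump:sobolev} with $\eta=2$) and clipping is $1$-Lipschitz and fixes the target interval, $\|\pi_\eta(h(x,x'))-\tfrac12 f_\rho(x,x')\|_{L^\infty}\le\|h(x,x')-\tfrac12 f_\rho(x,x')\|_{L^\infty}\le\epsilon/4$. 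Finally, $g(u,v)=u-v$ is the exact two-layer difference network $\sigma(u-v)-\sigma(v-u)$, so
\[
 \bigl|f(x,x')-f_\rho(x,x')\bigr|
 \le \bigl|\pi_\eta(h(x,x'))-\tfrac12 f_\rho(x,x')\bigr|+\bigl|\pi_\eta(h(x',x))-\tfrac12 f_\rho(x',x)\bigr|
 \le \epsilon/2,
\]
which gives the $L^\infty([0,1]^{2d})$ bound after renaming constants (I have room to spare, so the exact fractions are irrelevant). The complexity bookkeeping at the end of Section~\ref{subsec:novel_space} shows that the complexity of $f$ is, up to additive absolute constants and a factor $2$, that of $h$, hence still $O_{d,r}\bigl(\log(1/\epsilon)\bigr)$ depth and $O_{d,r}\bigl(\epsilon^{-d/r}\log(1/\epsilon)\bigr)$ weights and units.

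For the second claim I would first note that, for least squares, $\D(\calH)=\E(f_\calH)-\E(f_\rho)=\inf_{f\in\calH}\|f-f_\rho\|_{L^2_{\rho_{\bx}\times\rho_{\bx}}}^2\le \inf_{f\in\calH}\|f-f_\rho\|_{L^\infty([0,1]^{2d})}^2$ (using that for the squared loss with $f_\rho$ the regression-type minimizer, the excess risk equals the squared $L^2$ distance, and $L^2\le L^\infty$ on a probability space). Then, with the choice $W=U=\lceil\exp(L)\rceil$, I invert the complexity relation: the first part gives an $\epsilon$-approximant in $\calH(L',W',U')$ whenever $L'\asymp\log(1/\epsilon)$ and $W'=U'\asymp\epsilon^{-d/r}\log(1/\epsilon)$; setting $L$ so that $\exp(L)\asymp\epsilon^{-d/r}\log(1/\epsilon)$, i.e.\ $\epsilon^{d/r}\asymp L/\exp(L)$, i.e.\ $\epsilon\asymp(L/\exp(L))^{r/d}$, and checking that this $L$ is consistent with $L'\asymp\log(1/\epsilon)\asymp L$, yields $\D(\calH)\le\epsilon^2\le C_{d,r}(L/\exp(L))^{2r/d}$. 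The main obstacle is the last consistency check — making sure the depth budget $L$ one solves for from the weight/unit constraint $\exp(L)\asymp\epsilon^{-d/r}\log(1/\epsilon)$ is compatible (up to constants) with the depth requirement $L\asymp\log(1/\epsilon)$ of the underlying Sobolev approximation result, so that a single choice of $(L,W,U)$ simultaneously satisfies both; this is a routine but slightly delicate matching of logarithmic factors, and it is where I would be most careful about constants.
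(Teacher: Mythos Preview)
Your approach is essentially the paper's: invoke Yarotsky for $\tilde f_\rho$, embed the resulting one-variable network into the structured architecture \eqref{eq:structured_NNs}, and then use $\D(\calH)\le\|f-f_\rho\|_{L^\infty}^2$ together with an inversion of the complexity relations. The second half (the inversion when $W=U=\lceil\exp L\rceil$) is handled exactly as in the paper, and your remark about matching the depth and weight budgets is the only point requiring any care.

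There is, however, a genuine slip in your construction. With $h(x,x'):=\tfrac12\tilde h(x)$ and $g(u,v)=u-v$, your network is
\[
f(x,x')=\pi_\eta\bigl(\tfrac12\tilde h(x)\bigr)-\pi_\eta\bigl(\tfrac12\tilde h(x')\bigr)\approx \tfrac12\tilde f_\rho(x)-\tfrac12\tilde f_\rho(x')=\tfrac12 f_\rho(x,x'),
\]
so it approximates $\tfrac12 f_\rho$, not $f_\rho$. Correspondingly, the intermediate claim $\|h(x,x')-\tfrac12 f_\rho(x,x')\|_{L^\infty}\le\epsilon/4$ is false: since $\tfrac12 f_\rho(x,x')=\tfrac12\tilde f_\rho(x)-\tfrac12\tilde f_\rho(x')$, one has
\[
h(x,x')-\tfrac12 f_\rho(x,x')=\tfrac12\bigl(\tilde h(x)-\tilde f_\rho(x)\bigr)+\tfrac12\tilde f_\rho(x'),
\]
and the second term need not be small. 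The triangle inequality you apply afterwards therefore does not give the stated bound. The fix is immediate: drop the factor $\tfrac12$ and take $h(x,x'):=\tilde h(x)$, so that $f(x,x')=\pi_\eta(\tilde h(x))-\pi_\eta(\tilde h(x'))$ directly approximates $\tilde f_\rho(x)-\tilde f_\rho(x')=f_\rho(x,x')$; the clipping and difference steps then go through verbatim. This is precisely what the paper does (the decomposition of Proposition~\ref{prop:anti-symmetric} motivates the architecture but is not used quantitatively in the proof).
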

Note that Part (b) of Theorem 4 in \cite{yarotsky} shows that, for a deep ReLU network $f$ with depth at most $C_{d,r}\log(\frac{1}{\epsilon})$, achieving an approximation error $\|f-f_\rho\|_{L^\infty} \le \epsilon$ requires \textit{at least}  $C_{d,r}\epsilon^{-\frac{d}{r}}\log^{-3}(\frac{1}{\epsilon})$ nonzero weights. 
Theorem~\ref{thm:approx_error} shows that a deep ReLU network with the same depth can achieve the same approximation error using only $C_{d,r}\epsilon^{-\frac{d}{r}}\log(\frac{1}{\epsilon})$ nonzero weights.
This implies that the bound in Theorem~\ref{thm:approx_error} is optimal up to the logarithmic factor $\log^{4}(\frac{1}{\epsilon})$ in terms of the number of nonzero weights.

    By combining Theorems \ref{thm:oracle_structured_NNs} and \ref{thm:approx_error}, we can establish the following excess population risk bound of order $O(n^{-\frac{2r}{2r+d}})$, which matches the minimax lower bound for the pointwise least squares regression \cite{gyorfi,schmidt2020nonparametric}.
    \begin{thm}\label{thm:excess}
        Suppose Assumption \ref{assump:sobolev} holds, $Prob\{|Y| \le B\} = 1 $ for some constant $B>0$ and the structured hypothesis space $\calH(L,W,U)$ is defined by \eqref{eq:hypothesis}. If we set $W = U = \lceil\exp(L)\rceil$, then for any $\delta\in(0,1/2)$, with probability at least $1 - \delta$,
        \begin{align*}
            \E(\hat{f}_z) - \E(f_\rho) \le & C_{d,r,B}\frac{L^2 \exp(L)\log(n)}{n}\log^2(\frac{4}{\delta})\\& + C_{d,r}\Big(\frac{L}{\exp(L)}\Big)^{\frac{2r}{d}}.
        \end{align*}
        Setting $L = \big\lceil \frac{d}{2r+d} \log(n)\big\rceil$, with probability at least $1- \delta$, there holds
        \begin{align*}
            \E(\hat{f}_z) - \E(f_\rho) \le C_{d,r,B}\log^\tau(n)n^{-\frac{2r}{2r+d}} \log^2(\frac{4}{\delta}),
        \end{align*}
        where $\tau = \max\{3, {2r}/{d}\}$.
    \end{thm}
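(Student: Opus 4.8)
The plan is to obtain Theorem~\ref{thm:excess} by feeding the approximation bound of Theorem~\ref{thm:approx_error} into the abstract oracle inequality of Theorem~\ref{thm:oracle_structured_NNs}, after first identifying the parameters $\eta,K,\beta,M$ and the pseudo-dimension $V=Pdim(\calH)$ for the pairwise least squares problem. First I would check that the hypotheses of Theorem~\ref{thm:oracle_structured_NNs} hold: since the Sobolev norm dominates the $L^\infty$ norm, Assumption~\ref{assump:sobolev} yields Assumption~\ref{assump:uniformly_bounded} with $\eta=2$ (recall $f_\rho(x,x')=\tilde{f}_\rho(x)-\tilde{f}_\rho(x')$); Assumption~\ref{assump:bounded_dist} makes the least squares loss $K$-Lipschitz on $[-2,2]$ with $K=8B$, giving Assumption~\ref{assump:Lipschitz}; and $\ell(t,y,y')=(t-y+y')^2$ is anti-symmetric, giving Assumption~\ref{assump:anti-sym}. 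I would also note that every $f\in\calH$ of the form~\eqref{eq:structured_NNs} is anti-symmetric by construction, because the final block $g$ realizes the difference map $(a,b)\mapsto a-b$, so $f(x',x)=-f(x,x')$; combined with the anti-symmetry of $f_\rho$ from Proposition~\ref{prop:anti-symmetric}, this gives the symmetry relation~\eqref{eq:symmetry} on $\calH\cup\{f_\rho\}$. Finally, since $Y$ is bounded, the shifted class $\F$ has a variance-expectation bound with $\beta=1$ and some $M=M_B$ (Lemma~\ref{lemma:bounded_ditri}); this is what raises the rate exponent $\tfrac{1}{2-\beta}$ to $1$.

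Next I would bound the capacity $V=Pdim(\calH)$. The key observation is that a predictor of the form~\eqref{eq:structured_NNs} is itself an ordinary deep ReLU network from $\R^{2d}$ to $\R$ whose depth, number of nonzero weights, and number of computation units are all of the same order as those of the generating network $h$ — namely $O(L)$, $O(W)$, $O(U)$ — since $\pi_\eta$ and $g$ are fixed-complexity two-layer networks. Applying the standard $O(W L\log W)$ pseudo-dimension bound for feed-forward ReLU networks with $W$ weights and $L$ layers, together with the choice $W=U=\lceil\exp(L)\rceil$ (so $\log W=L$), gives $V\le C L^2\exp(L)$.

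Substituting $\eta=2$, $K=8B$, $\beta=1$, and $V\le C L^2\exp(L)$ into Theorem~\ref{thm:oracle_structured_NNs}, and absorbing the lower-order term $C_{\eta,K}\frac{V}{n}\log^2(\delta/2)$ into the leading term via $\log n\ge 1$ and $\log^2(\delta/2)\le\log^2(4/\delta)$, I get that with probability at least $1-\delta$,
\[
\E(\hat{f}_z)-\E(f_\rho)\le C_{d,r,B}\,\frac{L^2\exp(L)\log(n)}{n}\log^2\!\Big(\frac{4}{\delta}\Big)+3\,\D(\calH).
\]
Plugging in $\D(\calH)\le C_{d,r}(L/\exp(L))^{2r/d}$ from Theorem~\ref{thm:approx_error} (which applies precisely because $W=U=\lceil\exp(L)\rceil$, and because for the squared loss $\D(\calH)=\E(f_\calH)-\E(f_\rho)\le\|f-f_\rho\|_{L^\infty([0,1]^{2d})}^2$ for the constructed $f$) yields the first displayed inequality of the theorem. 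For the second, I would take $L=\lceil\frac{d}{2r+d}\log n\rceil$, so that $n^{d/(2r+d)}\le\exp(L)\le e\,n^{d/(2r+d)}$ and $L\le\log n$ for $n$ large enough; then $\frac{L^2\exp(L)\log n}{n}\le C(\log n)^3 n^{-2r/(2r+d)}$ and $(L/\exp(L))^{2r/d}\le C(\log n)^{2r/d} n^{-2r/(2r+d)}$, and collecting both terms with $\tau=\max\{3,2r/d\}$ gives the stated bound $C_{d,r,B}\log^\tau(n)\,n^{-2r/(2r+d)}\log^2(4/\delta)$.

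The hard part will be the pseudo-dimension estimate $V\le C L^2\exp(L)$: one must argue carefully that composing the sub-networks $h,\pi_\eta,g$ in~\eqref{eq:structured_NNs} leaves the complexity $(L,W,U)$ essentially unchanged up to universal constants, so that off-the-shelf VC/pseudo-dimension bounds for plain feed-forward ReLU networks may be invoked. Everything else is bookkeeping with constants and logarithmic factors, together with the verification (Lemma~\ref{lemma:bounded_ditri}) that $\beta=1$ is legitimate under Assumption~\ref{assump:bounded_dist}.
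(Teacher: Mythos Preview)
Your proposal is correct and follows essentially the same route as the paper: verify via Lemma~\ref{lemma:bounded_ditri} that Assumptions~\ref{assump:uniformly_bounded}, \ref{assump:Lipschitz} and the variance-expectation bound with $\beta=1$ hold, invoke the pseudo-dimension bound $V\le C\,L W\log(U)$ for ReLU networks (the paper cites Theorem~7 of \cite{VC}) to get $V\le C L^2\exp(L)$ under $W=U=\lceil\exp(L)\rceil$, plug into Theorem~\ref{thm:oracle_structured_NNs} together with the approximation bound of Theorem~\ref{thm:approx_error}, and then optimize $L$. The only cosmetic difference is that you spell out explicitly the anti-symmetry of $\calH$ and the absorption of the $\frac{V}{n}\log^2(\delta/2)$ term, whereas the paper leaves these implicit.
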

    \noindent\textbf{Comparison with related works. }
       The minimax lower rate $O\big(n^{-\frac{2r}{2r + s}}\big)$ of the excess risk for pairwise learning is developed in \cite{guo2022distributed} by using the kernel method, where $r\in(1/2,3/2]$ is the smoothness parameter of $f_\rho$, and $s\in(0,1)$ is the effective dimension measuring the capacity of the corresponding RKHS.
       \cite{HZFZ} studied ranking with deep ReLU networks when the hinge loss is considered, they derived the excess risk rate $O\big(n^{-\frac{r(\theta + 1)}{2d + r(\theta+2)}}\big)$, where $r>0$ denotes the smoothness of the target function, and $\theta>0$ is the parameter of noise condition.
       As we mentioned before, the results of previous work \cite{ranking,rejchel} cannot be used for studying the kernel methods or neural networks with the least squares loss, which demonstrates that our general results indeed help us to explore the generalization performance on a variety of problems that cannot be handled by existing approaches.

\section{ {Experiments}}\label{sec:exper}

In this section, we present experimental results to demonstrate the effectiveness of our proposed model in both simulation and real-world data.

\vspace{-3mm}

\subsection{Evaluation of Approximation Error}

This experiment aims to empirically verify the approximation error bounds stated in Theorem~\ref{thm:approx_error}.
The theorem provides upper bounds on the depth and number of nonzero weights required by deep ReLU networks to approximate smooth functions to a prescribed accuracy.
To validate these results, we design a smooth target function and follow the basic idea of \cite{yarotsky, elbrachter2021deep} to explicitly construct a deep ReLU network with controllable depth and sparsity to approximate it.
By measuring the approximation error while increasing the network size (the depth and the number nonzero weights), we examine whether the observed convergence behavior matches the theoretical results.

\noindent\textbf{Target function.} 
We consider a degree-7 B-spline interpolating function $f_\rho$ defined on the interval $[0, 1]$.
Note $f_\rho(x,x') = \tilde{f}_\rho(x) - \tilde{f}_\rho(x')$ in Theorem~\ref{thm:approx_error}. In the following, we focus on approximating the target function in its pointwise form, i.e., we consider $f_\rho(x).$ 
Specifically,
twenty-one equally spaced nodes $\{x_i\}_{i=0}^{20}$ are chosen as $x_i=i/20$, and the corresponding function values $\{y_i\}_{i=0}^{20}$ are drawn independently from a uniform distribution on $[0, 10]$.
The B-spline curve $f_\rho(x)$ is a smooth and piecewise polynomial, which can be uniquely determined by setting $f_\rho(x_i) = y_i$.
That is, $f_\rho(x) = \sum_{p=0}^7 c_{i,p} x^p$ for any $x\in[x_{i-1}, x_{i}]$ and $i=1,\ldots, 20$.
The target function $f_\rho$ is illustrated in Figure~\ref{fig_B-spline}.

\begin{figure}[ht] 
    \centering
\includegraphics[width=0.5\linewidth]{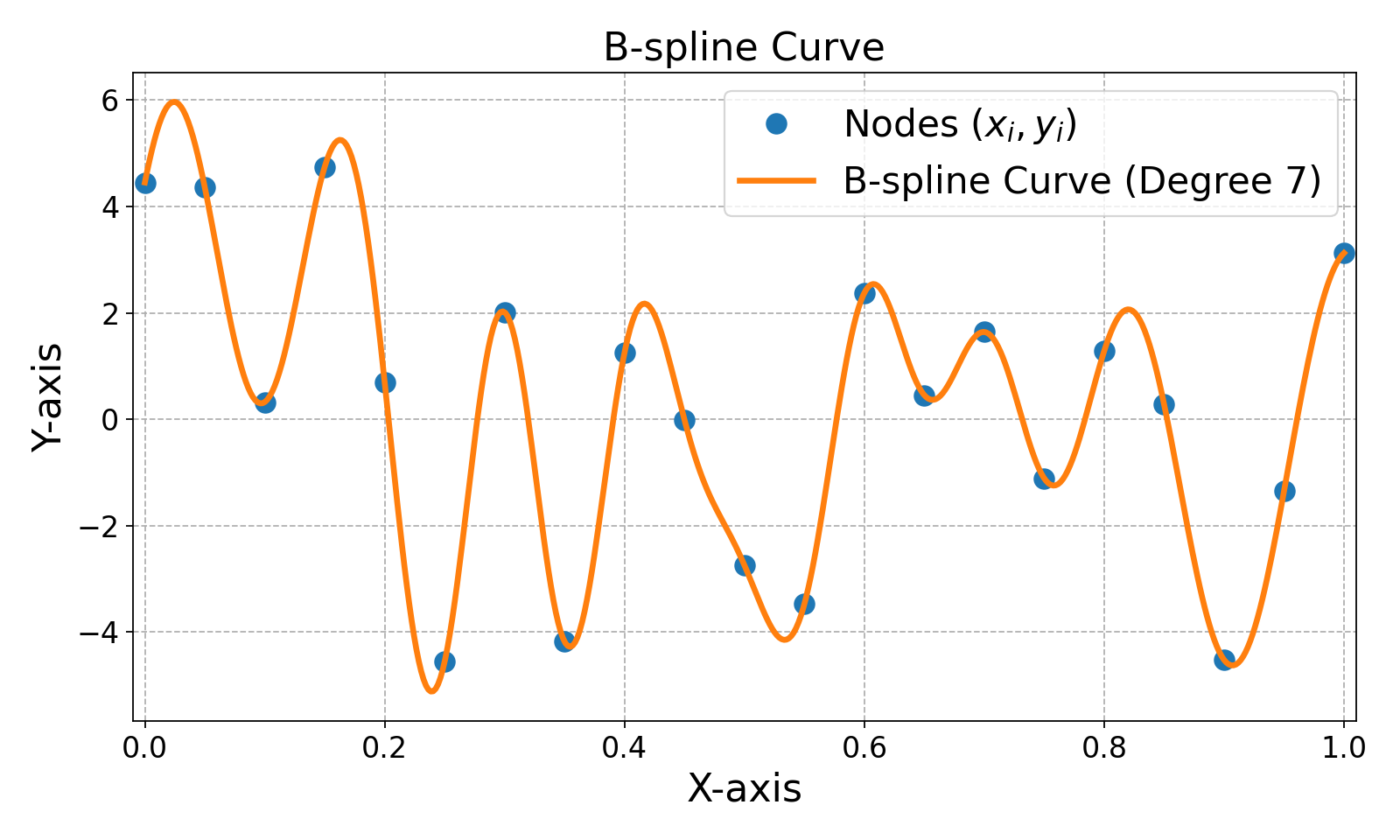}
    \caption{B-spline of order $7$}
    \label{fig_B-spline}
\end{figure}

\noindent\textbf{Network construction.}
We aim to construct a deep neural network $\Psi(x)$ to approximate $f_\rho(x)$.
To do so, we proceed in the following two steps.
Step 1: approximate the power functions $x^p$ by a network $\psi_p(x)$ for all $p=0,\ldots,7$. Step 2: form linear combinations of $\psi_p(x)$ as $\sum_{i=1}^p c_{i,p} \psi_p(x)$ approximate the target polynomial.

\noindent\textbf{Step 1} (\textit{Approximating} $x^p$).  The basic idea to approximate $x^p$ is drawn from \cite{yarotsky, elbrachter2021deep}.  
First, we will find a $L$-layer deep ReLU network \(\psi(x,x')\) that provides a good approximation of the product function \((x,x') \mapsto xx'\). Then, we can approximate $x^2$ by $\psi(x,x)$, i.e., \(\psi(x,x) \approx x^2\), and consequently $x^3$ can be approximated by $\psi(x,\psi(x,x))$. 
Following this idea, we can define a series of networks
\[
\psi_1(x) = x, \qquad 
\psi_p(x) = \psi\bigl(x,\psi_{p-1}(x)\bigr) \quad \text{for } p \ge 2,
\]
and use them to approximate $x^p$ for all $p=0,\ldots,7$. 

Now, we explicitly construct the deep ReLU network \(\psi(x,x')\) to approximate the product function \((x,x')\mapsto xx'\) on \([-1,1]^2\). 
The network $\psi$ with input $[x, x']^\top\in\R^2$ is defined layer by layer as follows: for the first and second hidden layers, we set their weight matrices as \(T_1 \in \mathbb{R}^{4 \times 2}\), \(T_2 \in \mathbb{R}^{6 \times 4}\) and bias \(b_1 \in \mathbb{R}^4\), \(b_2 \in \mathbb{R}^6\):
\[
T_1 = \frac{1}{2}
\begin{bmatrix}
  1 &  1\\
 -1 & -1\\
  1 & -1\\
 -1 &  1
\end{bmatrix}, 
\qquad 
b_1 = \mathbf{0}
\]
and 
\[
T_2 =
\begin{bmatrix}
 1 & 1 & 0 & 0\\
 1 & 1 & 0 & 0\\
 1 & 1 & 0 & 0\\
 0 & 0 & 1 & 1\\
 0 & 0 & 1 & 1\\
 0 & 0 & 1 & 1
\end{bmatrix},
\qquad
b_2 = 
\begin{bmatrix}
 0\\ -2^{-1}\\ 0\\ 0\\ -2^{-1}\\ 0
\end{bmatrix}.
\]
For layers \(\ell = 3, \dots, L+1\), each layer shares the same weight matrix \(T_\ell = T^* \in \mathbb{R}^{6 \times 6}\):
\[
T^* =
\begin{bmatrix}
  0.5 & -1.0 & 0.0 &  0.0 &  0.0 & 0.0\\
  0.5 & -1.0 & 0.0 &  0.0 &  0.0 & 0.0\\
 -0.5 &  1.0 & 1.0 &  0.0 &  0.0 & 0.0\\
  0.0 &  0.0 & 0.0 &  0.5 & -1.0 & 0.0\\
  0.0 &  0.0 & 0.0 &  0.5 & -1.0 & 0.0\\
  0.0 &  0.0 & 0.0 & -0.5 &  1.0 & 1.0
\end{bmatrix}
\]
and different bias \[b_\ell = \begin{bmatrix}
    0& -2^{-2\ell+3}& 0& 0& -2^{-2\ell+3}& 0
\end{bmatrix}^\top\in\R^6.\]

\noindent The output layer has weight \(T_{L+2} \in \mathbb{R}^{1 \times 6}\) and bias \(b_{L+2} \in \mathbb{R}\):
\[
T_{L+2} = \begin{bmatrix} -0.5 & 1.0 & 1.0 & 0.5 & -1.0 & -1.0 \end{bmatrix}, \quad 
  b_{L+2} = 0.\]
Let $\sigma$ denote the ReLU activation function.
The desired network $\psi(x,x')$ has the form  
\begin{align*}
    \psi(x,x')  = & T_{L+2} \sigma\big( T_{L+1} \sigma\big( \cdots \sigma\big( T_1 [x, x']^\top + b_1\big) \cdots\big) + b_{L+2}. 
\end{align*}

\noindent\textbf{Step 2} (\textit{linear combination of} $x^p$).
For each $x$ belongs to subinterval \([x_{i-1}, x_i]\), we can approximate $f_\rho (x)$ by $\Phi_i(x) = \sum_{p=0}^7 c_{i,p}\,\psi_p(x)$.
Then, by concatenating all these subnetworks and denoting the resulting deep ReLU network by \(\Phi(x)\), we obtain a global network that approximates \(f_\rho\) accurately over the entire domain.
Table \ref{tabel_nn} summarize the constructed deep ReLU networks.

\begin{table}[ht] 
\centering
\begin{tabular}{|c|c|c|c|}
\hline
\textbf{Approximator} & \textbf{Target} & \textbf{Depth} & \textbf{Nonzero weights} \\
\hline
\(\psi\)   & \(\!\!(x,x') \mapsto xx'\!\!\)  & \(O(L)\)  & \(O(L)\) \\
\hline
\(\psi_p\) & \( x \mapsto x^p \)     & \(\!O(pL)\!\)  & \(O(pL)\) \\
\hline
\(\Phi\)   & \( f_\rho \)           & \(\!O(pL)\!\)  & \(O(p^2L)\) \\
\hline
\end{tabular}
\caption{Summary of the constructed networks.}\label{tabel_nn}
\end{table}

\begin{figure}[ht] 
    \centering    \includegraphics[width=0.5\linewidth]{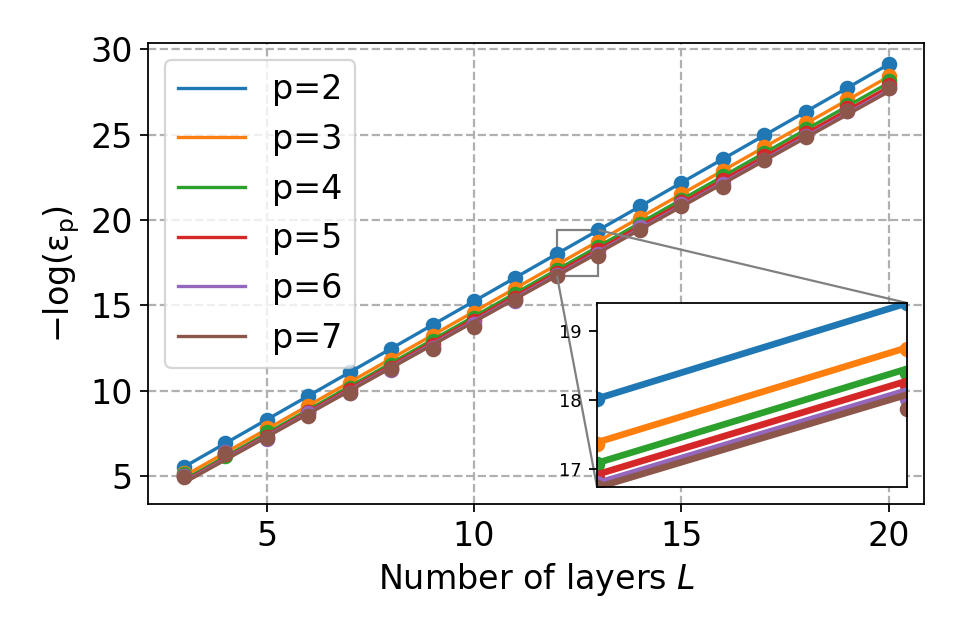}
    \caption{Approximation accuracy $\mathrm{\epsilon}_p = \|\psi_p(x) - x^p\|_\infty$}
    \label{fig_power}
\end{figure}
\begin{figure}[ht]
    \centering    \includegraphics[width=0.5\linewidth]{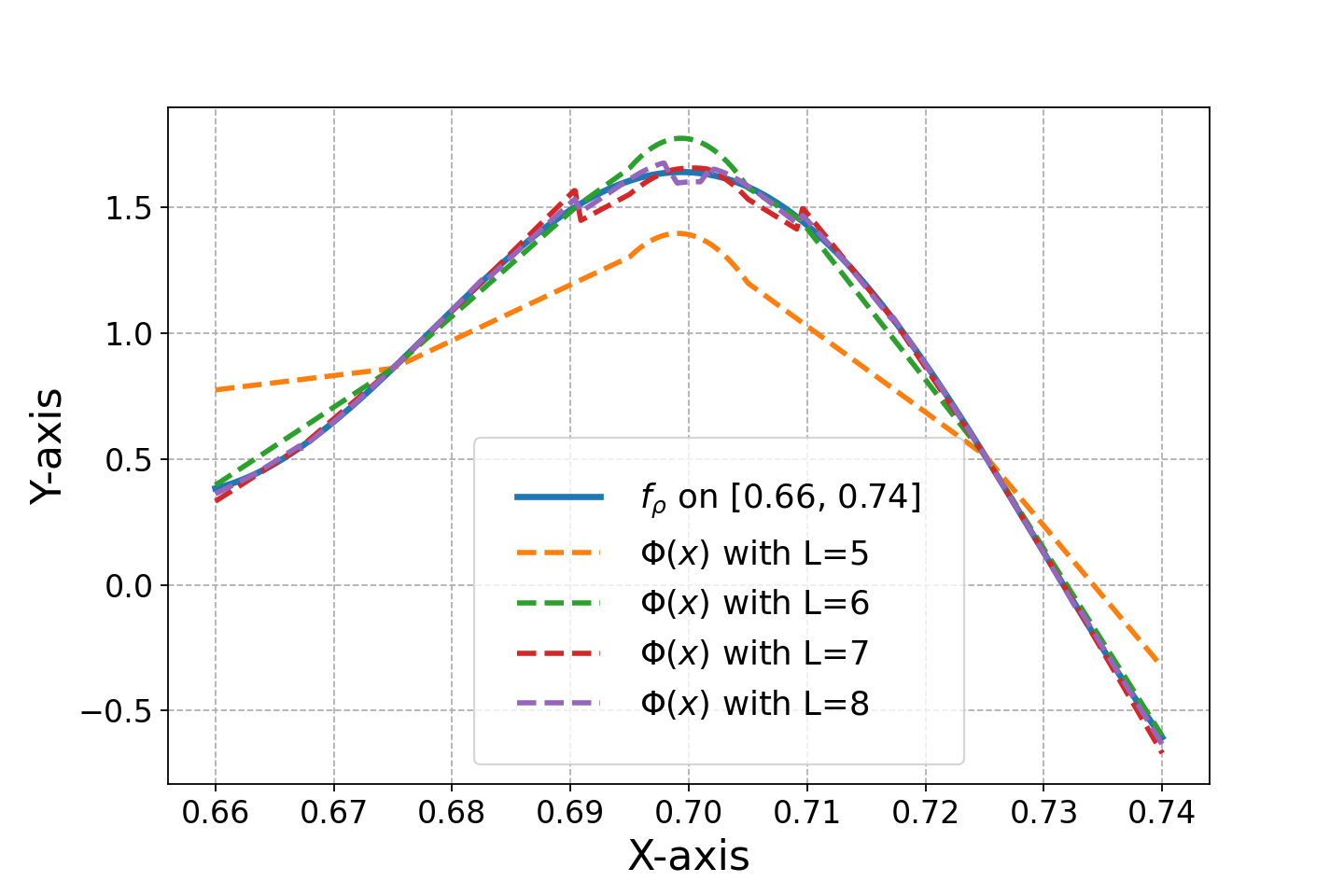}
    \caption{Approximation behavior of the final deep ReLU network $\Phi$ with increasing layers}
   \label{fig_approx_bspline}
\end{figure}

\noindent\textbf{Results.} 
Figure \ref{fig_power} shows how the approximation accuracy $\epsilon_p=\|\psi_p(x) - x^p\|_\infty$  varies as the number of layers $L$ increases for each \(p \in \{2,\ldots,7\}\).
Here, $\epsilon_p$ denotes the infinity-norm between the subnetworks \(\psi_p(x)\) and the power functions \(x^p\). 
Since the infinity norm is taken over all $x\in[0,1]$, we approximate it by taking the maximum absolute error over 20{,}000 uniformly spaced points on \([0,1]\) \cite{quarteroni2006numerical}. 
In Figure \ref{fig_power},  the horizontal axis represents the number of layers \(L\), and the vertical axis shows \(-\log(\epsilon)\).
A larger value of \(-\log(\epsilon)\) indicates  higher  approximation accuracy. 
The plots reveal a striking linear relationship between \(-\log(\epsilon)\) and \(L\), 
indicating that the approximation error decreases \textit{exponentially fast} with respect to the number of layers.
Since the number of layers and nonzero parameters of $\psi_p$ are of the same order (also see Table \ref{tabel_nn}).
Figure \ref{fig_power} implies that we only need $O(\log(1/\epsilon))$ number of layers and nonzero parameters to achieve the approximation accuracy $\epsilon$. 
In addition, functions $x^p$ with smaller degrees \(p\) exhibit better approximation accuracy.

Figure \ref{fig_approx_bspline} illustrates the local approximation behavior of the constructed deep ReLU networks \(\Phi\) for different number of layers \(L = 5, 6, 7, 8\). 
We focus on a representative subinterval \([0.66, 0.74]\). 
The blue solid curve corresponds to the B-spline \(f_\rho\), while the dashed curves represent the network outputs \(\Phi(x)\) with different depths. 
From Figure~\ref{fig_approx_bspline}, we can see that as the the number of layers \(L\) increases, the approximation becomes progressively more accurate. Specifically, for \(L=5\), the network still exhibits visible deviation from the target, whereas for \(L \ge 7\), the curves nearly coincide with \(f_\rho\), capturing both the peak and the shape of $f_\rho$ with high precision. 

\begin{figure}[ht]
    \centering
    
    \hspace{-0.4cm}\subfloat[\footnotesize$-\log(\epsilon)$ vs the number of layers]{\includegraphics[width=0.45\linewidth]{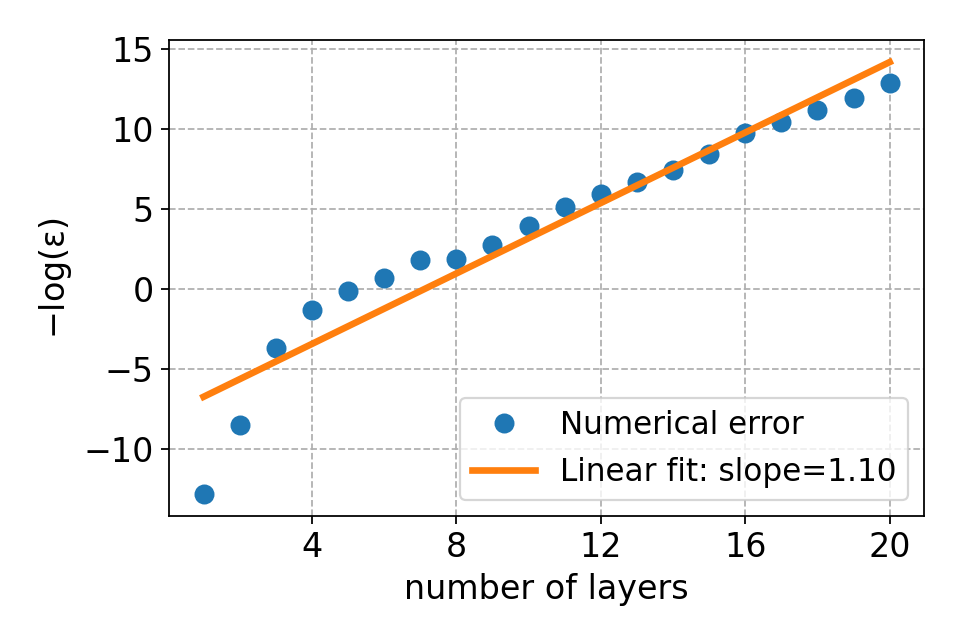}
    \label{fig_layers}}
    \subfloat[\footnotesize $-\log(\epsilon)$ vs the number of nonzero parameters]{\includegraphics[width=0.48\linewidth, height=4.75cm]{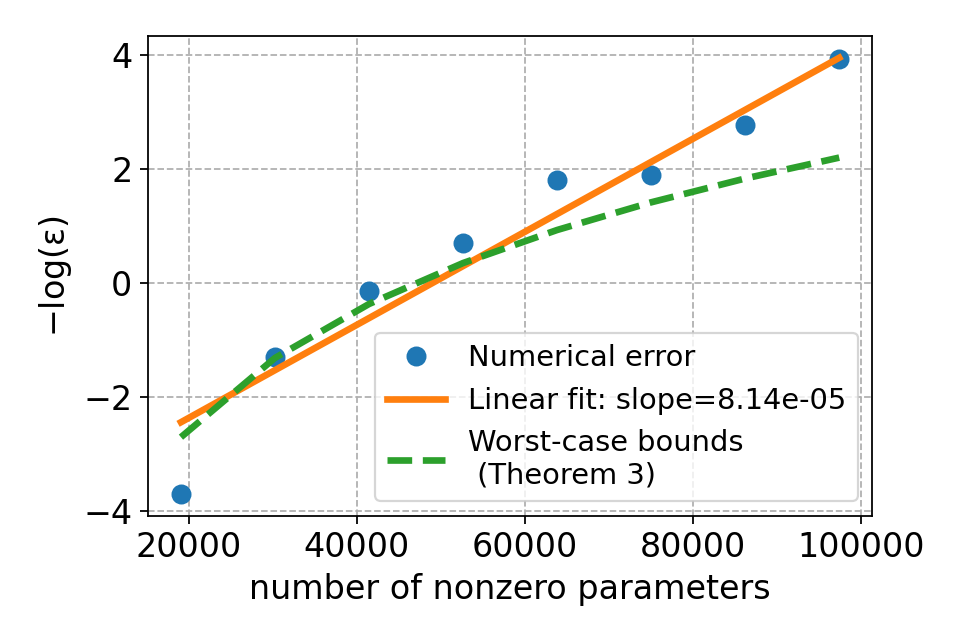}
    \label{fig_params}}
    
    \caption{\small Scaling of approximation error with network size. Here, 
    $\epsilon = \|\Phi(x) - f_\rho(x)\|_\infty$ and the numerical error (blue point) denotes the $-\log(\epsilon)$ as $L$ and $W$ increase.
    The orange line is the linear fit of the numerical errors, showing the approximation error decays exponentially as the number of parameters increases.
    }\label{Fig_networksize}
\end{figure}

Figure~\ref{Fig_networksize} quantitatively verifies the approximation error
bounds in Theorem~\ref{thm:approx_error}.
Theorem~\ref{thm:approx_error} states that, to approximate a target function by deep ReLU network with accuracy $\epsilon$,
one needs to take the number of layers
\(L = O(\log(1/\epsilon))\) and number of nonzero weights
\(W = O(\epsilon^{-c_0})\) \textit{at most}, where \(c_0>0\) is a constant.
Figure~\ref{fig_layers} shows an almost perfectly linear relation between
\(-\log(\epsilon)\) and \(L\), matching the theoretical scaling
\(L \sim \log(1/\epsilon)\).
Figure~\ref{fig_params} also exhibits a linear relation between
\(-\log(\epsilon)\) and \(W\), implying
\(\epsilon \approx \exp(-\alpha W)\) (equivalently,
\(W \approx \alpha^{-1}\log(1/\epsilon)\)), which is smaller than the worst–case requirement \(W \sim \epsilon^{-c_0}\) (green dashed curve).
The gap arises from the fact that our target function $f_\rho$ is fixed,
whereas the worst-case parameter requirements in Theorem~\ref{thm:approx_error} are established for approximating arbitrary $f_\rho$.

\subsection{Non-transitive Pairwise Interactions}  
The aim of this experiment is to show the superiority of our model compared to the well-known RankNet. 
We model the pairwise preference probability between two items \(x, x'\) as 
\[
P_{x,x'} \!=\! P\big(y \!>\! y'\big| x, x'\big) 
\!=\! \sigma\bigl(\!\beta \,[s(x)-s(x')] + \gamma R(x,x')\!\bigr)
\]
where \(\sigma(z) = (1+e^{-z})^{-1}\) is the logistic function. 
The first term \(s(x)-s(x')\) corresponds to the standard score difference used in pairwise ranking models such as RankNet. 
The second term \(R(x,x')\) is a pairwise, anti-symmetric interaction term that cannot be represented as the difference of two univariate functions. 
The hyperparameters \(\beta\) and \(\gamma\) control the relative contributions of these two components. 
Unlike the score difference term, \(R(x,x')\) can be non-transitive, i.e., there exist items \(x_A, x_B, x_C\) such that
\[
R(x_A,x_B) > 0,   R(x_B,x_C) > 0,   \text{ but }   R(x_A,x_C) < 0.
\]
As a result, when \(\gamma\) is relatively large compared to \(\beta\), the model can produce cyclic preferences
\[
P_{x_A,x_B} > \tfrac{1}{2}, \quad 
P_{x_B,x_C} > \tfrac{1}{2}, \quad 
P_{x_A,x_C} < \tfrac{1}{2},
\]
corresponding to \(x_A \succ x_B \succ x_C \succ x_A\).

\noindent{\bf Preference probability setup.}
We set the score function $s(x) = 2\sin(2\pi x)$.
Let $c_1=0.25$, $c_2=0.50$, $c_3=0.75$ and bandwidth $\tau = 0.06$. Define Radial Basis Function (RBF) memberships and the anti-symmetric matrix
\[
\alpha_r(x)=\frac{\exp\!\big(\!-(x-c_r)^2/(2\tau^2)\big)}
{\sum_{i=1}^3\exp\!\big(-(x-c_i)^2/(2\tau^2)\big)},  r\in\{1,2,3\}
\]
and
\[
K=\begin{pmatrix}0&+5&-5\\-5&0&+5\\+5&-1&0\end{pmatrix}.
\]
Then, we define the anti-symmetric interaction term $R(x,x')=\sum_{i,j=1}^3\alpha_i(x)\,K_{ij}\,\alpha_j(x')$.

\noindent{\bf Experiments Setup.}
Inputs $x$ are i.i.d.\ from a mixture on $[0,1]$ (70\% three Gaussians centered at $(c_1,c_2,c_3)$ with std $0.06$, 30\% uniform). Pairs $(x,x')$ are sampled uniformly.
For each pair we draw $U_1,\dots,U_{10}\stackrel{iid}{\sim}\mathrm{Unif}(0,1)$, compute $\bar U=\tfrac{1}{10}\sum_{m=1}^{10}U_m$, and assign the label
\[
y \;=\; \begin{cases}
+1,& \bar U<P_{x,x'},\\
-1,& \text{otherwise}.
\end{cases}
\]
This yields low-variance stochastic labels.
We generate \(80{,}000\) training pairs and \(40{,}000\) testing pairs from the above distribution.
Our proposed model uses the pairwise function \(f(x,x') - f(x',x)\), and RankNet uses the score difference \(g(x) - g(x')\). 
Both the underlying neural functions \(f\) and \(g\) are fully connected three-layer ReLU networks with a hidden width of \(16\). 
We train the models using the Adam optimizer with a batch size of \(4{,}000\) and a learning rate of \(4 \times 10^{-3}\). 
Each model is trained for \(150\) epochs, and all experiments are repeated five times, we report the averaged results over these runs.

\noindent{\bf Results.}
Figure~\ref{figs_rank_pairwise} shows the testing accuracy curves across different \((\beta,\gamma)\) settings.
Specifically, we vary \(\beta \in \{0, 2\}\) and \(\gamma \in \{2, 1, 0.5, 0\}\), where larger \(\beta\) emphasizes the univariate scoring difference component \(s(x)-s(x')\), and larger \(\gamma\) emphasizes the pairwise interaction \(R(x,x')\).
Our results show that when \(\beta = 0.0\) and \(\gamma = 2.0\), RankNet achieves an accuracy close to \(50\%\), 
which is equivalent to random guessing, indicating that it fails to learn any meaningful structure in the absence of score differences.
While our model fits the data well and achieves high accuracy.
When \(\beta=2\) and \(\gamma\) is large, our model again significantly outperforms RankNet, which cannot represent the pairwise interaction structure induced by \(R\). 
As \(\gamma\) decreases, the pairwise effect weakens and the performance gap narrows. 
When \(\gamma = 0\), both models coincide and achieve similar accuracy as expected.

\begin{figure*}[ht]
    \centering
    \begin{minipage}{0.32\linewidth}
        \centering        \includegraphics[width=0.95\linewidth]{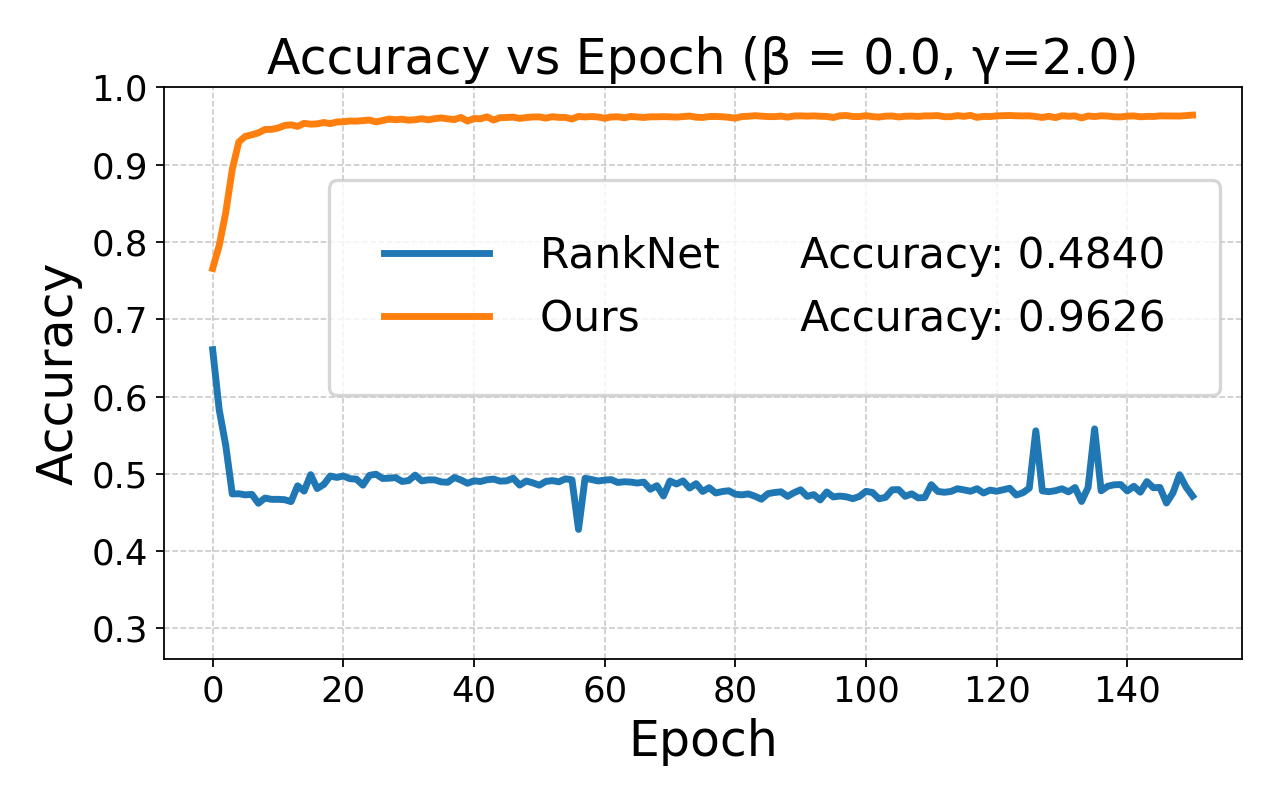}\label{Fig_beta0.0}
    \end{minipage}
    \begin{minipage}{0.32\linewidth}
        \centering        \includegraphics[width=0.95\linewidth]{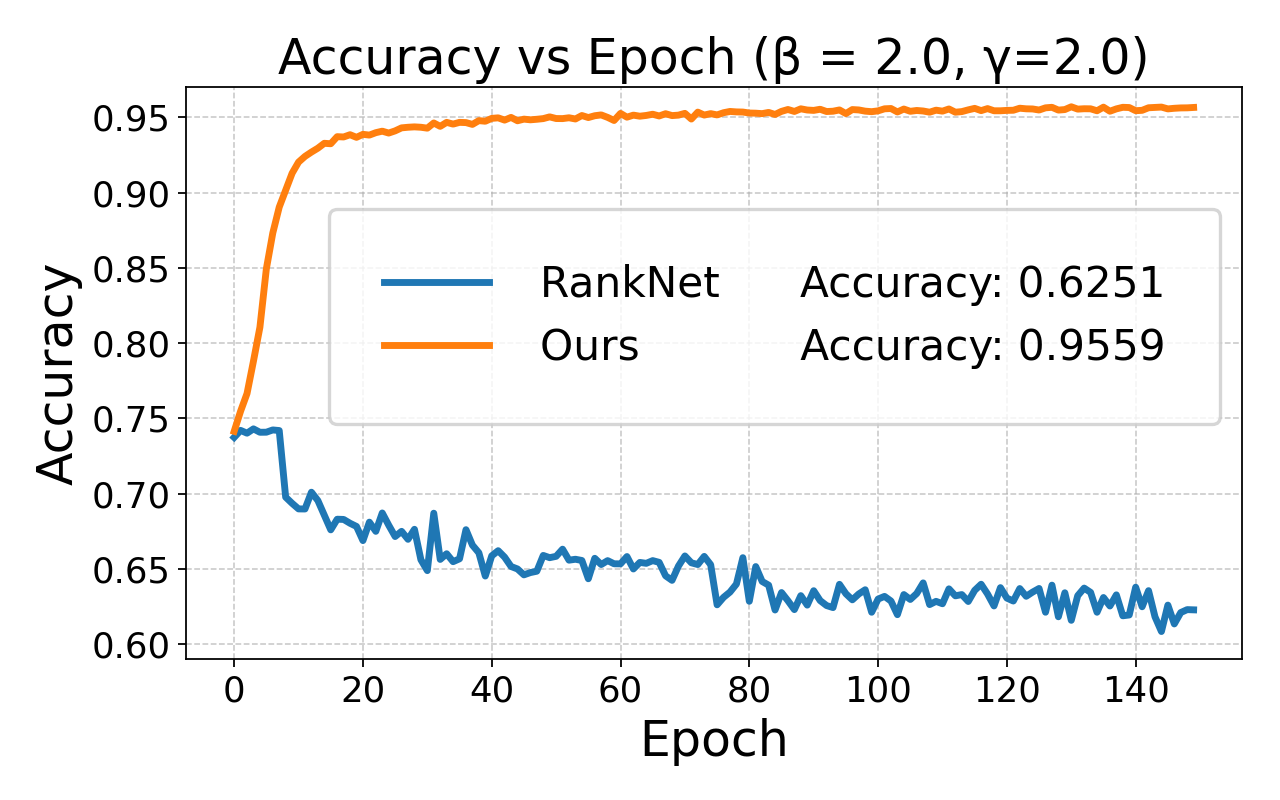}\label{Fig_gamma2.0}
    \end{minipage}
    \begin{minipage}{0.32\linewidth}
        \centering        \includegraphics[width=0.95\linewidth]{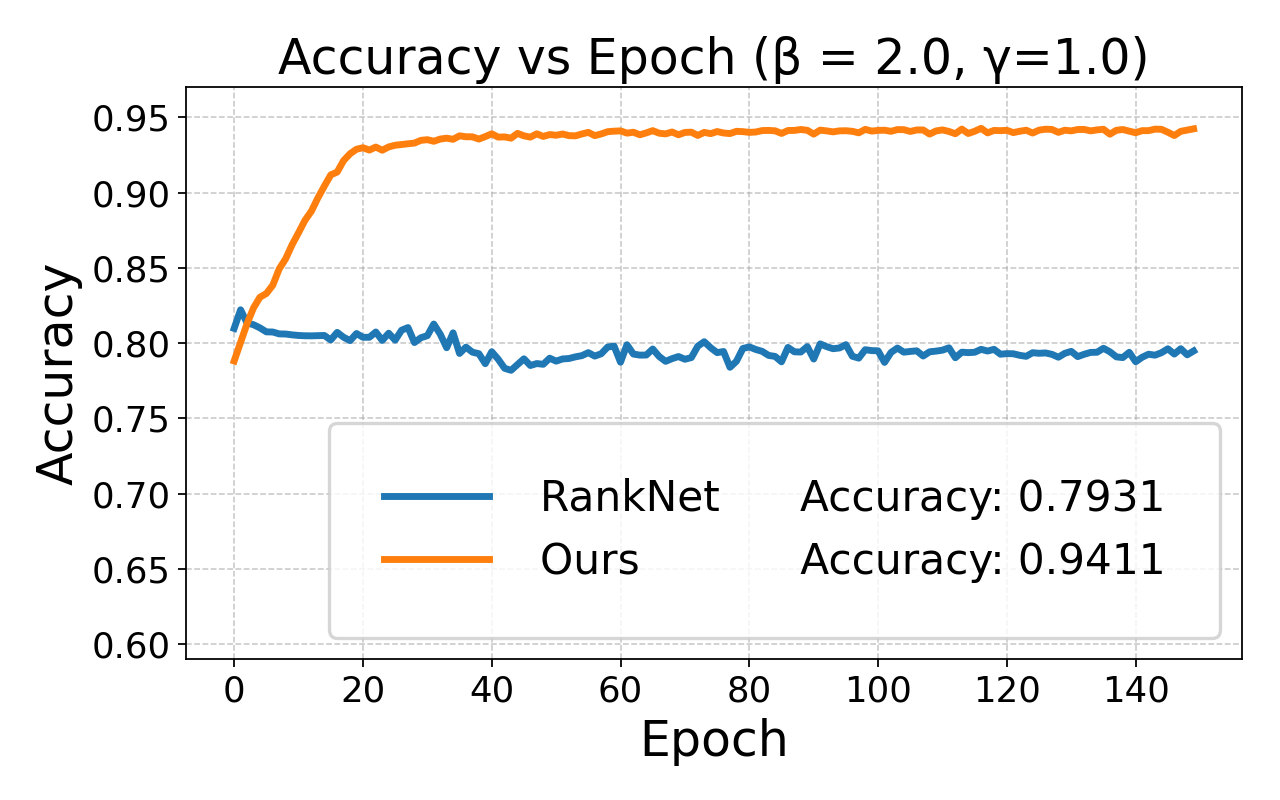}\label{Fig_gamma1.0}
    \end{minipage}
    \begin{minipage}{0.32\linewidth}
        \centering        \includegraphics[width=0.95\linewidth]{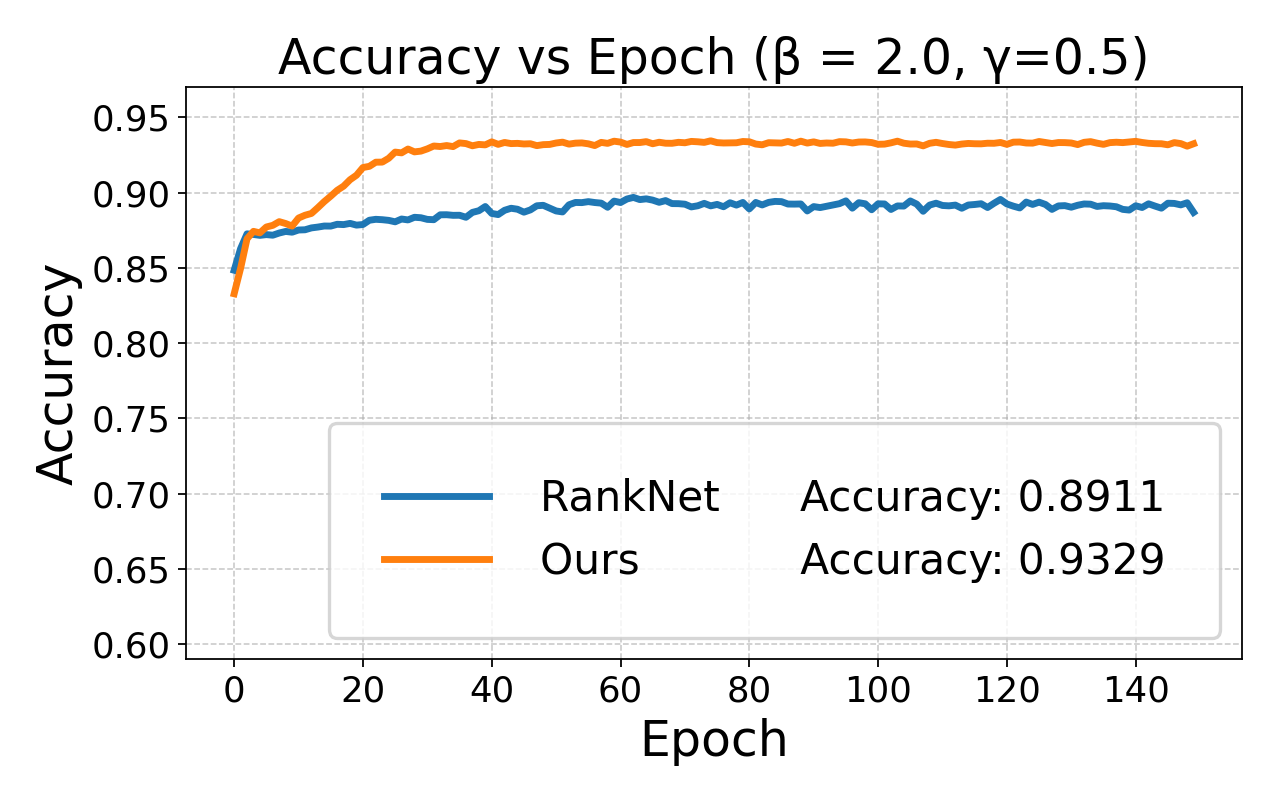} \label{Fig_gamma0.5}
    \end{minipage}
    \begin{minipage}{0.32\linewidth}
        \centering        \includegraphics[width=0.95\linewidth]{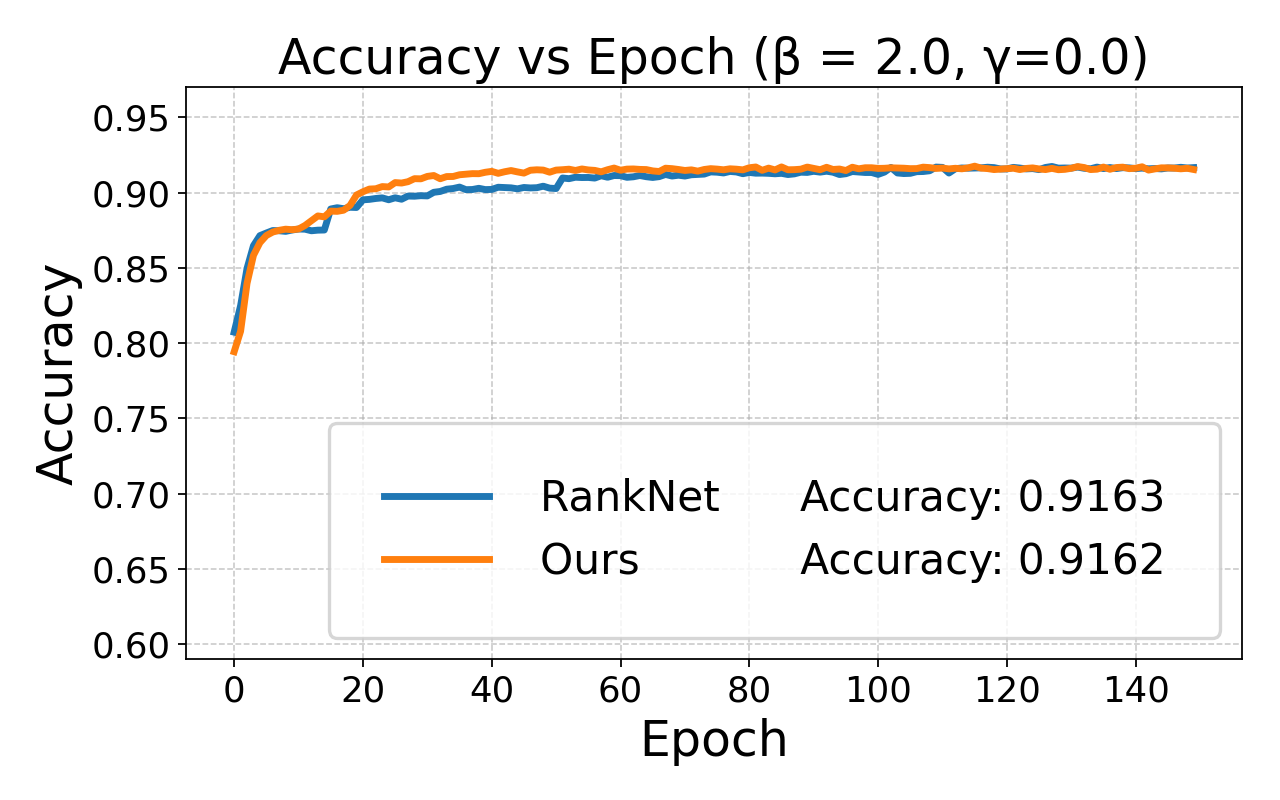}  \label{Fig_gamma0.0}
    \end{minipage}\caption{\small Testing accuracy curves on synthetic data with different settings of \(\beta\) (score strength) and \(\gamma\) (pairwise interaction strength). 
    RankNet fails when the pairwise interaction dominates (\(\gamma = 2\), \(\beta = 0\)) or is strong (\(\gamma=2 \text{ or } 1, \beta=2\)), whereas our model consistently achieves high accuracy. 
    When \(\gamma = 0\), both models coincide.}\label{figs_rank_pairwise}
\end{figure*}

\subsection{Real-world data}
In this experiment, we evaluate our model and the model proposed in  \cite{HZFZ} on the MQ2008 dataset (fold~1, LETOR).
Since Huang’s model is only defined under the hinge loss, we adopt the hinge loss for both models to ensure a fair comparison.

\noindent{\bf Dataset and protocol.}
Each document is represented by 46 features.
Pairs are constructed under the standard pairwise learning-to-rank protocol on the
TRAIN split. We report \emph{test accuracy} per epoch on the TEST split and average
all curves over five independent runs (mean across seeds).

\noindent{\bf Experiments Setup.}
We compare our model
$F(x,x') = f(x,x')-f(x',x)$ with $F(x,x') = h_\xi(f(x, x') - f(x', x))$ proposed in  \cite{HZFZ} using matched capacity.
Here, $h_\xi(t) = \frac{1}{\xi}\big(\sigma (t+\xi) - \sigma(t - \xi) - \xi\big)$ is the one-layer ReLU network used to approximate $\text{sign}(t)$ and $\xi = O\big((\frac{\log^2(n)}{n})^c\big)$, where $n$ is the sample size and $c\in(0, 1)$ is some constant.
Note that the parameter $\xi$ in  \cite{HZFZ}  decreases to $0$ as the sample size increases.
Then, we vary $\xi\in\{0.01, 0.005, 0.001\}$.
Both models employ
a two-hidden-layer neural function $f(x, x')$ with widths $(256,100)$ and ReLU activations, followed by
dropout rates $(0.5, 0.2)$. We optimize the \emph{pairwise hinge loss} with
margin $m=1.0$ using Adam optimizer (learning rate $3\times 10^{-4}$). Mini-batch size is
set to a fixed fraction of the training set (batch\_ratio $=1/20$). Training runs
for 10 epochs, and results are averaged across five repeats.

\noindent{\bf Results.}
Figure~\ref{fig:MQ2008} shows the test accuracy versus training epochs for our model and  \cite{HZFZ}  under different values of the outer ReLU parameter~$\xi$. 
Our model consistently achieves the highest and most stable test accuracy (around~0.84) across all epochs. 
For model proposed in \cite{HZFZ}, the performance gradually degrades as~$\xi$ decreases. 
When~$\xi = 0.01$, the model performs reasonably well, but as~$\xi$ further decreases to~$0.005$ and~$0.001$, both the average accuracy and the stability across epochs noticeably decline. 
This trend aligns with the theoretical observation that the outermost ReLU in \cite{HZFZ} is used to approximate the discontinuous $\mathrm{sign}(x)$ function, where the approximation parameter $\xi = O((\log n / n)^c)$ shrinks to $0$ as the sample size increases. 
Consequently, a smaller~$\xi$ makes the activation more sharply nonlinear and difficult to optimize, leading to unstable training and inferior generalization. 
In contrast, our model avoids this sensitivity to~$\xi$ and maintains superior generalization performance, confirming the robustness and effectiveness of our design.
\begin{figure}[ht] 
    \centering
    \includegraphics[width=0.5\linewidth]{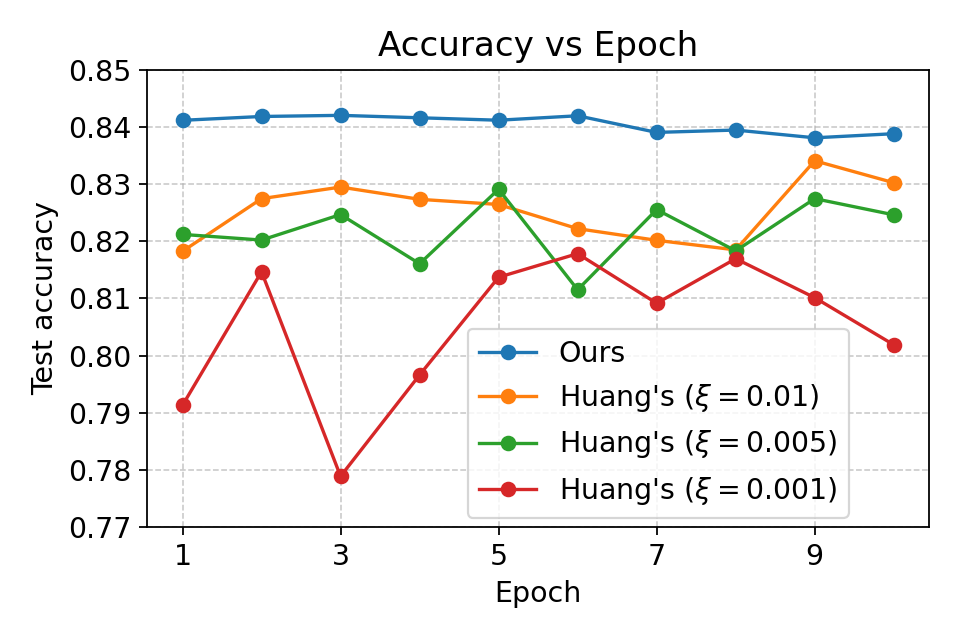}
    \caption{Test accuracy versus epoch for our model and \cite{HZFZ} with different $\xi$ values.}\label{fig:MQ2008}
\end{figure}

    \section{Conclusion}\label{sec:conclusion}
In this paper, we provide generalization analysis of pairwise learning under general settings. Specifically, we establish an oracle inequality of the empirical minimizer in the order of $O((\frac{\log(n)}{n})^{\frac{1}{2-\beta}})$ for a general hypothesis space with the Lipschitz continuous and symmetric loss. Our general results greatly relax the conditions in previous works and can be applied to various learning  problems. As an example, we apply our general results to conduct comprehensive generalization analysis for pairwise learning with structured deep ReLU networks. In particular, the excess population risk bound of order $O(n^{-\frac{2r}{2r+d}})$ that matches the minimax lower bound is achieved for pairwise least squares regression.  Experiments show the effectiveness and efficiency of our proposed methods.

\bibliography{learning}
\bibliographystyle{plain}
\appendix

\begin{center}
   {\Large \bf  Appendix}
\end{center}

\section{Proofs for an oracle inequality}\label{sec:proof-main}

	In this section, we present the proof of Theorem 1. The oracle inequality established in Theorem 1 is obtained by deriving the upper bounds of the estimation error $S(\calH) = \E(\hat{f}_z) - \E(f_\calH)$. In classical statistical learning theory with pointwise loss function $\ell:\R\times\Y\to\R$ \cite{cucker,empirical}, the estimation error is often bounded by $2\sup_{f\in\calH}|\bE[\ell(f(X),Y)] - \avg \ell(f(X_i),Y_i)|$, where the analysis of this empirical term heavily depends on the independence of $\{\bE[\ell(f(X),Y)] - \ell(f(X_i),Y_i)\}_{i=1}^n$. However, for pairwise learning with loss $\ell$ and predictor $f\in\calH$, the terms $\{\ell(f(X_i,X_j), Y_i, Y_j)\}_{i\neq j}^n$ in the double-index summation $\E(f) - \E_z(f)$ are dependent, which cannot be handled by the standard techniques in the empirical process theory directly. We consider employing the Hoeffding decomposition \cite{hoeffding} to overcome this dependency difficulty.
 
    Given i.i.d. sample $S = \{Z_i\}_{i=1}^n$ and a symmetric kernel $q:\Z\times\Z\to\R$, an U-statistic $U_n$ associated with kernel $q$ is defined as $U_n = \biavg q(Z_i,Z_j)$. A degenerate U-statistic $W_n$ is an U-statistic such that $\bE[W_n|Z_i] = 0$ for any $i\in\{1,\cdots,n\}$. Hoeffding decomposition breaks $\bE[U_n] - U_n$ into the summation of an i.i.d. term and a degenerate U-statistic term. \ie
	\begin{align*}
		\bE[U_n] - U_n =  2T_n + W_n ,
	\end{align*}
    where
    \begin{align*}
        h(Z_i) &= \bE[U_n] - \bE[q(Z_i, Z) | Z_i],\ 
        T_n = \frac{1}{n} \sum_{i=1}^n h(Z_i),\\
        \hat{h}(Z_i,Z_j) &= \bE[U_n] - h(Z_i) - h(Z_j) - q(Z_i, Z_j),\ 
        W_n = \frac{1}{n(n-1)}\sum_{i\neq j} \hat{h}(Z_i,Z_j).
    \end{align*}
	Note for any $i\in\{1,\cdots,n\}$, there holds $\mathbb{E}[W_n| Z_i] = 0.$ Then $W_n$ is a degenerate U-statistic.

	We now apply Hoeffding decomposition to the estimation error $\E(\hat{f}_z) - \E(f_\calH)$. For any $f \in \calH$, from $\ell(f(x,x'),y,y') = \ell(f(x',x),y',y)$ we know the kernel defined as $q_f(z, z'): = \ell(f(x,x'), y, y') - \ell(f_\rho(x,x'), y, y')$ is symmetric. Further, denote by $U_n^f, h_f, T^f_n, \hat{h}_f, W^f_n$ the corresponding Hoeffding decomposition terms associated with $q_f$. That is, $U_n^f  = \frac{1}{n(n-1)}\sum_{i\neq j}^n q_f(Z_i, Z_j), \ h_f(Z_i) = \mathbb{E}[q_f(Z_i,Z)|Z_i] - \mathbb{E}U_n^f$, $T^f_n = \avg h_f(Z_i)$, $\hat{h}_f(Z_i,Z_j) = \bE[U^f_n] - h_f(Z_i) - h_f(Z_j) - q_f(Z_i,Z_j)$, and $W^f_n = \biavg \hat{h}_f(Z_i,Z_j)$. Then the estimation error can be decomposed as 
	\begin{align}\label{eq:Hoeffding_decomp}
		\E(\hat{f}_z) - \E(f_\calH) 
  & = \E(\hat{f}_z) \!-\! \E_z(\hat{f}_z) \!+\! \E_z(\hat{f}_z) \!-\! \E_z(f_{\calH}) \!+\! \E_z(f_{\calH}) \!-\! \E(f_{\calH})\nonumber\\
        &\le\E(\hat{f}_z) \!-\! \E_z(\hat{f}_z) \!+\! \E_z(f_{\calH}) \!-\! \E(f_{\calH})\nonumber\\
		&= \{\E(\hat{f}_z) \!-\! \E(f_\rho) \!+\! \E_z(f_\rho) \!-\! \E_z(\hat{f}_z)\} \!+\! \{ \E_z(f_{\calH}) \!-\! \E_z(f_\rho) \!+\! \E(f_\rho) \!-\! \E(f_{\calH})\} \nonumber\\
		&= \{EU^{\hat{f}_z}_n - U^{\hat{f}_z}_n\} + \{U^{f_{\calH}}_n - EU^{f_{\calH}}_n\}\nonumber\\
		&= 2T^{\hat{f}_z}_n + W^{\hat{f}_z}_n - 2T^{f_{\calH}}_n - W^{f_{\calH}}_n\nonumber\\
		&= \{2T^{\hat{f}_z}_n - 2T^{f_{\calH}}_n\} + \{W^{\hat{f}_z}_n - W^{f_{\calH}}_n\}\nonumber\\
		&=: S_1(\calH) + S_2(\calH),
	\end{align}
	where in the first inequality we have used the fact that $\E_z(\hat{f}_z) \le \E_z(f_\calH)$. Here, $S_1(\calH)$ consists of i.i.d. terms which can be bounded by using standard techniques in empirical process theory. For bounding the degenerate U-statistics term $S_2(\calH)$, we exploit the decoupling methods in the theory of U-processes \cite{decoupling, ranking}. We estimate $S_1(\calH)$ and $S_2(\calH)$ in the following two subsections, respectively.

\subsection{Upper bounds for $S_1(\calH)$}\label{subsec:S_1}
    Let $\calG = \{g_f(z):= \mathbb{E}\big[\ell(f(x,X), y, Y) - \ell(f_\rho(x,X), y, Y)|X = x, Y = y\big]: f \in \calH\}$ be the function class consisting of the conditional expectation of the symmetric kernels $q_f$. From the definition of $g_f$,  we know $T_n^f = \bE[g_f(Z)] - \avg g_f(Z_i)$ and
    \begin{align}\label{eq:S_1}
        S_1(\calH) = 2\Big( \bE\big[g_{\hat{f}_z}(Z)\big] - \avg g_{\hat{f}_z}(Z_i)  + \avg g_{f_\calH}(Z_i) - \bE\big[g_{f_\calH}(Z)\big]\Big).
    \end{align}
    We estimate $\bE\big[g_{\hat{f}_z}(Z)\big] - \avg g_{\hat{f}_z}(Z_i)$ and $\avg g_{f_\calH}(Z_i) - \bE\big[g_{f_\calH}(Z)\big]$ separately. We will use the Bernstein concentration inequality to control the second term directly. The first term cannot be estimated by the Bernstein concentration inequality due to the appearance of the empirical risk minimizer $\hat{f}_z$ depending on the sample. We will bound it by using the tools (e.g., local complexities and sharp concentration inequalities) in empirical process theory \cite{Bousquet,local}.  

    To guarantee that the local complexity has good properties, we enlarge the function class by introducing the star-shaped class and the star-hull of a function class as follows. 
        \begin{Def}\label{def:star-shaped}
            A function class $\F$ is called a star-shaped class around $0$ if for any $f\in\F$ and $\alpha \in [0,1]$, $\alpha f\in\F$.
        \end{Def}
	\begin{Def}\label{def:star_hull}
		Given a function class $\F$. Denote by
        \begin{align}
            \F^* = \{\alpha f: \alpha \in[0,1], f\in\F\}
        \end{align} the star hull of $\F$ around $0$.
	\end{Def}
    Intuitively speaking, a star-shaped class around $0$ contains all the line segments between $0$ and any point in $\F$. The star-hull of $\F$ is the smallest star-shaped class that contains $\F$.

    Since the star hull $\calG^*$ contains $\calG$, we know the term $\bE\big[g_{\hat{f}_z}(Z)\big] - \avg g_{\hat{f}_z}(Z_i)$ in $S_1(\calH)$ can be bounded by $\sup_{g_f\in\calG^*}\left|\mathbb{E}[g_f] - \avg g_f(Z_i)\right|$. However, this bound is quite loose. Indeed, this upper bound controls the deviation from the generalization errors and empirical errors simultaneously over the whole class $\calG^*$, which might be much more larger than that of the empirical risk minimizer $\bE\big[g_{\hat{f}_z}(Z)\big] - \avg g_{\hat{f}_z}(Z_i)$.  To derive sharp error bounds, we introduce $\calG^*_r := \{g_f \in \calG^*: \mathbb{E}[g_f^2] \le r\}$ with $r>0$, a small subset of $\calG^*$ consisting of predictors satisfying a mild variance condition. The corresponding local complexity is defined as \begin{equation}\label{eq:local_complexity}
		\phi(r) = \mathbb{E}\left[\sup_{g_f \in \calG^*_r} \left|\mathbb{E}[g_f] - \avg g_f(Z_i)\right|\right].
	\end{equation}
   
    We will use the above notion of local complexity and the corresponding fixed point to introduce a sharp concentration inequality. To this end, we need to show that \eqref{eq:local_complexity} is a sub-root function \cite{local}.
	\begin{Def}
		A function $\psi:[0,\infty)\to[0,\infty)$ is sub-root if it is non-negative, nondecreasing and if $r \mapsto \psi(r)/\sqrt{r}$ is nonincreasing for $r > 0$.
	\end{Def}
	Intuitively, sub-root functions increase slowly. From \cite{local} we know they are continuous and have a unique positive fixed point $r^*$. \ie there exists a unique $r^*\in[0,\infty)$ such that $\psi(r^*) = r^*.$ It is clear that \eqref{eq:local_complexity} is non-negative and nondecreasing on $[0, \infty)$. Since $\calG^*$ is star-shaped, we can show $\phi(r)/\sqrt{r}$ is nonincreasing on $[0,\infty)$ by following the arguments in the proof of Lemma 3.4 in \cite{local}. Then, from the above definition we know \eqref{eq:local_complexity} is sub-root and has a unique fixed point $r^*$.
 
    The following lemma from \cite[Theorem 5.4]{Bousquet} establishes a sharp concentration inequality in terms of the fixed point of a sub-root function.
	
	\begin{lemma}[\cite{Bousquet}]\label{lemma:Bousquet}
		Let $M>0$ and $\beta\in[0,1]$, and $\F$ be a star-shaped class around $0$. Suppose $\F$ is uniformly bounded by a constant $b>0$ and has a variance-expectation bound with parameter pair $(\beta, M)$. Let $r^*$ be the unique fixed point of the following sub-root function
		$$\psi(r) = \mathbb{E}\left[\sup_{f \in \F: \bE[f^2] \le r} \left|\avg f(Z_i) - \bE[f]\right|\right].$$
		Then, for any $\delta \in (0,1)$ and $\kappa > 1$, with probability $1- \delta$, we have
		$$\forall f \in \F, \ \mathbb{E}[f] \le \dfrac{\kappa}{\kappa-1} \avg f(Z_i) + C_{\kappa,M,\beta}\left(\left(r^*\right)^{\frac{1}{2-\beta}} + \left(\dfrac{b\log(1/\delta)}{n}\right)^{\frac{1}{2-\beta}} \right).$$
	\end{lemma}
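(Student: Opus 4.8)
The plan is to combine Talagrand's concentration inequality for suprema of empirical processes (in Bousquet's refined form) with a rescaling/peeling argument that exploits both the star-shape of $\F$ and the sub-root property of $\psi$. Throughout, write $P_n f := \avg f(Z_i)$ and $\F_r := \{f \in \F : \bE[f^2] \le r\}$, and let $r^*$ be the fixed point of $\psi$. I would fix once and for all the radius $r := \max\{r^*,\, b\log(1/\delta)/n\}\ge r^*$, and apply the concentration inequality to the centered class $\{\pm(f - \bE[f]) : f \in \F_r\}$, whose members are uniformly bounded by $2b$ and have variance at most $r$. Bounding the expected supremum entering the variance proxy by $\psi(r)$ and absorbing the cross term via $\sqrt{b\,\psi(r)\log(1/\delta)/n}\le\tfrac12\psi(r)+\tfrac12 b\log(1/\delta)/n$, one gets, with probability at least $1-\delta$,
\[
\sup_{f\in\F_r}\bigl(\bE[f]-P_n f\bigr)\;\le\;2\psi(r)+C\sqrt{\tfrac{r\log(1/\delta)}{n}}+C\tfrac{b\log(1/\delta)}{n}.
\]
The crucial observation is that, $\psi$ being sub-root, $\psi(r)/\sqrt r\le\psi(r^*)/\sqrt{r^*}=\sqrt{r^*}$, so $\psi(r)\le\sqrt{r r^*}$ and the right-hand side is at most $\Delta:=C(\sqrt{r r^*}+\sqrt{r\log(1/\delta)/n}+b\log(1/\delta)/n)$. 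Only this single application of the concentration inequality is needed — the sub-root property replaces what would otherwise be a union bound over a geometric grid of radii.

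Next I would transfer this slice bound to all of $\F$. For $f\in\F$, set $\alpha:=\min\{1,\sqrt{r/\bE[f^2]}\}\in(0,1]$; by star-shape, $g:=\alpha f\in\F_r$ since $\bE[(\alpha f)^2]\le r$, so $\alpha(\bE[f]-P_n f)=\bE[g]-P_n g\le\Delta$, i.e. $\bE[f]-P_n f\le\max\{1,\sqrt{\bE[f^2]/r}\}\,\Delta\le\Delta+\sqrt{\bE[f^2]/r}\,\Delta$. The variance--expectation bound $\bE[f^2]\le M(\bE[f])^\beta$ then converts $\sqrt{\bE[f^2]/r}$ into $\sqrt{M/r}\,(\bE[f])^{\beta/2}$, so every $f\in\F$ obeys $\bE[f]\le P_n f+\Delta+A(\bE[f])^{\beta/2}$ with $A:=\sqrt{M/r}\,\Delta$; since $\Delta\le Cr$ by the choice of $r$ and $b\ge1$, also $A\le C\sqrt M\,\sqrt r$.

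Finally I would resolve this self-referential inequality. As $\beta/2\le1/2<1$, Young's inequality with conjugate exponents $2/\beta$ and $2/(2-\beta)$ gives, for any $\kappa>1$, $A(\bE[f])^{\beta/2}\le\tfrac1\kappa\bE[f]+C_{\kappa,\beta}A^{2/(2-\beta)}$, hence $\bE[f]\le\tfrac{\kappa}{\kappa-1}\bigl(P_n f+\Delta+C_{\kappa,\beta}A^{2/(2-\beta)}\bigr)$. Substituting $A\le C\sqrt M\sqrt r$ yields $A^{2/(2-\beta)}\le C_{M,\beta}\,r^{1/(2-\beta)}$, and because $r=\max\{r^*,b\log(1/\delta)/n\}$ we have $r^{1/(2-\beta)}\le(r^*)^{1/(2-\beta)}+(b\log(1/\delta)/n)^{1/(2-\beta)}$; the additive $\Delta$ term is of the same (in fact smaller) order and is absorbed the same way. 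This is exactly the claimed bound, with constant $C_{\kappa,M,\beta}$.

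I expect the main obstacle to be the bookkeeping in the first two steps: correctly handling the expected-supremum term $\psi(r)$ that sits inside Talagrand's variance proxy (and invoking $\psi(r)\le\sqrt{r r^*}$ only where it is legitimate), and tracking how the rescaling factor $\alpha^{-1}=\sqrt{\bE[f^2]/r}$ couples to the variance--expectation exponent $\beta$, so that after solving the self-referential inequality the exponent comes out exactly $\tfrac1{2-\beta}$ and not larger. The concentration inequality itself, the Young's-inequality step, and the final unwinding of $r$ are routine once the radius has been fixed.
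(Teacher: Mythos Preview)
The paper does not prove this lemma at all: it is quoted verbatim as \cite[Theorem 5.4]{Bousquet} and used as a black box, so there is no ``paper's own proof'' to compare against. Your sketch is the standard route to results of this type (Bousquet's version of Talagrand's inequality on a single localized slice $\F_r$, the sub-root bound $\psi(r)\le\sqrt{rr^*}$ to replace peeling, star-shape to rescale an arbitrary $f$ into $\F_r$, and Young's inequality with exponents $2/\beta,\,2/(2-\beta)$ to close the self-referential bound), and it is essentially how the result is proved in the cited reference. One small point to tidy when you write it out: the step ``$\Delta\le Cr$ by the choice of $r$ and $b\ge1$'' uses $b\ge1$ to control $\sqrt{r\log(1/\delta)/n}\le r/\sqrt b$, which the lemma does not assume; it is cleaner to bound $A=\sqrt{M/r}\,\Delta$ termwise as $C\sqrt M\bigl(\sqrt{r^*}+\sqrt{b\log(1/\delta)/n}\bigr)$ directly (using $b\log(1/\delta)/(n\sqrt r)\le\sqrt{b\log(1/\delta)/n}$ from $r\ge b\log(1/\delta)/n$, and absorbing the bare $\sqrt{\log(1/\delta)/n}$ term into the $b$-term via the uniform bound $r\le b^2$), which gives the stated exponent without any side assumption on $b$.
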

	
    Lemma~\ref{lemma:Bousquet} with $\psi(r)=\phi(r)$ implies that $\bE[g_{\hat{f}_z}(Z)] - \avg g_{\hat{f}_z}(Z_i)\le\frac{1}{\kappa}\bE\big[g_{\hat{f}_z}(Z)\big] + C_{K,M,\beta}\big((r^*)^{\frac{1}{2-\beta}} + \big(\frac{b\log(1/\delta)}{n}\big)^{\frac{1}{2-\beta}} \big)$ with high probability. However, this upper bound that depends on the fixed point $r^*$ is too rough to explore the behavior of $ g_{\hat{f}_z} $. The following proposition introduces the estimates of $r^*$ in terms of the sample size $n$ and the capacity of the hypothesis space, which helps us get a more explicit convergence rate of $\bE[g_{\hat{f}_z}(Z)] - \avg g_{\hat{f}_z}(Z_i)$. 
	
	\begin{prop}\label{prop:fixedpoint}
        Let $r^*$ be the fixed point of the local complexity $(\ref{eq:local_complexity})$. Suppose we are under the same conditions as that in Theorem 1.
      \begin{itemize}
          \item  If the capacity of $\calH$ satisfies Assumption 3, then
		$$r^* \le C_{\eta, K} \max\{V_1,\log(s_1)\}\frac{\log(n)}{n}.$$
  \item    If the capacity of $\calH$ satisfies Assumption 4, then
        $$r^*\le C_{\eta,K}\max\bigg\{\sqrt{s_1'}\Big(\frac{1}{n}\Big)^{\frac{2}{2+V_1'}}, \frac{\log(n)}{n}\bigg\}.$$
      \end{itemize}  
	\end{prop}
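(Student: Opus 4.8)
The fixed point $r^*$ of the sub-root function $\phi$ defined in \eqref{eq:local_complexity} is bounded by the standard argument: since $\phi$ is sub-root, it suffices to exhibit some $r_0 > 0$ with $\phi(r_0) \le r_0$, which then forces $r^* \le r_0$. So the whole task reduces to controlling the expected supremum $\phi(r) = \bE\big[\sup_{g_f \in \calG^*_r} |\bE[g_f] - \avg g_f(Z_i)|\big]$ for a given level $r$, and then solving the resulting inequality in $r$.

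\textbf{Step 1: reduce to a covering-number bound on $\calG^*$.} First I would symmetrize via Rademacher averages and apply Dudley's entropy integral to bound $\phi(r)$ by $\frac{C}{\sqrt n}\int_0^{\sqrt r} \sqrt{\log \cN(\calG^*_r, L^2_{\mu_n}, \epsilon)}\, d\epsilon$, up to the relevant empirical $L^2$ metric. The key preliminary is to relate the covering number of $\calG$ (hence of its star hull $\calG^*$, which costs only a factor roughly $\frac1\epsilon$ in the entropy) to the covering number of $\calH$ in the $L^2_{\rho_\bx \times \mu_n}$ metric. This uses the Lipschitz Assumption~\ref{assump:Lipschitz}: since $g_f(z) = \bE[\ell(f(x,X),y,Y) - \ell(f_\rho(x,X),y,Y)\mid x,y]$, we get $|g_f(z) - g_{f'}(z)| \le K\, \bE[|f(x,X) - f'(x,X)| \mid x]$, so an $\epsilon$-net of $\calH$ in $L^2_{\rho_\bx\times\mu_n}$ (Definition~\eqref{def:mu_n}) yields a $K\epsilon$-net of $\calG$ in $L^2_{\mu_n}$ by Jensen. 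Assumption~\ref{assump:cover1} or \ref{assump:cover2} then gives the entropy bound.

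\textbf{Step 2: plug in and solve.} Under Assumption~\ref{assump:cover1}, $\log \cN(\calG^*_r, \cdot, \epsilon) \lesssim \max\{V_1, \log s_1\}\log(C\eta/\epsilon)$, so the entropy integral over $[0,\sqrt r]$ contributes a factor $\sqrt r \cdot \sqrt{\max\{V_1,\log s_1\}\log(1/r)}$ (roughly), and the uniform bound $b \asymp \eta K$ on $\calG^*$ supplies a lower cutoff; altogether $\phi(r) \lesssim \sqrt{\frac{\max\{V_1,\log s_1\}}{n}}\sqrt{r\log(n)} + \frac{\max\{V_1,\log s_1\}\log n}{n}$ after the standard bounded-entropy manipulation. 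Setting this $\le r$ and solving yields $r^* \lesssim_{\eta,K} \max\{V_1,\log s_1\}\frac{\log n}{n}$. Under Assumption~\ref{assump:cover2}, $\log\cN \lesssim s_1'\epsilon^{-V_1'}$ with $V_1' \in (0,1)$ so the entropy integral converges as a polynomial in $\sqrt r$: $\int_0^{\sqrt r}\sqrt{s_1'}\,\epsilon^{-V_1'/2}d\epsilon \asymp \sqrt{s_1'}\, r^{(2-V_1')/4}/(1-V_1'/2)$, giving $\phi(r) \lesssim \sqrt{s_1'/n}\, r^{(2-V_1')/4} + (\text{lower order})$; solving $\phi(r) \le r$ gives the term $\sqrt{s_1'}(1/n)^{2/(2+V_1')}$, and the $\frac{\log n}{n}$ term comes from the deterministic part of the concentration/truncation. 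Taking the max of the two contributions gives the claimed bound.

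\textbf{Main obstacle.} The delicate point is handling the star-hull $\calG^*$ rather than $\calG$ itself, and ensuring the local constraint $\bE[g_f^2]\le r$ is used correctly: one must verify that restricting to $\calG^*_r$ genuinely caps the Dudley integral at $\sqrt r$ (so that the bound scales like $\sqrt r$ near $r^*$, which is what makes $\phi$ sub-root and the fixed point small) rather than over the whole class. A secondary technical nuisance is that the relevant metric for $g_f$ is $L^2_{\mu_n}$ while the hypothesis-space capacity in Assumption~\ref{assump:cover1}/\ref{assump:cover2} is stated in $L^2_{\rho_\bx\times\mu_n}$; the Lipschitz-plus-Jensen reduction in Step~1 is exactly what bridges this gap, and getting the constants and the logarithmic factor right there (so that the final $\log(n)$, not $\log^2(n)$, appears) requires the usual careful bounded-difference / Bousquet-type bookkeeping. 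I would expect the bulk of the real work to be this entropy-integral estimate, with the sub-root fixed-point extraction being essentially mechanical once $\phi(r)\le r$ is in hand.
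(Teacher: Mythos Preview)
Your overall architecture is right --- symmetrize, Dudley, push the $\calG$-covering back to $\calH$ via the Lipschitz property and Jensen, absorb the star-hull at the cost of an extra $\lceil A/\epsilon\rceil$ factor, then solve a self-consistency inequality at the fixed point. That is exactly the skeleton of the paper's proof.

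However, there is a genuine gap at precisely the point you flag as the ``main obstacle,'' and your proposal does not actually close it. After symmetrization and chaining, the upper limit of the Dudley integral is the \emph{empirical} diameter $\sqrt{S}$ with $S=\sup_{g\in\calG^*_r}\avg g(Z_i)^2$, not $\sqrt r$. The local constraint in $\calG^*_r$ is on the \emph{population} second moment $\bE[g^2]\le r$, and there is no a priori reason that this caps the empirical $L^2$ radius at $\sqrt r$. Your Step~2 simply writes the integral over $[0,\sqrt r]$ and proceeds; that is the step that fails as written. The paper's fix is a self-referential bound: by a second symmetrization and the Ledoux--Talagrand contraction (applied to $t\mapsto t^2$, using the uniform bound $|g|\le 2K\eta$), one gets
\[
\bE[S]\ \le\ 16K\eta\,\phi(r)+r,
\]
and, using the star shape, a matching lower bound $\bE[S]\ge r$ (for $r$ below the class variance). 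Then one applies Jensen with the concave maps $x\mapsto\sqrt{x\log(C/x)}$ (Case~1) or $x\mapsto x^{(2-V_1')/4}$ (Case~2) to pull $\bE$ inside the integral bound, obtaining inequalities of the form
\[
\phi(r)\ \le\ C_{\eta,K}\sqrt{\tfrac{\max\{V_1,\log s_1\}}{n}}\,\sqrt{\big(C_{\eta,K}\phi(r)+r\big)\log\!\big(\tfrac{C_{\eta,K}}{r}\big)}
\]
in Case~1 (and the analogous polynomial version in Case~2). Only after this do you substitute $r=r^*$, use $\phi(r^*)=r^*$, and solve; the $\log(n)$ then emerges from $\log(1/r^*)$, not from a separate concentration step. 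Without the $\bE[S]\le C\phi(r)+r$ step your plan does not go through, so you should build that contraction argument into Step~2 explicitly.
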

\begin{proof}

    Define $\rho_n:= \avg \delta_{Z_i}$ the empirical measure on $\Z$. For any $g_1, g_2 \in \calG^*_r$, we have $\|g_1-g_2\|_{L^2_{\rho_n}}^2= \avg |g_1(Z_i) - g_2(Z_i)|^2$. Let $\{\epsilon_i\}_{i=1}^n$ be the i.i.d. Rademacher variables, \ie $Prob\{\epsilon_i = 1\} = Prob\{\epsilon_i = -1\} = 1/2$. Then, for $r \ge r^*$, according to the standard symmetrization method and the chaining lemma \cite{HDS}, there holds
	\begin{align}\label{ineq:entropy_integral}
		\phi(r) &\le 2\bE\bigg[\bE\bigg[\sup_{g\in\calG_r^*} \Big|\AVG \epsilon_i g(Z_i)\Big| \bigg| Z_1,\cdots,Z_n\bigg]\bigg]\nonumber\\ 
		&\le \dfrac{C}{\sqrt{n}}\mathbb{E}\Big[\int_0^{\sqrt{S}}\sqrt{\log\mathcal{N}(\calG_r^*,L^2_{\rho_n}, t)} dt\Big],
	\end{align}
	where $S:= \sup_{g\in\calG_r^*} \avg g(Z_i)^2$.

 Before proving the two cases, we need to control $\mathcal{N}(\calG_r^*,L^2_{\rho_n}, t)$ and $\bE[S]$. We first upper bound $\mathcal{N}(\calG_r^*,L^2_{\rho_n}, t)$ by the covering number of $\calH$.

    Let $\mathcal{M}$ be a $t$-net of $\calG$ and define $N:= \lceil \frac{A}{t} \rceil$, where $A:= \sup_{g_f\in\calG} \|g_f\|_{L^2_{\rho_n}}$. We claim that $\{\frac{i}{N} g_t: i=1,...,N, g_t \in \mathcal{M}\}$ is a $2t$-net of $\calG^*$. Indeed, for any $g_f^*\in\calG^*$. we know there exists a $g_f\in\calG$ and an $\alpha\in [i/N, (i+1)/N]$ for some $i \in \left\{0,...,N-1\right\}$, such that $g_f^* = \alpha g_f$. Notice that there exists a $g_t \in\M$ such that $\|g_t -g_f\|_{L^2_{\rho_n}} \le t$. Then we conclude that $\|(i/N)g_t - g_f^*\|_{L^2_{\rho_n}} = \|(i/N)g_t - \alpha g_f \|_{L^2_{\rho_n}} \le i/N\|g_t - g_f\|_{L^2_{\rho_n}} + |i/N - \alpha| \|g_f\|_{L^2_{\rho_n}} \le (i/N)t + t \le 2t$. Further, according to the Exercise $4.2.10$ in \cite{HDP} with the fact $\calG^*_r\subset\calG^*$, there holds
	\begin{align}\label{ineq:cover1}
        \mathcal{N}\Big(\calG_r^*, L^2_{\rho_n}, t\Big) \le \mathcal{N}\Big(\calG^*, L^2_{\rho_n}, t/2\Big) \le \mathcal{N}\Big(\calG, L^2_{\rho_n}, t/4\Big) \left\lceil \frac{A}{t}\right\rceil.
	\end{align}
    Now, we estimate the covering number of $\calG$ with respect to metric $L^2_{\rho_n}$. For any $g_{f_1}, g_{f_2} \in \calG$, there holds
	\begin{align*}
		\big\|g_{f_1} - g_{f_2}\big\|_{L^2_{\rho_n}}^2 & = \avg \mathbb{E}\big[\ell(f_1(X_i ,X), Y_i, Y) - \ell(f_2(X_i ,X), Y_i, Y) \big| X_i, Y_i\big]^2 \\
		& \le \avg \mathbb{E}\big[\big(\ell(f_1(X_i ,X), Y_i, Y) - \ell(f_2(X_i, X), Y_i, Y)\big)^2 \big| X_i, Y_i\big] \\
		&\le \frac{K^2}{n}\sum_{i=1}^n\bE\big[\big(f_1(X_i,X) - f_2(X_i,X)\big)^2\big|X_i\big]\\
		&=K^2 \big\|f_1 - f_2\big\|^2_{L^2_{\mu_n\times\rho_{\boldsymbol{x}}}},
	\end{align*}
	where in the first inequality we have used Jensen's inequality for conditional expectation, and in the second inequality we have used Lipschitz property of the loss (Assumption 2). Hence, the above inequality implies
    \begin{align}\label{ineq:cover2}
        \mathcal{N}\Big(\calG,L^2_{\rho_n},t\Big)\le \mathcal{N}\Big(\calH,L^2_{\mu_n\times\rho_{\boldsymbol{x}}},t/K\Big).
    \end{align}
 
    According to Assumptions 1 and 2, we know the constant $A = \sup_{g_f\in\calG} \|g_f\|_{L^2_{\rho_n}} \le  \sup_{g_f\in\calG} \|g_f\|_\infty \le K \sup_{f \in \calH} \|f - f_\rho\|_\infty \le 2K\eta$. Combining \eqref{ineq:cover1} and \eqref{ineq:cover2} together, we finally have \begin{align}\label{ineq:cover3}
        \mathcal{N}\Big(\calG_r^*, L^2_{\rho_n}, t\Big) \le \mathcal{N}\Big(\calH, L^2_{\mu_n\times\rho_{\boldsymbol{x}}}, t/4K\Big) \left\lceil \frac{2K\eta}{t}\right\rceil.
    \end{align}

    Now, we are in a position to derive upper and lower bounds for $\bE[S]$, which will be used in the estimates of the entropy integral later. Recall that $S= \sup_{g\in\calG_r^*} \avg g(Z_i)^2$. Let $\{Z_i'\}_{i=1}^n$ be an independent copy of $\{Z_i\}_{i=1}^n$, the upper bound can be estimated as follows
    \begin{align*}
		\mathbb{E}[S] &= \mathbb{E}\bigg[\sup_{g_f\in\calG_r^*} \avg g_f(Z_i)^2 - \mathbb{E}[g_f^2] + \mathbb{E}[g_f^2]\bigg]\\
		&\le 2\mathbb{E}\bigg[\sup_{g_f\in\calG_r^*} \avg\epsilon_i g_f(Z_i)^2\bigg] + \sup_{g_f\in\calG_r^*} \mathbb{E}[g_f^2] \\
		&\le 8K\eta \mathbb{E}\bigg[\sup_{g_f\in\calG_r^*} \avg \epsilon_i g_f(Z_i)\bigg] + r\\
		&= 8K\eta \mathbb{E}\bigg[\sup_{g_f\in\calG_r^*} \avg \epsilon_i g_f(Z_i) - \epsilon_i \mathbb{E}[g_f] + \epsilon_i \mathbb{E}[g_f]\bigg] + r\\
		&\le 8K\eta \mathbb{E}\bigg[\sup_{g_f\in\calG_r^*} \avg \epsilon_i \big(g_f(Z_i) - g_f(Z_i')\big)\bigg] + 8K\eta\sup_{g_f \in \calG^*_r} \mathbb{E}[g_f]\ \mathbb{E}\bigg[\Big|\avg \epsilon_i\Big|\bigg] + r\\
        &\le 8K\eta \mathbb{E}\bigg[\sup_{g_f\in\calG_r^*} \avg \epsilon_i \big(g_f(Z_i) - g_f(Z_i')\big)\bigg] + 8K\eta \sup_{g_f \in \calG^*_r} \mathbb{E}[g_f^2]^{\frac{1}{2}}\ \mathbb{E}\bigg[\Big|\avg \epsilon_i\Big|^2\bigg]^{\frac{1}{2}} + r\\
        &\le 8K\eta \mathbb{E}\bigg[\sup_{g_f\in\calG_r^*} \avg \epsilon_i \big(g_f(Z_i) - g_f(Z_i')\big)\bigg] + 4K\eta \Big(r + \frac{1}{n}\Big) + r\\
		&=C_{\eta, K} \Big(\mathbb{E}\bigg[\sup_{g_f\in\calG_r^*} \avg g_f(Z_i) - g_f(Z_i')\bigg] + r  +\frac{1}{n} \Big) \le C_{\eta, K}\big(\phi(r) + r + \frac{1}{n}\big),
	\end{align*}
	where in the first and third inequalities we have used the standard symmetrization method (see \cite{HDS}), and in the second inequality we have used the well-known Ledoux-Talagrand contraction principle \cite{ledoux} with $x \mapsto x^2$ and the definition of $\calG_r^*$,
    and in the fourth inequality we have used Jensen's inequality, and in the last second inequality we have used the definition of $\calG_r^*$ and $\sqrt{r/n} \le 1/2(r + 1/n)$, and in the last equality we have used the fact that $\{\epsilon_i\big(g(Z_i) - g(Z_i')\big)\}_{i=1}^n$ and $\{g(Z_i) - g(Z_i')\}_{i=1}^n$ are identically distributed.

    For the lower bound, we can show that there exists a $g_0 \in \calG_r^*$ such that $\bE[g_0^2] = r$ if $r < \sup_{g_f\in\calG} \bE[g_f^2]$. Indeed, if $r < \sup_{g_f\in\calG} \bE[g_f^2]$, by definition we know there exists a $g \in \calG$ such that $\bE[g^2] > r$. Setting $\alpha = \sqrt{\frac{r}{\bE[g^2]}} \in (0,1)$ and $g_0 := \alpha g$, we know $\bE[g^2_0] = r$ and $g_0 \in \calG^*_r$ since $\calG^*_r$ is star shaped. Therefore, we have the following lower bound
	\begin{align*}
		\bE[S] \ge \bE\left[\avg g_0(Z_i)^2\right] = \bE[g_0^2] = r.
	\end{align*}
    Now, we consider the following two cases.
    
    \noindent{\bf First case}: suppose Assumption 3 holds.
    
    According to \eqref{ineq:cover3}, there holds
    \begin{align*}
        \log\mathcal{N}\Big(\calG_r^*,L^2_{\rho_n}, t\Big) \le \log \bigg\{s_1\Big(\frac{1}{t}\Big)^{V_1} \bigg\lceil \frac{2K\eta}{t}\bigg\rceil \bigg\}\le C_{\eta, K} \max\{V_1,\log(s_1)\}\log\Big(\frac{2K\eta}{t}\Big).
    \end{align*}
    Then, \eqref{ineq:entropy_integral} can be bounded as follows
    \begin{align}\label{ineq:phi}
        \phi(r) &\le C_{\eta, K}\sqrt{\frac{\max\{V_1,\log(s_1)\}}{n}}\ \mathbb{E}\bigg[\int_0^{\sqrt{S}}\sqrt{\log\left(\frac{2K\eta}{t}\right)} dt \bigg] \nonumber\\
        &\le C_{\eta, K}\sqrt{\frac{\max\{V_1,\log(s_1)\}}{n}}\ \bE \bigg[\sqrt{2S\log\Big(\frac{C_{\eta,K}}{S}\Big)}\bigg]\nonumber\\
        &\le C_{\eta, K}\sqrt{\frac{\max\{V_1,\log(s_1)\}}{n}}\ \sqrt{\bE[S]\log\bigg(\frac{C_{\eta,K}}{\bE[S]}\bigg)}\nonumber\\
        &\le C_{\eta, K}\sqrt{\frac{\max\{V_1,\log(s_1)\}}{n}}\ \sqrt{\Big(\phi(r) + r + \frac{1}{n}\Big) \log\Big(\frac{C_{\eta,K}}{r}\Big)}.
    \end{align}
    where in the second inequality we have used Lemma 3.8 in \cite{mendelson}, and in the third inequality we have used Jensen's inequality since the function $x\mapsto \sqrt{x\log(C_{\eta,K}/x)}$ is concave, and in the last inequality we have used the upper and lower bounds of $\bE[S]$ obtained as above.
    
    Recall that $r^*$ is a fixed point of $\phi(r)$, \ie $\phi(r^*) = r^*$. Taking the limit $r \to r^*$ and by the continuity of $\phi(r)$ \cite{local}, there holds
	\begin{align*}
		&r^* \le C_{\eta,K} \sqrt{\frac{\max\{V_1,\log(s_1)\}}{n} \Big(r^* + \frac{1}{n}\Big) \log\left(\frac{C_{\eta,K}}{r^*}\right)}\\
		\implies & r^* \le C_{\eta,K} \max\{V_1,\log(s_1)\} \frac{\log(n)}{n}.
	\end{align*}
	The proof of the first case is complete.
 
    \noindent{\bf Second case}: suppose Assumption 4 holds.

    According to \eqref{ineq:cover3}, there holds
    \begin{align*}
        \log\mathcal{N}\Big(\calG_r^*,L^2_{\rho_n}, t\Big) \le s_1'\Big(\frac{1}{t}\Big)^{V_1'} + \log \bigg(\bigg\lceil \frac{2\eta K}{t}\bigg\rceil\bigg)
    \end{align*}
    Then, \eqref{ineq:entropy_integral} can be bounded as follows
    \begin{align*}
        \phi(r) &\le \frac{C}{\sqrt{n}} \bE\Bigg[\int_0^{\sqrt{S}} \sqrt{s_1'\Big(\frac{1}{t}\Big)^{V_1'}} + \sqrt{\log\bigg(\bigg\lceil \frac{2\eta K}{r}\bigg\rceil\bigg)} dt\Bigg]\\
        &\le \frac{C_{\eta,K}}{\sqrt{n}}\Bigg(\bE\Big[\int_0^{\sqrt{S}}\sqrt{s_1'}\Big(\frac{1}{t}\Big)^{\frac{V_1'}{2}}dt\Big] + \sqrt{\Big(\phi(r) + r + \frac{1}{n}\Big) \log\Big(\frac{C_{\eta,K}}{r}\Big)}\Bigg)\\
        &\le \frac{C_{\eta,K}}{\sqrt{n}}\Bigg(\frac{2\sqrt{s_1'}}{2-V_1'}\bE\Big[S^{\frac{2-V_1'}{4}}\Big] + \sqrt{\Big(\phi(r) + r + \frac{1}{n}\Big) \log\Big(\frac{C_{\eta,K}}{r}\Big)}\Bigg)\\
        &\le \frac{C_{\eta,K}}{\sqrt{n}}\Bigg(\frac{2\sqrt{s_1'}}{2-V_1'}\Big(\phi(r) + r + \frac{1}{n}\Big)^{\frac{2-V_1'}{4}} + \sqrt{\Big(\phi(r) + r + \frac{1}{n}\Big) \log\Big(\frac{C_{\eta,K}}{r}\Big)}\Bigg),
    \end{align*}
    where in the first inequality we have used the inequality $\sqrt{a + b} \le \sqrt{a} + \sqrt{b}$ for $a,b\ge0$, and the second inequality follows from \eqref{ineq:phi} directly, and in the last inequality we have used Jensen's inequality for the concave function $x \mapsto (x)^{\frac{2-V_1'}{4}}$ and together with the above upper bound on $\mathbb{E}[S]$.

    Taking the limit $r \to r^*$, there holds
    \begin{align*}
        r^* &\le \sqrt{\frac{C_{\eta,K}}{n}}\Bigg(\frac{\sqrt{s_1'}(r^* + 1/n)^{\frac{2-V_1'}{4}}}{2 - V_1'} + \sqrt{\Big(r^* + \frac{1}{n}\Big)\log\Big(\frac{C_{\eta,K}}{r^*}\Big)}\Bigg)\\
        \implies r^* &\le \sqrt{\frac{C_{\eta,K}}{n}}\max\bigg\{\frac{\sqrt{s_1'}(r^*)^{\frac{2-V_1'}{4}}}{2 - V_1'}, \sqrt{r^*\log\Big(\frac{C_{\eta,K}}{r^*}\Big)}, \sqrt{s_1'\Big(\frac{1}{n}\Big)^{\frac{2-V_1'}{2}} \log\Big(\frac{C_{\eta,K}}{r^*}\Big) }\bigg\}\\
        \implies r^*&\le C_{\eta,K}\max\bigg\{\sqrt{s_1'}(2-V_1')^{-\frac{4}{2+V_1'}} \Big(\frac{1}{n}\Big)^{\frac{2}{2+V_1'}}, \frac{\log(n)}{n}, \sqrt{s_1'\log(n)}\Big(\frac{1}{n}\Big)^{1 - \frac{V_1'}{4}}\bigg\}\\
        \implies r^*&\le C_{\eta,K}\max\bigg\{\sqrt{s_1'} \Big(\frac{1}{n}\Big)^{\frac{2}{2+V_1'}}, \frac{\log(n)}{n}\bigg\},
    \end{align*}
    where in the last inequality we have used the fact $(2-V_1')^{-\frac{4}{2+V_1'}} \le 1$ since $V_1'\in(0,1)$ and the inequality $\frac{\sqrt{\log(n)}}{n^{1-V_1'/4}} \le C\frac{1}{n^{2/(2+V_1')}}$.
The proof of the proposition is complete.
    \end{proof}
    
    Combining Lemma \ref{lemma:Bousquet} and Proposition \ref{prop:fixedpoint} together, and applying Bernstein concentration inequality to $\bE[g_{f_\calH}(Z)] - \avg g_{f_\calH}(Z_i)$, we obtain the following lemma which derives an upper bound for $S_1(\calH)$.

	\begin{lemma} \label{prop:S_1}
            Suppose we are under the same conditions as that in Theorem 1.
            \begin{itemize}
            \item  If the capacity of $\calH$ satisfies Assumption 3, then for any $\delta \in (0, 1/2)$, with probability at least $1 - \delta/2$, there holds
            \begin{align*}
                S_1(\calH) \le C_{\eta, K,M,\beta} \Big(\frac{\max\{\log(s_1),V_1\}\log(n)}{n}\Big)^{\frac{1}{2-\beta}}\log(4/\delta) + \frac{1}{2}\bE\big[g_{\hat{f}_z}(Z)\big] + \frac{\beta}{2} \D(\calH).
            \end{align*}
         \item    If the capacity of $\calH$ satisfies Assumption 4, then for any $\delta\in(0,1/2)$, with probability at least $1 - \delta/2$, there holds
         \begin{align*}
            S_1(\calH) &\le C_{\eta, K,M,\beta} \max\bigg\{ \sqrt{s_1'}\Big(\frac{1}{n}\Big)^{\frac{2}{(2+V_1')(2-\beta)}}, \Big(\frac{\log(n)}{n}\Big)^{\frac{1}{2-\beta}}\log(4/\delta)\bigg\}\\
            &+ \frac{1}{2}\bE\big[g_{\hat{f}_z}(Z)\big] + \frac{\beta}{2} \D(\calH).
         \end{align*}
      \end{itemize}
	\end{lemma}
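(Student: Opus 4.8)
The starting point is the decomposition \eqref{eq:S_1}, which expresses $S_1(\calH)$ as twice the sum of a sample-dependent deviation $\bE[g_{\hat{f}_z}(Z)]-\avg g_{\hat{f}_z}(Z_i)$ and a fixed-function deviation $\avg g_{f_\calH}(Z_i)-\bE[g_{f_\calH}(Z)]$. The plan is to control the first via the local-complexity concentration inequality of Lemma~\ref{lemma:Bousquet} applied to the star hull $\calG^*$, feeding in the fixed-point estimates of Proposition~\ref{prop:fixedpoint}, and to control the second via the one-sided Bernstein inequality for the fixed bounded function $g_{f_\calH}$.

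First I would verify that $\calG^*$ meets all the hypotheses of Lemma~\ref{lemma:Bousquet}. By construction $\calG^*$ is star-shaped around $0$, and it is uniformly bounded by $2K\eta$ because, by Jensen's inequality and Assumptions~\ref{assump:uniformly_bounded} and~\ref{assump:Lipschitz}, $|g_f(z)|\le K\,\bE\big[|f(x,X)-f_\rho(x,X)|\,\big|\,X=x\big]\le 2K\eta$ for each $f\in\calH$. For the variance--expectation bound, $\bE[g_f(Z)]=\E(f)-\E(f_\rho)\ge 0$, and Jensen's inequality for conditional expectation gives $\bE[g_f^2]\le\bE[q_f^2]\le M(\bE[q_f])^{\beta}=M(\bE[g_f])^{\beta}$, so $\calG$ has the pair $(\beta,M)$; for $\alpha\in[0,1]$,
\[
\bE[(\alpha g_f)^2]=\alpha^{2-\beta}\,\alpha^{\beta}\bE[g_f^2]\le\alpha^{2-\beta}M\big(\bE[\alpha g_f]\big)^{\beta}\le M\big(\bE[\alpha g_f]\big)^{\beta},
\]
so $\calG^*$ inherits the pair $(\beta,M)$. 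Since $\phi$ in \eqref{eq:local_complexity} is precisely the sub-root function appearing in Lemma~\ref{lemma:Bousquet} with $\F=\calG^*$ (sub-root with fixed point $r^*$, as already established), applying that lemma with $\kappa=4$ and confidence level $\delta/4$, and rearranging $\bE[f]\le\tfrac{4}{3}\avg f(Z_i)+\cdots$ into a bound on $\bE[f]-\avg f(Z_i)$ specialized to $f=g_{\hat{f}_z}\in\calG^*$, yields with probability at least $1-\delta/4$
\[
\bE[g_{\hat{f}_z}(Z)]-\avg g_{\hat{f}_z}(Z_i)\le\tfrac14\bE[g_{\hat{f}_z}(Z)]+C_{\eta,K,M,\beta}\Big((r^*)^{\frac{1}{2-\beta}}+\big(\tfrac{\log(4/\delta)}{n}\big)^{\frac{1}{2-\beta}}\Big).
\]

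Next I would bound the fixed-function deviation. Since $g_{f_\calH}$ is sample-independent, bounded by $2K\eta$, with $\bE[g_{f_\calH}^2]\le M(\bE[g_{f_\calH}])^{\beta}=M\,\D(\calH)^{\beta}$, the one-sided Bernstein inequality gives, with probability at least $1-\delta/4$,
\[
\avg g_{f_\calH}(Z_i)-\bE[g_{f_\calH}(Z)]\le\sqrt{\tfrac{2M\,\D(\calH)^{\beta}\log(4/\delta)}{n}}+\tfrac{4K\eta\log(4/\delta)}{3n}.
\]
Writing the square-root term as $\D(\calH)^{\beta/2}\cdot\sqrt{2M\log(4/\delta)/n}$ and applying Young's inequality with conjugate exponents $2/\beta$ and $2/(2-\beta)$---the free scalar chosen so that, after the factor $2$ in \eqref{eq:S_1}, the coefficient of $\D(\calH)$ is exactly $\beta/2$---splits it into $\tfrac\beta4\D(\calH)$ plus a remainder of order $C_{M,\beta}(\log(4/\delta)/n)^{1/(2-\beta)}$; when $\beta=0$ there is nothing to split and the term is already of order $n^{-1/2}=n^{-1/(2-\beta)}$.

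Finally I would combine: multiplying the last two displays by $2$ and summing as in \eqref{eq:S_1}, on the intersection of the two events (probability at least $1-\delta/2$ by a union bound),
\[
S_1(\calH)\le\tfrac12\bE[g_{\hat{f}_z}(Z)]+\tfrac\beta2\D(\calH)+C_{\eta,K,M,\beta}\Big((r^*)^{\frac{1}{2-\beta}}+\big(\tfrac{\log(4/\delta)}{n}\big)^{\frac{1}{2-\beta}}+\tfrac{\log(4/\delta)}{n}\Big).
\]
Inserting $r^*\le C_{\eta,L}\max\{V_1,\log s_1\}\tfrac{\log n}{n}$ under Assumption~\ref{assump:cover1}, resp.\ $r^*\le C_{\eta,L}\max\{\sqrt{s_1'}\,n^{-2/(2+V_1')},\log n/n\}$ under Assumption~\ref{assump:cover2}, and absorbing the residual $n^{-1/(2-\beta)}$- and $n^{-1}$-order terms into the leading one using the elementary facts $\log(4/\delta)\ge1$, $\max\{V_1,\log s_1\}\ge1$, $\log n\ge1$, $1/n\le n^{-1/(2-\beta)}\le(\log n/n)^{1/(2-\beta)}$, and---for Assumption~\ref{assump:cover2}---$\sqrt{s_1'}^{\,1/(2-\beta)}\le\sqrt{s_1'}$ together with $(\max\{a,b\})^{1/(2-\beta)}\le a^{1/(2-\beta)}+b^{1/(2-\beta)}$, produces exactly the two claimed bounds. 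The only step needing genuine care is the first one: propagating the variance--expectation bound, the uniform bound, and the nonnegativity of the first moment through the conditioning operation and the star-hull enlargement so that Lemma~\ref{lemma:Bousquet} legitimately applies, together with the constant bookkeeping needed to land precisely on the coefficients $\tfrac12$ and $\tfrac\beta2$; everything downstream is routine absorption of lower-order terms.
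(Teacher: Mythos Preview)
Your proposal is correct and follows essentially the same approach as the paper's own proof: decompose via \eqref{eq:S_1}, apply Lemma~\ref{lemma:Bousquet} with $\kappa=4$ to $\calG^*$ for the sample-dependent term, use the one-sided Bernstein inequality with Young's inequality (exponents $2/(2-\beta)$ and $2/\beta$) for the fixed-function term, then insert the fixed-point estimates from Proposition~\ref{prop:fixedpoint}. Your explicit verification that the variance--expectation bound and nonnegative first moment propagate through the conditioning $q_f\mapsto g_f$ and the star-hull enlargement is in fact more carefully spelled out than in the paper, and your bookkeeping of the Young-inequality scalar to land exactly on the coefficient $\tfrac{\beta}{2}$ after the factor $2$ from \eqref{eq:S_1} is a point the paper glosses over.
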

  \begin{proof}
    Recall that in \eqref{eq:S_1}, we have
	$$S_1(\calH) = 2\bigg(\mathbb{E}\big[g_{\hat{f}_z}\big] - \avg g_{\hat{f}_z}(Z_i) + \avg g_{f_\calH}(Z_i) - \mathbb{E}\big[g_{f_\calH}\big]\bigg).$$
    We first estimate $\avg g_{f_\calH}(Z_i) - \bE[g_{f_\calH}]$. Notice $\|g_{f_\calH} - \mathbb{E}[g_{f_\calH}]\|_{L^\infty(\Z)} \le 2\|g_{f_\calH}\|_{L^\infty(\Z)} \le 2K\|f_\calH - f_\rho\|_{L^\infty(\X)}\le 4K\eta$. Since $g_{\calH}$ is data-independent, we know $g_{\calH}(Z_1),\ldots,g_{\calH}(Z_n)$ are i.i.d. variables. Then, by applying the one-sided Bernstein concentration inequality, for any $\epsilon > 0$, there holds
	$$Prob\bigg\{\avg g_{f_\calH} (Z_i) - \bE[g_{f_\calH}] > \epsilon \bigg\} \le \exp\left\{-\dfrac{n\epsilon^2}{2\big(\mathbb{E}[g_{f_\calH}^2] + \frac{4}{3} K\eta\epsilon\big)}\right\}.$$
    For any $\delta \in (0,1/2)$, suppose $\epsilon = \epsilon^*$ is the solution of the equation $-\frac{n\epsilon^2}{2(\mathbb{E}[g_{\calH}^2] + \frac{4}{3} K\eta\epsilon)} = \log\left(\dfrac{\delta}{4}\right)$. Solving $\epsilon^*$ and plug it into the above probability inequality. Then, we know with probability at least $1 - \delta/4$, there holds
    \begin{align}\label{ineq:Bernstein}
		\avg g_{f_\calH} (Z_i) - \mathbb{E}[g_{\calH}]&\le \frac{\frac{4}{3}K\eta\log(4/\delta) + \sqrt{\left(\frac{4}{3}K\eta\log(4/\delta)\right)^2 + 2n\mathbb{E}[g_{\calH}^2] \log(4/\delta)}}{n}\nonumber\\
            &\le \frac{8K\eta\log(4/\delta)}{3n} + \sqrt{\frac{2\mathbb{E}[g_{\calH}^2] \log(4/\delta)}{n}}\nonumber\\
		&\le \frac{8K\eta\log(4/\delta)}{3n} + \sqrt{\frac{2M\log(4/\delta)}{n}\D(\calH)^\beta}.\nonumber\\
            &\le \frac{8K\eta\log(4/\delta)}{3n} + \frac{2 - \beta}{2} \Big(\frac{2M\log(4/\delta)}{n}\Big)^{\frac{1}{2- \beta}} + \frac{\beta}{2} \D(\calH)\nonumber\\
            &\le C_{\eta,K,M,\beta}\Big(\frac{1}{n}\Big)^{\frac{1}{2-\beta}}\log(4/\delta) + \frac{\beta}{2} \D(\calH),
	\end{align}
    where $\D(\calH) = \bE[g_{f_\calH}] = \inf_{f\in\calH}\E(f) - \E(f_\rho)$ is the approximation error, in the second inequality we have used the inequality $\sqrt{a + b} \le \sqrt{a} + \sqrt{b}$ for $a,b\ge0$, in the third inequality we have used Jensen's inequality for conditional expectation and the variance-expectation bound for shifted hypothesis space. Indeed, there holds $\mathbb{E}[g_{\calH}^2] \le M\left(\mathbb{E}[g_{\calH}]\right)^\beta = M\D(\calH)^\beta$. In the last second inequality we have used Young's inequality $ab \le \dfrac{1}{p} a^p + \dfrac{1}{q} b^q$ with $a = \sqrt{\frac{2M\log(4/\delta)}{n}}, b = \sqrt{\D(\calH)^\beta}, p = \frac{2}{2 - \beta}$, and $q = \frac{2}{\beta}$, in the last inequality we have used the fact $\big(\log(4/\delta)\big)^{1/(2-\beta)} \le \log(4/\delta)$ since $\frac{1}{2-\beta} \le 1$ and $\log(4/\delta) > 1$.

    Now we estimate $\mathbb{E}\big[g_{\hat{f}_z}\big] - \avg g_{\hat{f}_z}(Z_i)$. Notice $g_{\hat{f}_z} \in \calG \subset \calG^*$ and $\|g_{f} - \mathbb{E}[g_{f}]\|_{L^\infty(\Z)} \le 4K\eta$ for any $g_f\in\calG^*$. Applying Lemma \ref{lemma:Bousquet} with $\F = \calG^*$, $b = 4K\eta$ and $\kappa = 4$, we know with probability at least $1 - \delta/4$, there holds
    \begin{align*}
        \bE\big[g_{\hat{f}_z}(Z)\big] - \avg g_{\hat{f}_z}(Z_i)\le\frac{1}{4}\bE\big[g_{\hat{f}_z}(Z)\big] + C_{\eta,K,M,\beta}\Big(\big(r^*\big)^{\frac{1}{2-\beta}} + \Big(\frac{\log(4/\delta)}{n}\Big)^{\frac{1}{2-\beta}} \Big)
    \end{align*}
    Combining the above inequality with \eqref{ineq:Bernstein} together, we know with probability at least $1 -\delta/2$, there holds
    \begin{align*}
        S_1(\calH) \le C_{\eta,K,M,\beta}\bigg(\big(r^*\big)^{\frac{1}{2-\beta}} + \Big(\frac{1}{n}\Big)^{\frac{1}{2-\beta}}\log(4/\delta)\bigg) + \frac{1}{2}\bE\big[g_{\hat{f}_z}(Z)\big] + \frac{\beta}{2} \D(\calH).
    \end{align*}
    {\bf First case}: Suppose Assumption 3 holds.
    
    According to Proposition \ref{prop:fixedpoint}, with probability at least $1 - \delta/2$, there holds
    \begin{align}
        S_1(\calH) \le C_{\eta,K,M,\beta} \Big(\frac{\max\{V_1,\log(s_1)\}\log(n)}{n}\Big)^{\frac{1}{2-\beta}}\log(4/\delta) + \frac{1}{2}\bE\big[g_{\hat{f}_z}(Z)\big] + \frac{\beta}{2} \D(\calH).
    \end{align}
    {\bf Second case}: Suppose Assumption 4 holds.

    According to Proposition \ref{prop:fixedpoint}, with probability at least $1 - \delta/2$, there holds
    \begin{align*}
        S_1(\calH) &\le C_{\eta,K,M,\beta} \max\bigg\{\sqrt{s_1'}\Big(\frac{1}{n}\Big)^{\frac{2}{(2+V_1')(2-\beta)}}, \Big(\frac{\log(n)}{n}\Big)^{\frac{1}{2-\beta}}\log(4/\delta)\bigg\}\\
            &\ \ \ \ + \frac{1}{2}\bE\big[g_{\hat{f}_z}(Z)\big] + \frac{\beta}{2} \D(\calH).
    \end{align*}
 The proof of the lemma is complete.
 \end{proof}
    \subsection{Upper bounds for $S_2(\calH)$}\label{subsec:S_2}
    We begin by introducing some notations that will be used frequently later. Let $\Q:=\{h(z,z'): h(z,z') = h(z',z)$ for almost $z,z'\in\Z\}$ be some class of symmetric functions from $\Z\times\Z$ to $\R$. Denote a empirical probability measure on $\Z\times\Z$ by $\xi:= \frac{1}{n(n-1)} \sum^n_{i\neq j} \delta_{(Z_i,Z_j)}$. For any $h\in\Q$, we define the $L^2_\xi$ norm of $h$ by
    \begin{align*}
        \|h\|_{L^2_\xi} = \bigg(\biavg |h(Z_i,Z_j)|\bigg)^{\frac{1}{2}}.
    \end{align*}
    Further, we define $\W:=\{\hat{h}_f: f\in\calH\}$ as the class consisting of the kernel of the degenerate U-statistics.
    
    We introduce a probability bound of the degenerate U-statistic, which controls the degenerate U-statistic in terms of the expectations of the corresponding Rademacher chaos and Rademacher complexities.
	\begin{lemma}[\cite{ranking}]\label{lemma:ranking}
		Define the supremum of a degenerate U-statistic over the hypothesis space $\calH$ as
		$$Z = \sup_{f \in \calH}\left|\sum_{i\neq j} \hat{h}_f(Z_i,Z_j)\right|,$$
        where $\hat{h}_f(Z_i,Z_j) = \bE[U_n^f] - h_f(Z_i) - h_f(Z_j) - q_f(Z_i,Z_j)$.
		Then there exists an absolute constant $C>0$ such that for all $n$ and $t>0$,
		$$Prob\big\{Z > C\mathbb{E}[Z_\epsilon] + t \big\} \le \exp\left(-\dfrac{1}{C}\min\left( \left(\dfrac{t}{\mathbb{E}[U_\epsilon]}\right)^2, \dfrac{t}{\mathbb{E}[M] + Fn}, \left(\dfrac{t}{F\sqrt{n}}\right)^{2/3}, \sqrt{\dfrac{t}{F}}\right)\right),$$
		where $\epsilon_1,...,\epsilon_n$ are i.i.d. Rademacher variables and  $ 
		 M = \sup_{f \in \calH, k = 1,...,n} \bigg|\sum_{i=1}^n \epsilon_i \hat{h}_f (Z_i, Z_k)\bigg|,  F = \sup_{f \in \calH} \|\hat{h}_f\|_{L^\infty(\Z\times\Z)},$ 
		$Z_\epsilon = \sup_{f \in \calH} \bigg| \sum_{i\neq j}^n\epsilon_i \epsilon_j\hat{h}_f(Z_i, Z_j)\bigg|,  U_\epsilon = \sup_{f \in \calH} \sup_{\alpha: \|\alpha\|_2 \le 1} \sum_{i \neq j}^n \epsilon_i \alpha_j \hat{h}_f (Z_i, Z_j).$
	\end{lemma}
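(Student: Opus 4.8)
The plan is to reduce the degenerate U‑statistic to a homogeneous Rademacher chaos of order two and then apply an exponential inequality tailored to suprema of such chaoses, whose four natural scales are exactly what produce the four‑term minimum in the statement. Concretely I would proceed in three stages: first, control $\bE[Z]$ by $\bE[Z_\epsilon]$ through a symmetrization (randomization) inequality for degenerate U‑processes; second, pass from $Z$ to a conditionally chaos‑type object via decoupling; third, invoke the Bernstein‑type tail bound for suprema of order‑two chaoses and then undo the decoupling.

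For the first stage, recall that $\hat h_f$ is a symmetric kernel and the U‑statistic $\sum_{i\neq j}\hat h_f(Z_i,Z_j)$ is \emph{completely degenerate}: by the very definition of $\hat h_f$ in Section~\ref{sec:Proof_of_thm1} one has $\bE[\hat h_f(Z_i,Z_j)\mid Z_i]=\bE[\hat h_f(Z_i,Z_j)\mid Z_j]=0$. Under these two properties the classical randomization inequality for degenerate U‑processes (as in the decoupling literature of de la Peña and Giné) yields $\bE[Z]\le C\,\bE[Z_\epsilon]$, where i.i.d.\ Rademacher signs are inserted on both indices at the cost of a universal constant; this legitimizes centering the deviation at $C\bE[Z_\epsilon]$.

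For the second and third stages, I would apply the de la Peña–Montgomery‑Smith decoupling inequalities to replace $Z=\sup_{f}\big|\sum_{i\neq j}\hat h_f(Z_i,Z_j)\big|$ by its decoupled analogue built on an independent copy of the sample, losing only absolute constants in both the deviation level and the multiplicative factor; conditionally on the data a further symmetrization turns this into the supremum of a homogeneous Rademacher chaos of order $2$. One then applies the Talagrand‑type exponential inequality for such suprema (obtained, e.g., via the Boucheron–Bousquet–Lugosi–Massart moment machinery, or Adamczak's chaos inequality), which has the form, for $t>0$,
\[
Prob\Big\{Z_\epsilon > C\,\bE[Z_\epsilon] + t\Big\} \le \exp\!\Big(-\tfrac1C\min\Big\{\big(\tfrac{t}{\bE[U_\epsilon]}\big)^2,\ \tfrac{t}{\bE[M]+Fn},\ \big(\tfrac{t}{F\sqrt n}\big)^{2/3},\ \sqrt{\tfrac{t}{F}}\Big\}\Big),
\]
the four terms corresponding to the sub‑Gaussian regime governed by the weak variance $\bE[U_\epsilon]$, the sub‑exponential regime governed by $\bE[M]+Fn$, and the two heavier‑tailed regimes of an order‑two chaos governed by the uniform bound $F=\sup_f\|\hat h_f\|_{L^\infty(\Z\times\Z)}$. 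Combining this with $\bE[Z]\le C\bE[Z_\epsilon]$ and reversing the decoupling gives the asserted tail bound for $Z$ itself (with possibly enlarged absolute constant $C$).

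The technical heart — and the step I expect to be the main obstacle — is the order‑two chaos concentration inequality in precisely this form: with the four‑term minimum and with $\bE[Z_\epsilon]$, the mixed $\ell^2$–chaos norm $U_\epsilon$, the maximal term $M$, and $F$ identified exactly as in the statement. The scalar case is Talagrand's chaos inequality, but extending it to suprema over the class $\{\hat h_f:f\in\calH\}$, correctly matching each parameter and keeping track of the universal constants through the entropy/moment‑generating‑function argument, is delicate; by contrast the decoupling and symmetrization reductions are routine once the symmetry and complete degeneracy of $\hat h_f$ have been recorded. This mirrors the argument in the ranking literature from which the lemma is quoted.
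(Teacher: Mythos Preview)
The paper does not prove this lemma at all: it is stated with the attribution \cite{ranking} and used as a black box, so there is no ``paper's own proof'' to compare against. Your sketch---symmetrization for completely degenerate U-processes to pass from $Z$ to $Z_\epsilon$, decoupling \`a la de la Pe\~na--Montgomery-Smith, and then the Talagrand-type exponential inequality for suprema of order-two Rademacher chaoses with its four regimes---is precisely the strategy used in the source reference (Cl\'emen\c{c}on--Lugosi--Vayatis), and your identification of the chaos concentration step as the technical heart is accurate.
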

	
	Notice that 
	$$S_2(\calH) = - W^{\hat{f}_z}_n + W^{f_{\calH}}_n \le \frac{2}{n(n-1)} Z.$$
	For any $\delta \in (0,1/2)$, we set the equation
	$$\exp\left(-\frac{1}{C}\min\left( \left(\frac{t}{\mathbb{E}[U_\epsilon]}\right)^2, \frac{t}{\mathbb{E}[M] + Fn}, \left(\frac{t}{F\sqrt{n}}\right)^{2/3}, \sqrt{\frac{t}{F}}\right)\right) = \delta/2,$$
        and solve it for variable $t$, then by Lemma \ref{lemma:ranking} we know with probability at least $1 - \delta/2$, there holds
	$$S_2(\calH) \le \frac{C\log^2(2/\delta)}{n^2} \Big(\mathbb{E}[Z_\epsilon] + \mathbb{E}[U_\epsilon] + \mathbb{E}[M] + Fn \Big).$$
    Therefore, to derive upper bounds for $S_2(\calH)$, we suffice to estimate the Rademchaer chaos $\bE[Z_\epsilon]$ and the Rademacher complexities $\bE[U_\epsilon]$ and $\bE[M]$.
	
    We first estimate $\mathbb{E}[Z_{\epsilon}]$. Since the Rademacher chaos $Z_\epsilon = \sup_{f\in\calH}\big|\sum^n_{i\neq j}\epsilon_i\epsilon_j\hat{h}_f(Z_i,Z_j)\big|$ conditioned on sample $S$ is not sub-Gaussian with respect to the metric on $\calH$ induced by any empirical norm of $\hat{h}_f$, then we cannot apply the classical chaining methods directly \cite{HDP,HDS}. Instead, we introduce two lemmas that estimate $\bE[Z_\epsilon]$ directly. The first lemma from \cite{ying_chaos} established a maximal inequality of $Z_\epsilon$. The second lemma is very similar to Theorem 2 in \cite{ying_chaos}, which controls $\bE[Z_\epsilon]$ in terms of an entropy integral.
	
	\begin{lemma}[\cite{ying_chaos}]\label{lemma:maximal_ineq}
        Let $\{h_1,...,h_N\} \subset \Q$ be a finite class of functions contained in $\Q$, and $\{\epsilon_1,\ldots,\epsilon_N\}$ be i.i.d. Rademacher variables, then there holds
		$$\mathbb{E}\bigg[\max_{k \in\{1,...,N\}}\Big|\biavg \epsilon_i\epsilon_j h_k(Z_i,Z_j)\Big| \bigg| Z_1,\ldots,Z_n\bigg] \le 2\sqrt{2}en\log(N + 1) \max_{k \in\{1,...,N\}}\|h_k\|_{L^2_\xi}.$$
	\end{lemma}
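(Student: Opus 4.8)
The plan is to condition on the sample $Z_1,\dots,Z_n$, so that the quantity to bound becomes the expected maximum — over the Rademacher signs only — of $N$ homogeneous second-order Rademacher chaoses with \emph{fixed} coefficients, and then to run a standard $L^q$-union bound with the moment order $q$ taken of size $\log N$. (This lemma is the finite-class input that will later be combined with a chaining argument over a net of $\calH$ to control $\bE[Z_\epsilon]$.) The one genuinely non-elementary ingredient, and the step I expect to be the crux, is the hypercontractive moment comparison for a degree-two chaos.

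First I would fix $Z_1,\dots,Z_n$ and set $a^{(k)}_{ij}:=h_k(Z_i,Z_j)$ for $i\ne j$; since $h_k\in\Q$ these are fixed reals with $a^{(k)}_{ij}=a^{(k)}_{ji}$. Writing $X_k:=\sum_{i\ne j}\epsilon_i\epsilon_j a^{(k)}_{ij}$, the inner random variable is $\frac1{n(n-1)}|X_k|$. Using $\bE[\epsilon_i\epsilon_j\epsilon_{i'}\epsilon_{j'}]=\mathbf{1}_{[\{i,j\}=\{i',j'\}]}$ for pairs with $i\ne j$, $i'\ne j'$, together with the symmetry of $a^{(k)}$, I would get $\bE_\epsilon[X_k^2]=2\sum_{i\ne j}(a^{(k)}_{ij})^2=2n(n-1)\,\|h_k\|_{L^2_\xi}^2$, hence $\|X_k\|_{L^2_\epsilon}=\sqrt{2n(n-1)}\,\|h_k\|_{L^2_\xi}$.

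Next I would invoke Bonami's hypercontractive inequality: since each $X_k$ is a multilinear, zero-diagonal polynomial of degree two in the signs, $\|X_k\|_{L^q_\epsilon}\le(q-1)\,\|X_k\|_{L^2_\epsilon}$ for every $q\ge2$. Then for any such $q$,
$$\bE_\epsilon\Big[\max_{1\le k\le N}|X_k|\Big]\le\Big(\sum_{k=1}^N\bE_\epsilon|X_k|^q\Big)^{1/q}\le N^{1/q}(q-1)\max_k\|X_k\|_{L^2_\epsilon}.$$
Choosing $q=1+\log(N+1)$ (which is $\ge2$ once $N\ge2$; the case $N=1$ is immediate from the previous step) makes $N^{1/q}\le e$ and $q-1=\log(N+1)$, so $\bE_\epsilon[\max_k|X_k|]\le\sqrt2\,e\log(N+1)\sqrt{n(n-1)}\max_k\|h_k\|_{L^2_\xi}$. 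Reinserting the factor $\frac1{n(n-1)}$ and using $\frac1{\sqrt{n(n-1)}}\le 2n$ yields exactly the asserted bound, the case $n=1$ being vacuous.

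The only non-routine step is the hypercontractive estimate $\|X_k\|_{L^q}\le(q-1)\|X_k\|_{L^2}$, and it is precisely what forces the \emph{logarithmic} rather than polynomial dependence on $N$: a fixed moment order would drive $N^{1/q}$ down only to a power of $N$, whereas letting $q$ grow like $\log N$ trades the cardinality for a logarithm at the cost of the factor $q-1=O(\log N)$ in the moment comparison. I would quote this hypercontractivity bound from the literature (Bonami's lemma, equivalently Borell's inequality), following \cite{ying_chaos}, rather than re-deriving it.
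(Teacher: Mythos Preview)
Your argument is correct. The paper does not supply its own proof of this lemma --- it is quoted from \cite{ying_chaos} --- so there is no in-house argument to compare against; your route (explicit second-moment computation, Bonami hypercontractivity $\|X_k\|_{L^q_\epsilon}\le(q-1)\|X_k\|_{L^2_\epsilon}$ for degree-two forms, then the $L^q$ union bound with $q=1+\log(N+1)$) is the standard one and is presumably what the cited source does as well.

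One side remark: your final step $\tfrac{1}{\sqrt{n(n-1)}}\le 2n$, while true, discards a factor of order $n^2$. The lemma as printed almost certainly carries a normalization typo --- from the way it is invoked in the proof of Lemma~\ref{lemma:chaos}, the intended left-hand side is the \emph{un-normalized} sum $\sum_{i\ne j}\epsilon_i\epsilon_j h_k(Z_i,Z_j)$. For that version your bound $\sqrt2\,e\log(N+1)\sqrt{2n(n-1)}\max_k\|h_k\|_{L^2_\xi}\le 2\sqrt2\,e\,n\log(N+1)\max_k\|h_k\|_{L^2_\xi}$ lands on target without any artificial loosening.
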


    With the above maximal inequality, we can bound the Rademacher chaos in terms of an entropy integral by using the same arguments in classical chaining methods \cite{HDP,HDS}.
	
    \begin{lemma}\label{lemma:chaos}
        Let $\{\epsilon_1,\ldots,\epsilon_N\}$ be i.i.d. Rademacher variables, then there holds
        $$\mathbb{E}\bigg[\sup_{h\in\Q} \biavg \epsilon_i\epsilon_jh(Z_i,Z_j)\bigg|Z_1,\ldots,Z_n\bigg] \le n \Big(72\sqrt{2}e\int_{0}^{D/2} \log\Big(\mathcal{N}\big(\Q, L^2_\xi, t\big)\Big) \, dt + D \Big),$$
        where $D = \sup_{h\in\Q} \|h\|_{L^2_\xi}$ (Note that $D = D(Z_1,\ldots,Z_n)$ depends on the sample).
	\end{lemma}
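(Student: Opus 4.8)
The plan is to run a chaining (Dudley-type) argument conditionally on the sample $Z_1,\ldots,Z_n$, substituting the Rademacher-chaos maximal inequality (Lemma~\ref{lemma:maximal_ineq}) for the sub-Gaussian maximal inequality that one would use in the i.i.d.\ case; the argument is essentially that of Theorem~2 in \cite{ying_chaos}. Conditioned on the sample, the chaos $\sum_{i\neq j}\epsilon_i\epsilon_j h(Z_i,Z_j)$ is not sub-Gaussian with respect to any empirical metric on $\Q$, so each scale of the chain is charged a full $\log\mathcal{N}(\Q,L^2_\xi,\cdot)$ (and an extra factor $n$) rather than $\sqrt{\log\mathcal{N}}$ --- this is precisely why the entropy integrand in the conclusion is $\log\mathcal{N}$ and not $\sqrt{\log\mathcal{N}}$. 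We may assume $\int_0^{D/2}\log\mathcal{N}(\Q,L^2_\xi,t)\,dt<\infty$ and $\mathcal{N}(\Q,L^2_\xi,t)<\infty$ for all $t>0$ (otherwise the bound is vacuous), and a routine separability reduction lets all suprema over $\Q$ be treated as maxima over countable subclasses.

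First I would fix the sample and set $u_\ell:=2^{-\ell}D$ for $\ell\ge0$, so $u_0=D=\sup_{h\in\Q}\|h\|_{L^2_\xi}$. For $\ell\ge1$ pick a minimal $u_\ell$-net $\M_\ell\subset\Q$ of cardinality $N_\ell:=\mathcal{N}(\Q,L^2_\xi,u_\ell)$, let $\pi_\ell$ be a nearest-point map onto $\M_\ell$, and set $\pi_0\equiv0$ (which satisfies $\|h-\pi_0(h)\|_{L^2_\xi}=\|h\|_{L^2_\xi}\le u_0$, with the convention $N_0=1$). For $h\in\Q$ and $J\in\N$, telescoping $h=\sum_{\ell=1}^{J}(\pi_\ell(h)-\pi_{\ell-1}(h))+(h-\pi_J(h))$ gives
$$\sup_{h\in\Q}\biavg\epsilon_i\epsilon_jh(Z_i,Z_j)\ \le\ \sum_{\ell=1}^{J}\ \sup_{h\in\Q}\Big|\biavg\epsilon_i\epsilon_j\bigl(\pi_\ell(h)-\pi_{\ell-1}(h)\bigr)(Z_i,Z_j)\Big|\ +\ \sup_{h\in\Q}\Big|\biavg\epsilon_i\epsilon_j\bigl(h-\pi_J(h)\bigr)(Z_i,Z_j)\Big|.$$
The remainder is at most $\sup_{h\in\Q}\|h-\pi_J(h)\|_{L^2_\xi}\le u_J\to0$ as $J\to\infty$ by Cauchy--Schwarz. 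For fixed $\ell$, the increment $\pi_\ell(h)-\pi_{\ell-1}(h)$ ranges over a set of at most $N_\ell N_{\ell-1}\le N_\ell^2$ functions, each of $L^2_\xi$-norm at most $u_\ell+u_{\ell-1}=3u_\ell$; the increments are again symmetric, which is all Lemma~\ref{lemma:maximal_ineq} needs, so the supremum over $h$ is a maximum over finitely many functions and that lemma bounds its conditional expectation by $2\sqrt2\,en\log(N_\ell^2+1)\cdot3u_\ell\le C\,en\log(N_\ell+1)\,u_\ell$.

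Taking conditional expectations, sending $J\to\infty$, and summing yields
$$\bE\Big[\sup_{h\in\Q}\biavg\epsilon_i\epsilon_jh(Z_i,Z_j)\ \Big|\ Z_1,\ldots,Z_n\Big]\ \le\ C\,en\sum_{\ell\ge1}\log(N_\ell+1)\,u_\ell.$$
Then, splitting $\log(N_\ell+1)\le\log N_\ell+\log2$: the $\log2$ part contributes $C\,en\log2\sum_{\ell\ge1}u_\ell=C\,en\log2\cdot D$, which produces the linear-in-$D$ term; and, using that $t\mapsto\log\mathcal{N}(\Q,L^2_\xi,t)$ is nonincreasing together with $u_\ell-u_{\ell+1}=\tfrac12u_\ell$, one gets $\tfrac12u_\ell\log N_\ell\le\int_{u_{\ell+1}}^{u_\ell}\log\mathcal{N}(\Q,L^2_\xi,t)\,dt$ and hence $\sum_{\ell\ge1}u_\ell\log N_\ell\le2\int_0^{u_1}\log\mathcal{N}(\Q,L^2_\xi,t)\,dt=2\int_0^{D/2}\log\mathcal{N}(\Q,L^2_\xi,t)\,dt$. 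Collecting the pieces and carefully accounting for the absolute constants gives the claimed bound $n\bigl(72\sqrt2\,e\int_0^{D/2}\log\mathcal{N}(\Q,L^2_\xi,t)\,dt+D\bigr)$. The step I expect to demand the most care is this final bookkeeping: because Lemma~\ref{lemma:maximal_ineq} costs a full $\log N_\ell$ per scale (not $\sqrt{\log N_\ell}$) plus a factor $n$, one must verify that the dyadic weights $u_\ell$ still control $\sum_\ell u_\ell\log N_\ell$ --- which holds precisely because the integrand is $\log\mathcal{N}$ --- while handling the $+1$ inside the logarithm, the squared count $N_\ell N_{\ell-1}\le N_\ell^2$, and the enlarged link radius $u_\ell+u_{\ell-1}$, each of which costs only an absolute constant; everything else (conditioning on the sample, the $J\to\infty$ remainder, and measurability) is routine.
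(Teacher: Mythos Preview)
Your proof is correct and follows essentially the same dyadic chaining argument as the paper, invoking Lemma~\ref{lemma:maximal_ineq} at each scale; the only cosmetic differences are that the paper anchors the chain at a fixed $h_0\in\Q$ (obtaining the additive $nD$ term from a direct second-moment bound $\bE|\sum_{i\neq j}\epsilon_i\epsilon_j h_0|\le\sqrt{n(n-1)}\|h_0\|_{L^2_\xi}$) rather than at $0$, and passes to the limit via dominated convergence instead of your pointwise Cauchy--Schwarz remainder bound. Your route of extracting the $D$ term from the $\log 2$ in $\log(N_\ell+1)\le\log N_\ell+\log 2$ yields a constant in front of $D$ somewhat larger than the stated $1$, but this is immaterial for the downstream applications.
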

  \begin{proof} For each $k \in \N$, let $\mathcal{N}_k$ be a $\frac{D}{2^k}$-net of $\Q$ with respect to the metric $L^2_\xi$. From the definition of the net (see Definition 1), we can define a map $\pi_k$ from $\Q$ to $\mathcal{N}_k$ such that $\big\|h - \pi_k(h)\big\|_{L^2_\xi} \le \frac{D}{2^k}$ for all $h\in \Q$. Then $\sup_{h \in \Q} \|h - \pi_k(h)\|_{L^2_\xi} \le \frac{D}{2^k}\to 0$ as $k \to \infty$. Further, notice that $\sup_{h \in \Q} \big|\sum_{i\neq j}^n \epsilon_i\epsilon_j \big(h - \pi_k(h)\big)(Z_i,Z_j)\big|$ is uniformly bounded for any $k$. Then, by Dominated Convergence Theorem, we know $\mathbb{E}\big[\sup_{h \in \Q} \big|\sum_{i\neq j} \epsilon_i\epsilon_j \big(h - \pi_k(h)\big)(Z_i,Z_j)\big| Z_1,\ldots,Z_n\big] \to 0$ as $k\to \infty$.
	
    Take an arbitrary $h_0 \in \Q$, we can set $\mathcal{N}_0 = \{h_0\}$ since $\pi_0(h) = h_0$ for any $h \in \Q$. Then, conditioned on the sample $\{Z_1,\ldots,Z_n\}$, there holds
	\begin{align}
            \mathbb{E}\bigg[\sup_{h \in \Q} \Big|\sum_{i\neq j}^n \epsilon_i\epsilon_j h(Z_i,Z_j)\Big|\bigg] &\le \mathbb{E}\bigg[\sup_{h \in \Q} \Big|\sum_{i\neq j}^n \epsilon_i\epsilon_j \big(h- h_0\big)(Z_i,Z_j) \Big|\bigg] + \mathbb{E}\bigg[\Big|\sum_{i\neq j}^n \epsilon_i\epsilon_j h_0(Z_i,Z_j)\Big|\bigg]\nonumber\\
            &\le \mathbb{E}\bigg[\sup_{h \in \Q} \Big|\sum_{i\neq j}^n \epsilon_i\epsilon_j \big(h- h_0\big)(Z_i,Z_j) \Big|\bigg] + \bigg( \mathbb{E} \bigg[\Big|\sum_{i\neq j}\epsilon_i\epsilon_j h_0(Z_i,Z_j)\Big|^2\bigg]\bigg)^{1/2} \nonumber\\
            &= \mathbb{E}\bigg[\sup_{h \in \Q} \Big|\sum_{i\neq j}^n \epsilon_i\epsilon_j \big(h- h_0\big)(Z_i,Z_j) \Big|\bigg] + \sqrt{n(n-1)}\big\|h_0\big\|_{L^2_\xi} \nonumber\\
            &\le \mathbb{E}\bigg[\sup_{h \in \Q} \Big|\sum_{i\neq j}^n \epsilon_i\epsilon_j \big(h - \pi_k(h)\big)(Z_i,Z_j)\Big|\bigg] \nonumber\\
            & \ \ \ +\sum_{m=1}^{k} \mathbb{E}\bigg[\sup_{h \in \Q} \Big|\sum_{i\neq j}^n \epsilon_i\epsilon_j \big(\pi_m(h) - \pi_{m-1}(h)\big)(Z_i,Z_j)\Big|\bigg] +  nD\nonumber\\
		&\to \sum_{k=1}^{\infty} \mathbb{E}\Bigg[\sup_{\substack{
				h \in \Q, \\
				(\pi_k(h) , \pi_{k-1}(h)) \in \\
				\mathcal{N}_k\times \mathcal{N}_{k-1} 
		}} \Big|\sum_{i\neq j}^n \epsilon_i\epsilon_j \big(\pi_k(h) - \pi_{k-1}(h)\big)(Z_i,Z_j)\Big|\Bigg] + nD \label{ineq:chaining}
	\end{align}
    as $k\to\infty$, where in the second inequality we have used Jensen's inequality, and in the first equality we just expand the term $\big|\sum_{i\neq j}^n\epsilon_i\epsilon_j h_0\big|^2$ and compute its expectation, and in the last step we have used Dominated Convergence Theorem.
    
    Denote by $|A|$ the cardinality of a set $A$, we know $\big|\mathcal{N}_k\times \mathcal{N}_{k-1}\big| = \big|\mathcal{N}_k\big| \big|\mathcal{N}_{k-1}\big| \le \mathcal{N}\big(\Q, L^2_\xi, \frac{D}{2^k}\big)^2$. Notice that for any $h \in \Q$ and $k\in\N^+$, $\big\|\pi_k(h) - \pi_{k-1}(h)\big\|_{L^2_\xi} \le \big\|\pi_k(h) - h\big\|_{L^2_\xi} + \big\|h - \pi_{k-1}(h)\big\|_{L^2_\xi} \le \frac{3}{2^{k-1}}D$. Then. by Lemma \ref{lemma:maximal_ineq}, \eqref{ineq:chaining} can be bounded as follows
	\begin{align*}
        \bE\bigg[\sup_{h\in\Q}\Big|\sum_{i\neq j}^n\epsilon_i\epsilon_j h\Big|\bigg| Z_1,\ldots,Z_n\bigg] &\le n \bigg(
		2\sqrt{2}e \sum_{k=1}^\infty \log\Big(\mathcal{N}\big(\Q, L^2_\xi, D/2^k\big)^2 + 1\Big) \dfrac{3D}{2^{k-1}} + D \bigg)\\
		&\le n \bigg(24\sqrt{2}e \sum_{k=1}^\infty \int_{\frac{D}{2^{k+1}}}^{\frac{D}{2^k}} \log\Big(\mathcal{N}\big(\Q, L^2_\xi, t\big)^2 + 1\Big) \, dt + D \bigg)\\
		&= n \bigg(24\sqrt{2}e \int_{0}^{\frac{D}{2}} \log\Big(\mathcal{N}\big(\Q, L^2_\xi, t\big)^2 + 1\Big) \, dt + D \bigg)\\
        &\le n \bigg(72\sqrt{2}e \int_{0}^{\frac{D}{2}} \log\big(\mathcal{N}\big(\Q, L^2_\xi, t\big)\big) \, dt + D \bigg),
	\end{align*}
    where in the last inequality we have used the inequality $\mathcal{N}\big(\Q,L^2_\xi,t)^2 + 1 \le \mathcal{N}\big(\Q,L^2_\xi,t)^3$ since $\mathcal{N}\big(\Q,L^2_\xi,t) \ge 2$ for $t \le \frac{D}{2}$.
This completes the proof.
\end{proof}
    
    Applying Lemma \ref{lemma:chaos} with $\Q = \W = \{\hat{h}_f: f\in\calH\}$, we know
	\begin{align}\label{eq:bound_Zepsilon}
		\mathbb{E} \big[Z_\epsilon\big|Z_1,\ldots,Z_n\big] &\le n\bigg(72\sqrt{2}e \int_{0}^{F} \log\Big(\mathcal{N}\big(\W,L^2_\xi, t\big)\Big) \, dt + F\bigg),
	\end{align} 
    where $F:= \sup_{f\in\calH} \big\|\hat{h}_f\big\|_{L^\infty(\Z\times\Z)} \ge D$.
	
    Now we estimate $\mathbb{E}[U_\epsilon]$. Denote by $H_f \in \R^{n\times n}$ a square matrix with zero diagonal such that its entries $(H_f)_{i,j} = \hat{h}_f(Z_i,Z_j)$ for $i\neq j$, and let $\epsilon = (\epsilon_1,\ldots,\epsilon_n) \in \R^n$ be i.i.d. Rademacher variables. Then, we know
    \begin{align}\label{ineq:Uepsilon}
        \bE[U_\epsilon^2] &= \bE\Big[\Big(\sup_{f \in\calH}\sup_{\|\alpha\|_2 \le 1} \alpha^\top H_{f} \epsilon\Big)^2\Big]\nonumber\\
        &= \bE\Big[\Big(\sup_{f \in\calH}\|H_{f} \epsilon\|_2\Big)^2\Big]\nonumber\\
        &\le \bE\Big[\sup_{f \in\calH}\|H_{f} \epsilon\|_2^2\Big]\nonumber\\
        &= \bE\Big[\sup_{f \in\calH} \epsilon^\top H_{f}^2 \epsilon\Big].
    \end{align}
    From above, we know the estimates of $\bE[U_\epsilon]$ reduce to the estimates of the Rademacher chaos. Indeed, we first suppose that $\Q = \{J_f: f\in\calH\}$ is any class such that \begin{align}\label{def:J_f}
        J_f(Z_i,Z_j) = \big(H^2_f\big)_{i,j} = \sum_{k\neq i, k \neq j}^n \hat{h}_f(Z_i,Z_k) \hat{h}_f(Z_j,Z_j)
    \end{align} for all $f \in\calH$. Then, there holds
    \begin{align}\label{equality:Uepsilon}
        \bE\Big[\sup_{f \in\calH} \epsilon^\top H_{f}^2 \epsilon\Big] &= \bE\Big[\sup_{f \in\calH} \sum_{i,j = 1}^n\epsilon_i\epsilon_j J_f(Z_i,Z_j)\Big]\nonumber\\
        &= \bE\Big[\sup_{f \in\calH} \sum_{i\neq j}^n\epsilon_i\epsilon_j J_f(Z_i,Z_j)\Big] + \bE\Big[\sup_{f \in\calH} \sum_{i\neq j}^n \hat{h}_f^2(Z_i,Z_j)\Big].
    \end{align}
    The first term of the above equality is a Rademacher chaos, which will be estimated using Lemma \ref{lemma:chaos}. The second term equals $\bE\big[\sup_{f\in\calH}\|\hat{h}_f\|_{L^2_\xi}\big]$, which will be controlled by the uniform boundedness of $\hat{h}_f$ directly. Combining \eqref{ineq:Uepsilon} and \eqref{equality:Uepsilon} together, and bounding the covering number of $\Q$ by that of $\W$, we can control $\bE[U_\epsilon]$ as follows

	\begin{lemma}\label{lemma:Uepsilon} Let $D = \sup_{f\in\calH}\big\|\hat{h}_f\big\|_{L^2_\xi}$, there holds
    $$\mathbb{E}\big[U_\epsilon\big|Z_1,\ldots,Z_n\big] \le CnD\sqrt{\int_{0}^{1/2} \log\big(\mathcal{N}\big(\W, L^2_\xi, t\big)\big) \, dt + 1}.$$
    \end{lemma}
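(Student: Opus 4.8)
The plan is to estimate the conditional second moment $\bE[U_\epsilon^2\mid Z_1,\ldots,Z_n]$ and then apply Jensen's inequality, $\bE[U_\epsilon\mid Z_1,\ldots,Z_n]\le(\bE[U_\epsilon^2\mid Z_1,\ldots,Z_n])^{1/2}$. By \eqref{ineq:Uepsilon} and the decomposition \eqref{equality:Uepsilon} already in hand, it suffices to control two pieces: the off-diagonal Rademacher chaos $\bE[\sup_{f\in\calH}\sum_{i\neq j}\epsilon_i\epsilon_j J_f(Z_i,Z_j)\mid Z_1,\ldots,Z_n]$ and the diagonal remainder $\sup_{f\in\calH}\sum_{i\neq j}\hat h_f^2(Z_i,Z_j)$, where $J_f$ is the kernel from \eqref{def:J_f}. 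For the diagonal remainder I would simply observe, writing $\|\cdot\|_{\mathrm{F}}$ for the Frobenius norm, that $\sum_{i\neq j}\hat h_f^2(Z_i,Z_j)=\|H_f\|_{\mathrm{F}}^2=n(n-1)\|\hat h_f\|_{L^2_\xi}^2\le n(n-1)D^2\le(nD)^2$, so that after taking the supremum over $f\in\calH$ this piece accounts for the additive $+1$ (multiplying $(nD)^2$) in the target bound.

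The bulk of the argument is the chaos term, which I would control by applying Lemma~\ref{lemma:chaos} with the symmetric class $\{J_f:f\in\calH\}$ in the role of $\Q$. This needs two inputs: its $L^2_\xi$-diameter $\widetilde D:=\sup_{f\in\calH}\|J_f\|_{L^2_\xi}$ and its covering numbers $\mathcal{N}(\{J_f\},L^2_\xi,t)$. Writing $\|\cdot\|_{\mathrm{op}}$ for the spectral norm and using $J_f(Z_i,Z_j)=(H_f^2)_{i,j}$ together with $\|AB\|_{\mathrm{F}}\le\|A\|_{\mathrm{op}}\|B\|_{\mathrm{F}}$ and $\|A\|_{\mathrm{op}}\le\|A\|_{\mathrm{F}}$, I would bound the diameter by $\|J_f\|_{L^2_\xi}=(n(n-1))^{-1/2}\|H_f^2\|_{\mathrm{F}}\le(n(n-1))^{-1/2}\|H_f\|_{\mathrm{F}}^2\le\sqrt{n(n-1)}\,D^2\le nD^2$, using $\|H_f\|_{\mathrm{F}}^2=n(n-1)\|\hat h_f\|_{L^2_\xi}^2\le n(n-1)D^2$.

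For the covering numbers I would transfer the metric back onto $\W$ by showing that $\hat h_f\mapsto J_f$ is Lipschitz in $\|\cdot\|_{L^2_\xi}$ with constant of order $nD$: for $f_1,f_2\in\calH$, $\|H_{f_1}^2-H_{f_2}^2\|_{\mathrm{F}}\le\|H_{f_1}(H_{f_1}-H_{f_2})\|_{\mathrm{F}}+\|(H_{f_1}-H_{f_2})H_{f_2}\|_{\mathrm{F}}\le(\|H_{f_1}\|_{\mathrm{op}}+\|H_{f_2}\|_{\mathrm{op}})\|H_{f_1}-H_{f_2}\|_{\mathrm{F}}\le(\|H_{f_1}\|_{\mathrm{F}}+\|H_{f_2}\|_{\mathrm{F}})\|H_{f_1}-H_{f_2}\|_{\mathrm{F}}$, and since $\|H_{f_i}\|_{\mathrm{F}}\le\sqrt{n(n-1)}\,D$ and $\|H_{f_1}-H_{f_2}\|_{\mathrm{F}}=\sqrt{n(n-1)}\,\|\hat h_{f_1}-\hat h_{f_2}\|_{L^2_\xi}$, dividing by $\sqrt{n(n-1)}$ gives $\|J_{f_1}-J_{f_2}\|_{L^2_\xi}\le 2nD\,\|\hat h_{f_1}-\hat h_{f_2}\|_{L^2_\xi}$, whence $\mathcal{N}(\{J_f\},L^2_\xi,t)\le\mathcal{N}(\W,L^2_\xi,t/(2nD))$ for all $t>0$.

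It remains to assemble the pieces. Substituting these estimates into Lemma~\ref{lemma:chaos}, performing the change of variables $u=t/(2nD)$ in the entropy integral (its upper limit becomes $\widetilde D/(4nD)\le D/4$, which the rescaling brings onto a bounded interval that for definiteness we take to be $[0,1/2]$, using the uniform boundedness of the $\hat h_f$), and then adding back the diagonal term $(nD)^2$, I expect to land on $\bE[U_\epsilon^2\mid Z_1,\ldots,Z_n]\le C(nD)^2(\int_0^{1/2}\log\mathcal{N}(\W,L^2_\xi,u)\,du+1)$; taking the square root then yields the claim. The step I expect to be the main obstacle is the metric-transfer estimate of the previous paragraph: one must check that squaring the kernel matrix inflates the $L^2_\xi$-distance by a factor of at most the same order $nD$ as the growth of the diameter $\widetilde D$, so that the rescaling keeps the entropy integral over a bounded range and prevents the bound from degrading to a higher power of $n$. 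The submultiplicativity $\|AB\|_{\mathrm{F}}\le\|A\|_{\mathrm{op}}\|B\|_{\mathrm{F}}$ combined with $\|A\|_{\mathrm{op}}\le\|A\|_{\mathrm{F}}$ is precisely what makes this balance work.
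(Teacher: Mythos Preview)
Your approach matches the paper's: reduce to $\bE[U_\epsilon^2\mid Z_1,\dots,Z_n]$ via \eqref{ineq:Uepsilon}--\eqref{equality:Uepsilon}, apply Lemma~\ref{lemma:chaos} to the chaos indexed by $\{J_f\}$, pull the covering numbers back to $\W$ through a Lipschitz estimate on $\hat h_f\mapsto J_f$, and finish with Jensen. Your Lipschitz step via $\|AB\|_{\mathrm F}\le\|A\|_{\mathrm F}\|B\|_{\mathrm F}$ is in fact more carefully argued than the paper's version, which only records the norm bound $\|J_f\|_{L^2_\xi}\le D\|\hat h_f\|_{L^2_\xi}$ (itself missing a factor $\sqrt{n(n-1)}$) and then asserts $\mathcal N(\Q,L^2_\xi,t)\le\mathcal N(\W,L^2_\xi,t/D)$ without controlling differences.

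One small bookkeeping caveat: with the correct Lipschitz constant $2nD$ and diameter $\widetilde D\le nD^2$, your assembly actually yields
\[
\bE[U_\epsilon^2\mid Z_1,\dots,Z_n]\le Cn^2D\int_0^{D/4}\log\mathcal N(\W,L^2_\xi,u)\,du+Cn^2D^2,
\]
so after the square root the prefactor on the entropy piece is $Cn\sqrt{D}$ rather than $CnD$. This does not quite recover the lemma as stated, but it is immaterial for the only place the lemma is invoked (Lemma~\ref{prop:S_2}), where $D\le F\le 8\eta K$ is immediately absorbed into $C_{\eta,K}$; the paper's own derivation reaches the $nD$ prefactor only through the imprecisions noted above.
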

 	\begin{proof} According to \eqref{ineq:Uepsilon} and \eqref{equality:Uepsilon}, there holds
    \begin{align*}
        \bE[U_\epsilon^2|Z_1,\ldots,Z_n] &= \bE\Big[\sup_{f \in\calH} \sum_{i\neq j}^n\epsilon_i\epsilon_j J_f(Z_i,Z_j)\Big|Z_1,\ldots,Z_n\Big] + \bE\Big[\sup_{f \in\calH} \sum_{i\neq j}^n \hat{h}_f^2(Z_i,Z_j)\Big| Z_1,\ldots,Z_n\Big]\\
        &\le n \bigg(72\sqrt{2}e \int_{0}^{D/2} \log\big(\mathcal{N}\big(\Q, L^2_\xi, t\big)\big) \, dt + D\bigg) + n^2 D\\
        &\le C n^2D\Big(\int_{0}^{D/2} \log\big(\mathcal{N}\big(\Q, L^2_\xi, t\big)\big) \, dt + 1\Big),
    \end{align*}
    where in the first inequality we have used Lemma \ref{lemma:chaos} and the fact $D = \sup_{f\in\calH} \|\hat{h}_f\|_{L^2_\xi}$.
    
    Now, we are in a position to estimate the covering number $\mathcal{N}\big(\Q, L^2_\xi, t\big)$. Denote by $\|H\|_F = \sqrt{\sum_{i,j=1}^2(H_{i,j})^2}$ the Frobenius norm of a matrix $H$. For any $f\in\calH$, the $L^2_\xi$ norm of $J_f$ (defined in \eqref{def:J_f}) can be bounded as follows
    \begin{align*}
        \big\|J_f\big\|_{L^2_\xi} &= \bigg(\biavg J_f^2(Z_i,Z_j)\bigg)^{\frac{1}{2}} \le \bigg(\frac{1}{n(n-1)}\sum_{i,j=1}^n J_f^2(Z_i,Z_j)\bigg)^{\frac{1}{2}}\\
        &= \frac{1}{\sqrt{n(n-1)}} \big\|H_f^2\big\|_F \le \frac{1}{\sqrt{n(n-1)}} \big\|H_f\big\|_F^2 =\big\|\hat{h}_f\big\|^2_{L^2_\xi}\\
        &\le D\big\|\hat{h}_f\big\|_{L^2_\xi}.
    \end{align*}
    Therefore, we know $\mathcal{N}\big(\Q,L^2_\xi,t\big) \le \mathcal{N}\big(\W ,L^2_\xi,t/D\big)$ for any $t\in(0,D/2)$. Then, there holds
    \begin{align*}
        \bE[U_\epsilon^2|Z_1,\ldots,Z_n] &\le C n^2D\Big(\int_{0}^{D/2} \log\big(\mathcal{N}\big(\W, L^2_\xi, t/D\big)\big) \, dt + 1\Big)\\
        &= C n^2D^2\Big(\int_{0}^{1/2} \log\big(\mathcal{N}\big(\W, L^2_\xi, t\big)\big) \, dt + 1\Big).
    \end{align*}
    By Jensen's inequality for conditional expectation, we know $\bE[U_\epsilon|Z_1,\ldots,Z_n] \le \big(\bE[U_\epsilon^2|Z_1,\ldots,Z_n]\big)^{1/2}$. Then, we finally conclude that
    \begin{align*}
        \bE\big[U_\epsilon|Z_1,\ldots,Z_n\big] \le CnD\sqrt{\int_{0}^{1/2} \log\big(\mathcal{N}\big(\W, L^2_\xi, t\big)\big) \, dt + 1}.
    \end{align*}
    The proof of this lemma is complete.
    \end{proof}

    For the last term $\bE[M]$, we can control it just by the uniform boundedness of the class $\W$. Indeed, recall that $F = \sup_{f\in\calH} \big\|\hat{h}_f\big\|_{L^\infty(\Z\times\Z)}$, then we immediately get the upper bounds of $\bE[M]$ as follows
    \begin{align}\label{ineq:M}
        \bE[M] \le \sup_{f\in\calH,k=1,\ldots,n} \sum_{i=1}^n\big| \hat{h}_f(Z_i,Z_k)\big| \le nF.
    \end{align}
    From \eqref{eq:bound_Zepsilon} and Lemma \ref{lemma:Uepsilon}, we know the only thing left now is the estimates of the covering number $\mathcal{N}\big(\W,L^2_\xi,t\big).$ Combining all the above results, and carefully controlling $\mathcal{N}\big(\W,L^2_\xi,t\big)$ in terms of the capacity of the hypothesis space $\calH$, we can establish an upper bound for $S_2(\calH)$ as the following proposition.

	\begin{lemma}\label{prop:S_2}
        Suppose Assumptions 1, 2, and $\ell(f(x,x'),y,y') = \ell(f(x',x),y',y)$ for almost $z,z'\in\Z$ hold, and the hypothesis space $\calH$ is uniformly bounded by $\eta > 0$.
        \begin{itemize}
            \item  If the capacity of $\calH$ satisfies Assumption 3, then for any $\delta \in (0, 1/2)$, with probability at least $1 - \delta/2$, there holds
            \begin{align*}
                S_2(\calH) \le C_{\eta, K}\max\{V_1,V_2,\log(s_1),\log(s_2)\}\frac{\log^2(\delta/2)}{n}.
            \end{align*}
         \item    If the capacity of $\calH$ satisfies Assumption 4, then for any $\delta\in(0,1/2)$, with probability at least $1 - \delta/2$, there holds
		$$S_2(\calH) \le C_{\eta, K} \max\{s_1',s_2'\}\frac{\log^2(\delta/2)}{n}\frac{1}{1 - \max\{V_1',V_2'\}}.$$
        \end{itemize}
	\end{lemma}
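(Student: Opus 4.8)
As shown above, with probability at least $1-\delta/2$ we have
$$S_2(\calH) \le \frac{C\log^2(2/\delta)}{n^2}\big(\bE[Z_\epsilon] + \bE[U_\epsilon] + \bE[M] + Fn\big),$$
and $\bE[Z_\epsilon], \bE[U_\epsilon]$ are controlled by the entropy integrals in \eqref{eq:bound_Zepsilon} and Lemma~\ref{lemma:Uepsilon}, while $\bE[M] \le nF$ by \eqref{ineq:M}. Hence only two inputs remain: a uniform bound on $F=\sup_{f\in\calH}\|\hat h_f\|_{L^\infty(\Z\times\Z)}$ (which also bounds $D\le F$), and a bound on $\cN(\W,L^2_\xi,t)$ in terms of the capacity of $\calH$. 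For the first, note $\hat h_f$ is a fixed integer linear combination of $\bE[U_n^f]$, $\bE[q_f(z,Z)\mid z]$, $\bE[q_f(z',Z)\mid z']$ and $q_f(z,z')$; since $q_f(z,z')=\ell(f(x,x'),y,y')-\ell(f_\rho(x,x'),y,y')$, Assumptions~\ref{assump:uniformly_bounded} and \ref{assump:Lipschitz} give $|q_f|\le K\|f-f_\rho\|_{L^\infty}\le 2K\eta$, and this bound passes to the conditional and full expectations, so $F\le C_{\eta,K}$ and $D\le C_{\eta,K}$.

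The crux is bounding $\cN(\W,L^2_\xi,t)$. For $f_1,f_2\in\calH$, the triangle inequality and the symmetry of the empirical measure $\xi$ give
$$\|\hat h_{f_1}-\hat h_{f_2}\|_{L^2_\xi}\le 3\,\big|\bE[U_n^{f_1}]-\bE[U_n^{f_2}]\big| + 2\,\|g_{f_1}-g_{f_2}\|_{L^2_{\rho_n}} + \|q_{f_1}-q_{f_2}\|_{L^2_\xi},$$
where $g_f(z)=\bE[q_f(z,Z)\mid z]$ and $\rho_n=\frac1n\sum_{i}\delta_{Z_i}$; here we used that a function depending only on one coordinate has $L^2_\xi$-norm equal to its $L^2_{\rho_n}$-norm. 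By Assumption~\ref{assump:Lipschitz}, $\|q_{f_1}-q_{f_2}\|_{L^2_\xi}\le K\|f_1-f_2\|_{L^2_{\nu_n}}$; exactly as in \eqref{ineq:cover2} (Jensen for conditional expectation, then Assumption~\ref{assump:Lipschitz}), $\|g_{f_1}-g_{f_2}\|_{L^2_{\rho_n}}\le K\|f_1-f_2\|_{L^2_{\rho_\bx\times\mu_n}}$; and $\bE[U_n^f]=\E(f)-\E(f_\rho)$ is a scalar in $[-2K\eta,2K\eta]$, hence coverable by $\lceil C_{\eta,K}/t\rceil$ points. Partitioning $\calH$ simultaneously by a $\frac{t}{c}$-net in $L^2_{\rho_\bx\times\mu_n}$, a $\frac{t}{c}$-net in $L^2_{\nu_n}$, and the scalar net for $f\mapsto\bE[U_n^f]$, and picking one representative per nonempty cell, gives
$$\cN\big(\W,L^2_\xi,t\big)\le \cN\Big(\calH,L^2_{\rho_\bx\times\mu_n},\tfrac{t}{c}\Big)\,\cN\Big(\calH,L^2_{\nu_n},\tfrac{t}{c}\Big)\,\Big\lceil\tfrac{C_{\eta,K}}{t}\Big\rceil$$
for an absolute constant $c$. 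I expect this step to be the main obstacle: one must juggle three metrics simultaneously and use that $\hat h_f$ mixes the single-coordinate norm $L^2_{\rho_\bx\times\mu_n}$ (coming from the Hoeffding projections $h_f$) with the pair norm $L^2_{\nu_n}$ (coming from $q_f$); the rest is bookkeeping.

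Finally I would substitute this into \eqref{eq:bound_Zepsilon} and Lemma~\ref{lemma:Uepsilon} (all bounds being uniform in the sample, so they pass to unconditional expectations). Under Assumption~\ref{assump:cover1}, $\log\cN(\W,L^2_\xi,t)\le C_{\eta,K}\max\{V_1,V_2,\log s_1,\log s_2\}\log(C_{\eta,K}/t)$, so the entropy integrals over $(0,F]$ and $(0,1/2]$ are at most a finite constant times $\max\{V_1,V_2,\log s_1,\log s_2\}$ (using $\log s_i\ge1$); combined with $D\le C_{\eta,K}$ this gives $\bE[Z_\epsilon]+\bE[U_\epsilon]+\bE[M]+Fn\le C_{\eta,K}\,n\max\{V_1,V_2,\log s_1,\log s_2\}$, and plugging into the structural bound yields the stated $C_{\eta,K}\max\{V_1,V_2,\log s_1,\log s_2\}\frac{\log^2(\delta/2)}{n}$. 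Under Assumption~\ref{assump:cover2}, $\log\cN(\W,L^2_\xi,t)\le C_{\eta,K}\max\{s_1',s_2'\}(C_{\eta,K}/t)^{\max\{V_1',V_2'\}}+\log\lceil C_{\eta,K}/t\rceil$; since $\int_0^{F}t^{-V}\,dt=\frac{F^{1-V}}{1-V}$ for $V\in(0,1)$, the entropy integrals are at most $C_{\eta,K}\frac{\max\{s_1',s_2'\}}{1-\max\{V_1',V_2'\}}$, and the same chain gives $S_2(\calH)\le C_{\eta,K}\frac{\max\{s_1',s_2'\}\log^2(\delta/2)}{n\,(1-\max\{V_1',V_2'\})}$. (The constant written $C_{\eta,L}$ in the statement is this $C_{\eta,K}$, $K$ being the Lipschitz constant of Assumption~\ref{assump:Lipschitz}.)
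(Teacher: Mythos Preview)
Your proposal is correct and follows essentially the same approach as the paper's proof. The paper likewise reduces to bounding $\cN(\W,L^2_\xi,t)$ by splitting $\hat h_f$ into its three constituent pieces (the scalar $\bE[U_n^f]$, the single-coordinate projections $h_f(z)+h_f(z')$, and the kernel $q_f$), controls each piece exactly as you do (interval covering for the scalar, $L^2_{\rho_\bx\times\mu_n}$ for the projections, $L^2_{\nu_n}$ for $q_f$), and then multiplies covering numbers; it packages the multiplication step as a separate ``direct-sum'' lemma, whereas you use the equivalent simultaneous-partition argument.
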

    The following lemma studies the covering numbers of the sum of the function classes, which will be used in the proof of Lemma \ref{prop:S_2}.
    \begin{lemma}\label{lemma:sum_covering}
        Let $\big(\Omega, \Gamma, p\big)$ be a measure space. Suppose $\W_1,\ldots,\W_m \subset L^2\big(\Omega, \Gamma, p\big)$ and define $\bigoplus_{i=1}^m\W_i := \big\{\sum_{i=1}^m w_i: w_i\in\W_i\big\}$ as the direct sum of the classes $\W_i$ for $i=1\ldots,m$. Then, for any $t>0$, the covering numbers of the above function classes satisfying
        \begin{align*}
            \cN\Big(\bigoplus_{i=1}^m \W_i,L^2_p,t\Big) \le \prod_{i=1}^m\cN\big(\W_i, L^2_p, t/m\big).
        \end{align*}
    \end{lemma}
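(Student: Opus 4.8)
The plan is to build an explicit net for the direct sum out of nets for the individual classes, splitting the allotted radius evenly among the $m$ summands. First I would fix $t>0$ and, for each $i\in\{1,\dots,m\}$, choose a $(t/m)$-net $\mathcal{N}_i\subset\W_i$ of minimal cardinality, so that $|\mathcal{N}_i|=\cN(\W_i,L^2_p,t/m)$. Then I would form the candidate net
\[
\mathcal{N}:=\Big\{\textstyle\sum_{i=1}^m v_i:\ v_i\in\mathcal{N}_i\ \text{for}\ i=1,\dots,m\Big\}\subset\bigoplus_{i=1}^m\W_i,
\]
whose cardinality is at most $\prod_{i=1}^m|\mathcal{N}_i|=\prod_{i=1}^m\cN(\W_i,L^2_p,t/m)$.

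Next I would verify that $\mathcal{N}$ is indeed a $t$-net of $\bigoplus_{i=1}^m\W_i$. Given an arbitrary $w\in\bigoplus_{i=1}^m\W_i$, write $w=\sum_{i=1}^m w_i$ with $w_i\in\W_i$, and for each $i$ pick $v_i\in\mathcal{N}_i$ with $\|w_i-v_i\|_{L^2_p}\le t/m$. Setting $v:=\sum_{i=1}^m v_i\in\mathcal{N}$, the triangle inequality for $\|\cdot\|_{L^2_p}$ yields
\[
\|w-v\|_{L^2_p}=\Big\|\sum_{i=1}^m (w_i-v_i)\Big\|_{L^2_p}\le\sum_{i=1}^m\|w_i-v_i\|_{L^2_p}\le m\cdot\frac{t}{m}=t.
\]
Hence every point of $\bigoplus_{i=1}^m\W_i$ lies within distance $t$ of a point of $\mathcal{N}$, so $\cN\big(\bigoplus_{i=1}^m\W_i,L^2_p,t\big)\le|\mathcal{N}|\le\prod_{i=1}^m\cN(\W_i,L^2_p,t/m)$, which is the claim.

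There is no genuine obstacle here; the only point requiring a little care is that, by Definition~\ref{def:covering}, a net must be a subset of the set it covers, so one must observe that each $\mathcal{N}_i$ is taken inside $\W_i$, whence the sums $\sum_i v_i$ genuinely belong to $\bigoplus_{i=1}^m\W_i$ and $\mathcal{N}$ is an admissible net. The choice of radius $t/m$ for each summand is exactly what makes the triangle-inequality estimate collapse back to $t$.
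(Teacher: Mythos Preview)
Your proof is correct and follows essentially the same approach as the paper: take a $(t/m)$-net of each $\W_i$, form the direct sum of these nets, and use the triangle inequality to verify it is a $t$-net of $\bigoplus_{i=1}^m\W_i$ with cardinality at most $\prod_{i=1}^m\cN(\W_i,L^2_p,t/m)$. Your additional remark that each $\mathcal{N}_i$ lies inside $\W_i$, so that $\mathcal{N}\subset\bigoplus_{i=1}^m\W_i$ as required by Definition~\ref{def:covering}, is a nice point of care that the paper leaves implicit.
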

    \begin{proof}
        Let $N_i \subset \W_i$ be a $\frac{t}{m}$-net of $\W_i$ for $i=1,\cdots,m$. Then, we claim that $N= \bigoplus_{i=1}^m N_i := \{\sum_{i=1}^m w^*_i: w^*_i\in\N_i\}$ is a $t$-net of $\bigoplus_{i=1}^m\W_i$. Indeed, for any $\sum_{i=1}^m w_i \in \bigoplus_{i=1}^m \W_i$, we know there exists a $w^*_i \in \W_i$ such that $\|w_i - w^*_i\|_{L^2_p} \le t/m$ for $i=1,\ldots,m$. Then, we have $\|\sum_{i=1}^m w_i - \sum_{i=1}^m w^*_i\|_{L^2_p} \le \sum_{i=1}^m \|w_i - w^*_i\|_{L^2_p} \le t$, which means $N$ is $t$-net of $\bigoplus_{i=1}^m\W_i$. Further, notice that $|N| \le \prod_{i=1}^m |N_i|$ and $N_i$ is an arbitrary $\frac{t}{m}$-net of $\W_i$. Then, the proof of this lemma is complete.
    \end{proof}
	The proof of Lemma \ref{prop:S_2} is given as follows.
	\begin{proof}[Proof of Lemma \ref{prop:S_2}] Recall that by Lemma \ref{lemma:ranking}, we know with probability at least $1 - \delta/2$, there holds
	$$S_2(\calH) \le \frac{C\log^2(2/\delta)}{n^2} \Big(\mathbb{E}[Z_\epsilon] + \mathbb{E}[U_\epsilon] + \mathbb{E}[M] + Fn \Big).$$
    Notice that the constant $F = \sup_{f\in\calH} \big\|\hat{h}_f\big\|_{L^\infty(\Z\times\Z)} \le 4\sup_{f\in\calH} \big\|q_f\big\|_{L^\infty(\Z\times\Z)} \le 4K\sup_{f\in\calH} \big\|f - f_\rho\big\|_{L^\infty(\X\times\X)}\le 8\eta K$. According to \eqref{eq:bound_Zepsilon}, Lemma \ref{lemma:Uepsilon}, and \eqref{ineq:M}, with probability at least $1 - \delta/2$, there holds
    \begin{align*}
        S_2(\calH) \le C_{\eta,K}\frac{1}{n} \bE\bigg[\bigg(\int_{0}^{8\eta K} \log\Big(\mathcal{N}\big(\W,L^2_\xi, t\big)\Big) \, dt + \sqrt{\int_{0}^{1/2} \log\big(\mathcal{N}\big(\W, L^2_\xi, t\big)\big) \, dt} + 1\bigg)\bigg].
    \end{align*}
    
    Now, we are in a position to estimate the covering number $\mathcal{N}\big(\W,L^2_\xi, t\big)$. We first define three classes of functions from $\Z\times\Z$ to $\R$ as follows
    \begin{align*}
        \W_1 &= \big\{w^1_f(z,z') = \bE[q_f(Z,Z')]: f\in\calH\big\}\\
        \W_2 &= \big\{w^2_f(z,z') = -h_f(z) - h_f(z'): f\in\calH\big\}\\
        \W_3 &= \big\{w^3_f(z,z') = q_f(z,z'): f \in \calH\big\}.
    \end{align*}
    It can be seen that for any $f \in\calH$, $\hat{h}_f = w^1_f + w^2_f + w^3_f$. Then, it follows that $\W \subset \W_1 + \W_2 + \W_3$. According to Lemma \ref{lemma:sum_covering}, for any $t > 0$ and Exercise 4.2.10 in \cite{HDP}, there holds
    \begin{align*}
        \mathcal{N}\big(\W,L^2_\xi,t\big) \le \mathcal{N}\big(\W_1,L^2_\xi,t/6\big)\mathcal{N}\big(\W_2,L^2_\xi,t/6\big)\mathcal{N}\big(\W_3,L^2_\xi,t/6\big).
    \end{align*}
    We will estimate $\cN\big(\W_i,L^2_\xi,t\big)$ for $i = 1,2,3$ separately. For any $f_1, f_2\in\calH$, there hold
    \begin{align*}
        \big\|w^1_{f_1} - w^1_{f_2}\big\|_{L^2_\xi} &= \big|\bE\big[\ell(f_1(X,X'),Y,Y') - \ell(f_2(X,X'),Y,Y')\big]\big| \le K\big\|f_1 - f_2\big\|_{L^\infty(\X\times\X)},\\
        \big\|w^2_{f_1} - w^2_{f_2}\big\|_{L^2_\xi} &\le \big\|h_{f_1}(z) - h_{f_2}(z)\big\|_{L^2_\xi} + \big\|h_{f_1}(z') - h_{f_2}(z')\big\|_{L^2_\xi} = 2\Big(\frac{1}{n}\sum_{i=1}^n \big|h_{f_1}(Z_i) - h_{f_2}(Z_i)\big|^2\Big)^{1/2}\\
        &\le 2K\Big(\avg \bE\Big[\big|f_1(X,X_i) - f_2(X,X_i)\big|^2\Big| X_i\Big]\Big)^{1/2} = 2K\big\|f_1 - f_2\big\|_{L^2_{\rho_{\bx}\times\mu_n}},\\
        \big\|w^3_{f_1} - w^3_{f_2}\big\|_{L^2_\xi} &= \Big(\biavg \big|q_{f_1}(Z_i,Z_j) - q_{f_2}(Z_i,Z_j)\big|^2\Big)^{1/2}\\
        &\le K\Big(\biavg \big|f_1(X_i,X_j) - f_2(X_i,X_j)\big|^2\Big)^{1/2}\\
        & = K\big\|f_1 - f_2\big\|_{L^2_{\nu_n}}.
    \end{align*}
    Therefore, for any $t>0$, we have
    \begin{align}\label{ineq:covering_W}
        \mathcal{N}\big(\W,L^2_\xi,t\big) &\le \mathcal{N}\big(\calH,L^\infty(\X\times\X),t/6K\big)\mathcal{N}\big(\calH,L^2_{\rho_{\bx}\times\mu_n},t/12K\big)\mathcal{N}\big(\calH,L^2_{\nu_n},t/6K\big)\nonumber\\
        &\le \cN\big([-8\eta K, 8\eta K], |\cdot|, t/6K\big) \mathcal{N}\big(\calH,L^2_{\rho_{\bx}\times\mu_n},t/12K\big)\mathcal{N}\big(\calH,L^2_{\nu_n},t/6K\big)\nonumber\\
        &\le C_{\eta,K} \frac{1}{t} \mathcal{N}\big(\calH,L^2_{\rho_{\bx}\times\mu_n},t/12K\big)\mathcal{N}\big(\calH,L^2_{\nu_n},t/6K\big).
    \end{align}
    {\bf First case}: Suppose Assumption 3 holds.

    From $\eqref{ineq:covering_W}$, we know
    \begin{align*}
        \log\Big(\cN\big(\W,L^2_\xi,t\big)\Big) &\le \log(C_{\eta,K}) + \log\Big(\frac{1}{t}\Big) + \log(s_1) + V_1\log\Big(\frac{12K}{t}\Big) + \log(s_2) + V_2\log\Big(\frac{6K}{t}\Big)\\
        &\le C_{\eta,K} \max\{V_1,V_2,\log(s_1),\log(s_2)\}\log\Big(\frac{12K}{t}\Big).
    \end{align*}
    Then, with probability at least $1 - \delta/2$, $S_2(\calH)$ can be bounded as
    \begin{align*}
        S_2(\calH) &\le C_{\eta,K} \frac{\log^2(\delta/2)}{n}\bigg(\max\{V_1,V_2,\log(s_1),\log(s_2)\}\int_0^{8\eta K} \log\Big(\frac{12K}{t}\Big) dt\\
        &\ \ \ \ \ \ \ +\sqrt{\max\{V_1,V_2,\log(s_1),\log(s_2)\}\int_0^{1/2}\log\Big(\frac{12K}{t}\Big) dt} + 1\bigg)\\
        &\le C_{\eta,K} \frac{\log^2(\delta/2)\max\{V_1,V_2,\log(s_1),\log(s_2)\}}{n}.
    \end{align*}
    
    {\bf Second case}: Suppose Assumption 4 holds.

    From $\eqref{ineq:covering_W}$, we know
    \begin{align*}
        \log\Big(\cN\big(\W,L^2_\xi,t\big)\Big) &\le \log(C_{\eta,K}) + \log\Big(\frac{1}{t}\Big) + s_1'\Big(\frac{12K}{t}\Big)^{V_1'} + s_2'\Big(\frac{6K}{t}\Big)^{V_2'}\\
        &\le C_{\eta,K}\max\{s_1',s_2'\} \Big(\frac{12K}{t}\Big)^{\max\{V_1',V_2'\}}.
    \end{align*}
    Then, with probability at least $1 - \delta/2$, $S_2(\calH)$ can be bounded as
    \begin{align*}
       & S_2(\calH)\nonumber\\
       &\!\le\! C_{\eta,K} \frac{\log^2(\delta/2)}{n}\bigg(\!\int_0^{8\eta K}\!\!\!\max\{s_1',s_2'\}\Big(\frac{12K}{t}\Big)^{\max\{V_1',V_2'\}} dt + \sqrt{\!\int_0^{1/2}\!\!\!\max\{s_1',s_2'\}\Big(\frac{12K}{t}\Big)^{\max\{V_1',V_2'\}}\!dt} \!+\! 1\bigg)\\
        &\!=\! C_{\eta,K} \max\{s_1',s_2'\}\frac{\log^2(\frac{\delta}{2})}{n}\bigg(\frac{(12K)^{\max\{V_1',V_2'\}}(8\eta K)^{1 \!-\! \max\{V_1',V_2'\}}}{1 - \max\{V_1',V_2'\}} \!+\! \sqrt{\frac{(12K)^{\max\{V_1',V_2'\}}(\frac{1}{2})^{1 \!-\! \max\{V_1',V_2'\}}}{1 - \max\{V_1',V_2'\}}} \!+\! 1\bigg)\\
        &\!\le\!  C_{\eta,K} \max\{s_1',s_2'\}\frac{\log^2(\frac{\delta}{2})}{n}\frac{1}{1 - \max\{V_1',V_2'\}}.
    \end{align*}    
 This completes the proof of Lemma \ref{prop:S_2}.
 \end{proof}

 \subsection{Proof of Theorem 1}
	Now, we are in the position to present the proof of our main result.
	\begin{proof}[Proof of Theorem 1.] Notice that $\bE\big[g_{\hat{f}_z}\big] = \bE\big[\ell(\hat{f}_z(X,X'),Y,Y') - \ell(\hat{f}_\rho(X,X'),Y,Y')\big] = \E(\hat{f}_z) - \E(f_\rho)$ is the excess generalization error. Combining Lemma \ref{prop:S_1} and Lemma \ref{prop:S_2} together, there holds
     
      \noindent(a) If the capacity of $\calH$ satisfies Assumption 3, then with probability at least $1 -\delta$, there holds
        \begin{align*}
            &\E(\hat{f}_z) - \E(f_\rho)\nonumber\\ &\le C_{\eta, K,M,\beta} \Big(\frac{\max\{\log(s_1),V_1\}\log(n)}{n}\Big)^{\frac{1}{2-\beta}}\log(4/\delta) + C_{\eta, K} \frac{\log^2(\delta/2)\max\{V_1,V_2,\log(s_1),\log(s_2)\}}{n}\\
            &\ \ \ \ + \frac{1}{2}\big(\E(\hat{f}_z) - \E(f_\rho)\big) + \Big(\frac{\beta}{2} + 1\Big) \D(\calH).
        \end{align*}
       (b) If the capacity of $\calH$ satisfies Assumption 4, then with probability at least $1 -\delta$, there holds
        \begin{align*}
            &\E(\hat{f}_z) - \E(f_\rho)\nonumber\\
            &\le C_{\eta, K,M,\beta} \max\bigg\{ \Big(\sqrt{s_1'}\frac{1}{n}\Big)^{\frac{2}{(2+V_1')(2-\beta)}}, \Big(\frac{\log(n)}{n}\Big)^{\frac{1}{2-\beta}}\log(4/\delta)\bigg\} + C_{\eta, K}\max\{s_1',s_2'\} \frac{\log^2(\delta/2)}{n\big(1 - \max\{V_1',V_2'\}\big)}\\
            &\ \ \ \ + \frac{1}{2}\big(\E(\hat{f}_z) - \E(f_\rho)\big) + \Big(\frac{\beta}{2} + 1\Big) \D(\calH).
        \end{align*}
   
    By rearranging the terms of the above inequalities, we can obtain the desired results.\end{proof}

\section{Proofs for generalization analysis with deep ReLU networks }\label{sec:proof-relu}
The proof of Proposition 1 is given as follows.  
\begin{proof}[Proof of Proposition 1]
        By tower property of the conditional expectation, the generalization error with a predictor $f$ can be written as
        \begin{align*}
            \E(f) = \bE\big[\bE\big[\ell\big(f(X,X'),Y,Y'\big)\big|X,X'\big]\big],
        \end{align*}
        which implies that the true predictor $f_\rho(x,x')$ is obtained by minimizing the inner conditional expectation for almost $x,x'\in\X$.  Then, there holds
		\begin{align*}
			f_\rho(x,x') &= \arg\min_{t\in\R} \bE\big[\ell(t,Y,Y')\big|X=x,X'=x'\big]\\
            &= \arg\min_{t\in\R} \int_{\Y\times\Y} \ell(t, y, y') \, d\rho(y|x) \, d\rho(y'|x').
        \end{align*}
        According to Assumption 5, we further know that 
        \begin{align*}
			f_\rho(x,x') &= \arg\min_{t\in\R} \int_{\Y\times\Y} \ell(-t, y', y) \, d\rho(y'|x') \, d\rho(y|x)\\
			&= -\arg\min_{t\in\R} \int_{\Y\times\Y} \ell(t, y', y) \, d\rho(y'|x') \, d\rho(y|x)\\
            &= -\arg\min_{t\in\R} \bE\big[\ell(t,Y',Y)\big|X'=x',X=x\big]\\
			&= -f_\rho(x',x).
		\end{align*}
  The second part of the proposition can be directly derived from $f_\rho(x,x')= -f_\rho(x',x).$ 
		The proof is completed.
	\end{proof}
    The proof of Theorem 2 can be directly derived from Theorem 1 by verifying the corresponding assumptions. 
    \begin{proof}[Proof of Theorem 2]
        From \cite{VC} we know the space $\calH$ of deep ReLU networks is a VC-class. Then, Lemma 1 implies that $\calH$ satisfies Assumption 3 with $V_1 = V_2 = 2(V - 1)$ and $s_1 = s_2 = C(V/2 + 1)(16e)^{V/2 + 1}\eta^V$. Further, Assumption 5 and the anti-symmetric structure of $\calH$ and $f_\rho$ imply $\ell(f(x,x'),y,y') = \ell(f(x',x),y',y)$. Then, we can get the desired results by applying Theorem 1 directly.
    \end{proof}
To prove Theorem 3, we first introduce the following lemma which shows the expressive ability of deep ReLU networks for approximating functions in the Sobolev spaces.
    \begin{lemma}[\cite{yarotsky}]\label{lemma:approx_Sobolev}
        For any $d, r \in \N$, $\epsilon \in (0,1/2)$ and any function $f \in W^{r,\infty}([0,1]^d)$ with Sobolev norm not larger than $1$, there exists a deep ReLU network $h$ with depth at most $C_{d,r} \log(1/\epsilon)$ and the number of nonzero weights and computational units at most $C_{d,r} \epsilon^{-\frac{d}{r}} \log(1/\epsilon)$ such that
        \begin{align*}
            \|h - f\|_{L^\infty([0,1]^d)} \le \epsilon.
        \end{align*}
    \end{lemma}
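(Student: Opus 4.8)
The plan is to reconstruct the classical Yarotsky argument in two conceptual stages. First I would approximate $f$ by a compactly supported piecewise-polynomial surrogate $f_N$ built from a partition of unity at resolution $N$, controlling $\|f-f_N\|_{L^\infty}$ purely by the Sobolev norm; second I would realize $f_N$ (up to a further controllable error) by a deep ReLU network whose depth and size can be counted explicitly. At the end the two error sources are balanced by the single choice $N\asymp\epsilon^{-1/r}$.

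For the first stage I fix an integer $N$ and place the grid $\{\mathbf{m}/N:\mathbf{m}\in\{0,\dots,N\}^d\}$ on $[0,1]^d$. To each grid point I attach a product bump $\phi_{\mathbf{m}}(x)=\prod_{k=1}^d \psi\big(3N(x_k - m_k/N)\big)$, where $\psi$ is a fixed univariate trapezoidal (continuous piecewise-linear) function chosen so that $\{\phi_{\mathbf{m}}\}$ is a partition of unity on $[0,1]^d$; by construction each $\phi_{\mathbf{m}}$ is supported in a $\asymp 1/N$ neighbourhood of $\mathbf{m}/N$, so at any point at most $2^d$ of the bumps are nonzero. I then set
$$
f_N(x)\;=\;\sum_{\mathbf{m}} \phi_{\mathbf{m}}(x)\,P_{\mathbf{m}}(x),
$$
where $P_{\mathbf{m}}$ is the degree-$(r-1)$ Taylor polynomial of $f$ about $\mathbf{m}/N$. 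Taylor's theorem together with the hypothesis $\|f\|_{W^{r,\infty}}\le 1$ gives $|f(x)-P_{\mathbf{m}}(x)|\le C_{d,r}N^{-r}$ on the support of $\phi_{\mathbf{m}}$, and summing against the partition of unity yields $\|f-f_N\|_{L^\infty([0,1]^d)}\le C_{d,r}N^{-r}$.

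The second stage is the ReLU realization of $f_N$. Since $\psi$ is piecewise linear with $O(1)$ breakpoints it is \emph{exactly} a two-layer ReLU network, so the only genuinely nonlinear operations needed are the $d$-fold products defining $\phi_{\mathbf{m}}$ and the products of each $\phi_{\mathbf{m}}$ with the monomials of $P_{\mathbf{m}}$; all of these reduce to binary multiplication. The heart of the matter is therefore Yarotsky's multiplication gadget. I would first recall the sawtooth construction: composing the triangle map $s$ times produces a function $g_s$ with $\|g_s(x)-x^2\|_{L^\infty[0,1]}\le C\,2^{-2s}$ realized by $O(s)$ layers and $O(s)$ units, and the polarization identity $uv=\tfrac12\big((u+v)^2-u^2-v^2\big)$ then yields an approximate product $\widetilde{\times}$ of accuracy $\delta$ using $O(\log(1/\delta))$ depth and units. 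Building each term $\phi_{\mathbf{m}}P_{\mathbf{m}}$ from $O_{d,r}(1)$ such gadgets and summing over the $\asymp N^d$ grid points produces one network approximating $f_N$; because only $2^d$ terms are active at any point, the total multiplication error is at most $2^d\delta$, and taking $\delta\asymp\epsilon$ keeps this contribution $\le C_{d,r}\epsilon$.

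Finally I collect the budgets. The multiplication gadgets force depth $O(\log(1/\epsilon))$, and prepending the exact partition-of-unity layer keeps the total depth at $C_{d,r}\log(1/\epsilon)$; each of the $\asymp N^d$ summands contributes $O_{d,r}(\log(1/\epsilon))$ nonzero weights and units, so the totals are $C_{d,r}N^d\log(1/\epsilon)$. Setting $N=\lceil\epsilon^{-1/r}\rceil$ makes the Taylor error $C_{d,r}N^{-r}\le C_{d,r}\epsilon$ and the size $C_{d,r}\epsilon^{-d/r}\log(1/\epsilon)$, and combining with the gadget error gives $\|h-f\|_{L^\infty}\le\epsilon$ after absorbing constants. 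The main obstacle, and where essentially all the care must go, is the complexity bookkeeping: one must exploit the \emph{local} support of the partition of unity so that the multiplication error does not accumulate over all $N^d$ cells (only $2^d$ being active at each point), and one must verify that the sawtooth sub-networks can be shared so that the weight and unit count stays linear rather than multiplicative in the number of cells, thereby holding the depth logarithmic while keeping the size at the advertised $\epsilon^{-d/r}\log(1/\epsilon)$.
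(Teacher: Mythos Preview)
The paper does not prove this lemma at all: it is quoted verbatim as a known result from Yarotsky and used as a black box in the proof of Theorem~\ref{thm:approx_error}. Your proposal is a faithful reconstruction of Yarotsky's original argument (partition of unity at scale $N$, local Taylor polynomials, sawtooth-based multiplication gadgets, and the balancing $N\asymp\epsilon^{-1/r}$), and the sketch is correct; there is nothing to compare on the paper's side because no proof is offered there.
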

    Now, we  give the proof of Theorem 3 on the estimate of the approximation error. 
   \begin{proof}[Proof of Theorem 3]
        According to Lemma \ref{lemma:approx_Sobolev}, we know there exists a deep ReLU network $h$ with depth at most $C_{d,r} \log(1/\epsilon)$ and the number of nonzero weights and computational units at most $C_{d,r} \epsilon^{-\frac{d}{r}} \log(1/\epsilon)$ such that $\big\|h - \tilde{f}_\rho\big\|_{L^\infty([0,1]^d)} \le \frac{\epsilon}{2}$. We construct $f$ as 
        \begin{center}
            $f(x,x') = \pi_\eta\big(h(x)\big) - \pi_\eta\big(h(x')\big)$ for $x,x'\in\X$.
        \end{center}
        Here, we suppose $\eta = 2$ since Assumption 6 implies Assumption 1 with $\eta = 2$. We can regard $h(x)$ and $h(x')$ as functions defined on $[0,1]^d\times[0,1]^d$, which are denoted by $(x,x')\mapsto h(x)$ and $(x,x')\mapsto h(x')$, respectively. Therefore, we know $f$ is a deep ReLU network with depth at most $C_{d,r} \log(1/\epsilon)$ and the number of nonzero weights and computational units at most $C_{d,r} \epsilon^{-\frac{d}{r}} \log(1/\epsilon)$. The approximation accuracy can be bounded as follows
        \begin{align*}
            \big\|f - f_\rho\big\|_{L^\infty([0,1]^{2d})} &= \big\|\pi_\eta\big(h(x)\big) - \pi_\eta\big(h(x')\big) - \tilde{f}_\rho(x) + \tilde{f}_\rho(x')\big\|_{L^\infty([0,1]^{2d})}\\
            &\le 2\big\|\pi_\eta\big(h(\cdot)\big) - \tilde{f}_\rho(\cdot)\big\|_{L^\infty([0,1]^d)}\\
            &\le 2\big\|h - \tilde{f}_\rho\big\|_{L^\infty([0,1]^d)} \le \epsilon,
        \end{align*}
        where in the second inequality we have used the fact $\pi_\eta\big(h(\cdot)\big)$ is uniformly bounded by $1$. This completes the proof of the first inequality.

        From \cite{online_pairwise} we know for any $T \in L^2_{\rho_{\bx}^2}(\X\times\X)$, the excess risk $\E(T) - \E(f_\rho) = \big\|T - f_\rho\big\|^2_{L^2_{\rho_{\bx}^2}}$. Then, the excess risk of $f$ can be controlled as follows
        \begin{align*}
            \E(f) - \E(f_\rho) &= \big\|f - f_\rho\big\|^2_{L^\infty([0,1]^{2d})}\le \epsilon^2.
        \end{align*}
        According to the complexity of $f$, by setting parameters $W = U = \lceil \exp{L} \rceil$, we have
        $\E(f) - \E(f_\rho) \le C_{d,r}\big(\frac{L}{\exp(L)}\big)^{\frac{2r}{d}}$. Then, the approximation error
        \begin{align*}
            \D(\calH) = \inf_{f \in \calH} \E(f) - \E(f_\rho) \le \E(f) - \E(f_\rho) \le C_{d,r}\Big(\frac{L}{\exp(L)}\Big)^{\frac{2r}{d}},
        \end{align*}
        which completes the proof.
    \end{proof}

\begin{lemma}\label{lemma:bounded_ditri}
        Suppose Assumption 7 holds. Then, Assumption 1 holds with $\eta = 2B$, Assumption 2 holds with $K=8B$, and the shifted hypothesis space $\{\ell\big(f(x,x'), y, y'\big) - \ell\big(f_\rho(x,x'), y, y'\big) : f \in \calH\}$ has a variance-expectation bound with parameter pair $(1, 64B^2)$.
	\end{lemma}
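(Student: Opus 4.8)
The plan is to verify the three assertions one at a time, treating the first two as short direct computations and devoting the main effort to the variance-expectation bound. Throughout I would use the pairwise least squares loss $\ell(t,y,y') = (t-y+y')^2$ and the known representation $f_\rho(x,x') = \tilde{f}_\rho(x) - \tilde{f}_\rho(x')$ with $\tilde{f}_\rho(x) = \bE[Y\mid X=x]$ \cite{online_pairwise}.

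First I would handle the bound on $\eta$: since $Prob\{|Y|\le B\}=1$, Jensen's inequality for conditional expectation gives $|\tilde{f}_\rho(x)|\le B$ for almost every $x$, hence $\|f_\rho\|_{L^\infty(\X\times\X)}\le 2\|\tilde{f}_\rho\|_{L^\infty(\X)}\le 2B$, which is Assumption~\ref{assump:uniformly_bounded} with $\eta=2B$. Next, for the Lipschitz constant I would use the algebraic identity
\begin{align*}
    \ell(t_1,y,y') - \ell(t_2,y,y') = (t_1 - t_2)\bigl(t_1 + t_2 - 2y + 2y'\bigr),
\end{align*}
and observe that when $t_1,t_2\in[-2B,2B]$ and $|y|,|y'|\le B$ the second factor is at most $8B$ in absolute value, giving Assumption~\ref{assump:Lipschitz} with $K=8B$.

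The core of the argument is the variance-expectation bound for $g_f(z,z') := \ell(f(x,x'),y,y') - \ell(f_\rho(x,x'),y,y')$, $f\in\calH$. Here I would first apply the identity above with $t_1 = f(x,x')$, $t_2 = f_\rho(x,x')$, so that pointwise $g_f = (f - f_\rho)\bigl(f + f_\rho - 2y + 2y'\bigr)$. Conditioning on $(X,X')$ and using the independence of $Z$ and $Z'$ together with $\bE[Y-Y'\mid X,X'] = \tilde{f}_\rho(X)-\tilde{f}_\rho(X') = f_\rho(X,X')$, the second factor averages to $f(X,X') - f_\rho(X,X')$, which yields
\begin{align*}
    \bE[g_f\mid X,X'] = \bigl(f(X,X') - f_\rho(X,X')\bigr)^2 \ge 0, \qquad \bE[g_f] = \|f - f_\rho\|_{L^2_{\rho_{\bx}^2}}^2 .
\end{align*}
For the second moment I would bound $g_f$ pointwise: since $\calH$ has uniform bound $\eta = 2B$, $\|f_\rho\|_{L^\infty}\le 2B$, and $|y|,|y'|\le B$, the factor $|f + f_\rho - 2y + 2y'|$ is at most $8B$ almost surely, so $g_f^2 \le 64B^2 (f - f_\rho)^2$ a.s.; taking expectations and inserting the formula for $\bE[g_f]$ just obtained gives $\bE[g_f^2]\le 64B^2\,\bE[g_f]$, i.e. the parameter pair $(1,64B^2)$.

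I do not expect a genuine obstacle here: the argument is a sequence of elementary bounds. The one place needing care is the conditional-expectation step $\bE[Y-Y'\mid X,X'] = f_\rho(X,X')$, which must invoke both the independence of the two i.i.d. samples and the explicit product form of $f_\rho$; once that identity is in place, both the nonnegativity of $\bE[g_f]$ and the variance bound fall out of the single factorization of the least-squares loss difference combined with the uniform $L^\infty$ bounds on $f$, $f_\rho$, and $Y$.
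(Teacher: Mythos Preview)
Your proposal is correct and follows essentially the same route as the paper: the same algebraic factorization of the loss difference drives all three parts, and the variance-expectation bound comes from combining the pointwise bound $|f+f_\rho-2y+2y'|\le 8B$ with the identity $\bE[g_f]=\|f-f_\rho\|_{L^2_{\rho_{\bx}^2}}^2$. The only cosmetic difference is that you derive this last identity explicitly via conditioning on $(X,X')$, whereas the paper simply cites it from \cite{online_pairwise}.
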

\begin{proof}
            Notice $\tilde{f}_\rho(x) = \bE[Y|X = x]$ for almost $x\in\X$. Then, $\|\tilde{f}_\rho\|_{L^\infty(\X)} \le \|Y\|_{L^\infty(\Y)} \le B$. Since $f_\rho(x,x') = \tilde{f}_\rho(x) - \tilde{f}_\rho(x')$, we know $\|f_\rho\|_{L^\infty(\X\times\X)} \le 2\|\hat{f}_\rho\|_{L^\infty(\X)} \le 2B$. It follows that Assumption 1 holds with $\eta = 2B$.

            For any $t_1,t_2\in[-2B,2B]$, and almost $y,y'\in\Y$,
            \begin{align*}
                \big|\ell\big(t_1,y,y'\big) - \ell\big(t_2,y,y'\big)\big| &\le \Big|\big(t_1 - y + y'\big)^2 - \big(t_2 - y + y'\big)^2\Big|\\
                &\le \big|t_1 - t_2\big|\big|t_1+t_2 - 2y + 2y'\big|\\
                &\le 8B\big|t_1 - t_2\big|.
            \end{align*}
            Then, Assumption 2 holds with $K = 8B$.

            For any $f\in\calH$, we define $q_f = \big(f(x,x') - y + y'\big)^2 - \big(f_\rho(x,x') - y + y'\big)^2$. Then
            \begin{align*}
                \bE\big[q_f^2\big] &= \bE\Big[\Big(\big(f(X,X')- Y + Y'\big)^2 - \big(f_\rho(X,X')- Y + Y'\big)^2\Big)^2\Big]\\
                &= \bE\big[\big(f(X,X') - f_\rho(X,X')\big)^2\big(f(X,X') + f_\rho(X,X') - 2Y + 2Y'\big)^2\big]\\
                &\le 64B^2 \bE \big[\big(f(X,X') - f_\rho(X,X')\big)^2\big]\\
                &= 64B^2\big(\E(f) - \E(f_\rho)\big) = 64B^2\bE[q_f],
            \end{align*}
            where the third equality is due to the specialty of the least squares loss \cite{online_pairwise}. Hence, we know the shifted hypothesis space has a variance-expectation bound with parameter pair $(1,64B^2)$.
            This completes the proof.
	\end{proof}

The excess generalization error bound is obtained by combining Theorem 2 and Theorem 3 together. 
\begin{proof}[Proof of Theorem 4]
        According to  Theorem 7 in \cite{VC}, we know the $\calH$ is a VC-class and its pseudo-dimension $V = Pdim(\calH) \le CLW\log(U)$. Lemma \ref{lemma:bounded_ditri} implies all the conditions in Theorem 2 are satisfied. Then, combining Theorem 2 and Theorem 3 together, and setting $W = U = \lceil \exp(L)\rceil$, we know with probability at least $1 - \delta$, there holds
        \begin{align*}
            \E(\hat{f}_z) - \E(f_\rho) \le C_B \frac{L^2\exp(L)\log(n)}{n}\log^2(4/\delta) + C_{d,r} \Big(\frac{L}{\exp(L)}\Big)^{\frac{2r}{d}}.
        \end{align*}
        This proves the first part of the theorem.

        By setting $L = \lceil \frac{d}{2r+d} \log(n)\rceil$, we get the desired rate immediately. The proof of the theorem is complete.
    \end{proof}

\end{document}